\newtheorem{theorem}{Theorem}
\DeclareMathOperator{\vect}{vec}
\DeclareMathOperator{\tr}{tr}
\DeclareMathOperator{\kron}{\otimes}
\DeclareMathOperator{\Cat}{Cat}
\DeclareMathOperator{\Diag}{diag}
\DeclareMathOperator{\DiagPart}{diag\_part}
\DeclareMathOperator{\Softmax}{softmax}
\DeclareMathOperator{\Cov}{Cov}
\DeclareMathOperator{\chol}{chol}
\title{Meta-Learning with Variational Bayes}
\author{Lucas D. Lingle \\
\texttt{lucasdaxlingle@gmail.com}
}
\begin{document}
\maketitle
\begin{abstract}
%
The field of meta-learning seeks to improve the ability of today's machine learning systems to adapt efficiently to small amounts of data. Typically this is accomplished by training a system with a parametrized update rule to improve a task-relevant objective based on supervision or a reward function. However, in many domains of practical interest, task data is unlabeled, or reward functions are unavailable. In this paper we introduce a new approach to address the more general problem of \emph{generative meta-learning}, which we argue is an important prerequisite for obtaining human-level cognitive flexibility in artificial agents, and can benefit many practical applications along the way. Our contribution leverages the AEVB framework and mean-field variational Bayes, and creates fast-adapting latent-space generative models. At the heart of our contribution is a new result, showing that for a broad class of deep generative latent variable models, the relevant VB updates do not depend on any generative neural network. The theoretical merits of our approach are reflected in empirical experiments.
%
\end{abstract}  
\section{Introduction}

The ability to adapt quickly is a key aspect of human intelligence, and in recent years the field of \emph{meta-learning}, or learning to learn efficiently by leveraging prior experience, has become an active topic of machine learning research. In particular, \emph{optimization-based} meta-learning has emerged as a strong contender for creating fast-adapting learning systems \citep{Finn2017}. While most work on optimization-based meta-learning has focused on the supervised and reinforcement learning settings, in this work we argue in favor of \emph{unsupervised meta-learning}--that is, the ability to learn from small amounts of \emph{unlabeled} data. We argue that this ability has generally been overlooked by prior art, and is a promising direction of research, due to its potential to allow reinforcement learning agents to adapt and plan in unfamiliar domains \citep{Nagabandi2019}, to address the credit assignment problem more robustly than may be possible with model-free RL \citep{Harutyunyan2019}, and to operate in meta-learning regimes where meeting the outer-loop data requirements of black-box adaptation methods is infeasible \citep{Mishra2018}.  

The specific focus of this work is on unsupervised meta-learning of a \emph{generative} variety--the goal is to create fast-adapting generative models that adapt to small amounts of data, without requiring that this data be stored in raw form. Our work succeeds in meeting this goal, and we introduce a novel approach for creating \emph{highly expressive and scalable} latent variable models with fast adaptation capabilities. Our approach is based on mean-field variational Bayes \citep{WainwrightJordan2008}, and is made possible by a new result showing that the VB updates for `non-perceptual' variables--controlling the latent space portion of the generative model--do not depend on the decoder neural network, and thus can be made tractable by a thoughtful design of the latent space distribution. Unlike MAML, which uses gradient-based update rules for inner-loop adaptivity, mean-field variational Bayes follows the natural gradient and thus can be understood as a highly efficient second order method \citep{Hoffman2013}. Moreover unlike a gradient-based update rule, mean-field variational Bayes can take very large steps while maintaining a monotonic improvement guarantee on the evidence lower bound. 

Our approach to inner-loop updates is combined with gradient-based outer-loop training, which can be based on an objective measuring generalization \citep{Finn2017, Garnelo2018}, or in a setting where only adaptivity is measured, in which case the models can be interpreted as a form of memory \citep{Edwards2017, Wu2018a, Wu2018b}. The models developed under our approach train stably on an unbiased estimator of the evidence lower bound, are computationally lightweight, are easy to benchmark, can organize information dynamically, and they maintain an expressive distribution over latent space, free of the common failure modes of amortized inference, such as posterior collapse and latent variable collapse. In the memory setting, our models are shown to generate observations that are crisp, coherent, and strikingly similar to the observations in test-set episodes. Importantly, our contribution succeeds in showing that highly scalable and expressive latent-space generative models can be created, efficiently trained, and efficiently updated, using this combination of classical and modern techniques. Also, one of our models is resizable. 

Unlike the model from \citet{Edwards2017}, our models address the feature binding problem and do not use amortized inference for a memory variable. Unlike the models by \citet{Wu2018a, Wu2018b}, our models define a flexible multimodal distribution over latent space. Unlike the model from \citet{Rao2019}, our models adapt quickly and have no parametric dependency on memory size. Unlike the model by \citet{Marblestone2020}, ours can be trivially resized at test time and has a well-defined globally normalized distribution. Unlike the models by \citet{Johnson2016} and \citet{WLin2018}, our models do not require a variational message passing interface, we optimize the evidence lower bound during inference, we demonstrate the efficacy of our approach on natural images, and we focus on the meta-learned episodic setting which is arguably the main reason to use variational Bayes to begin with. This paper is organized as follows: in Sections 2 we present our method; in Section 3 we discuss variations on the basic approach; in Section 4 we present experiments; in Section 5 we conclude.  
\section{Meta-Learning with Variational Bayes}
\label{vbm_specification}

\subsection{Generative Model}

In this work, we consider generative models for grouped visual data (`episodes') $\mathbf{X} = \{\mathbf{x}_{t}\}_{t=1}^{T}$ of the form 
\begin{align}
p(\mathbf{X}, \mathbf{Z}, \mathbf{Y}, \boldsymbol{\Omega}) & = p(\boldsymbol{\Omega})\prod_{t=1}^{T}p(\mathbf{y}_{t}|\boldsymbol{\Omega})p(\mathbf{z}_{t}|\mathbf{y}_{t},\boldsymbol{\Omega})p(\mathbf{x}_{t}|\mathbf{z}_{t})
\end{align}
where $\mathbf{Z}$, $\mathbf{Y}$ are each shorthand for the joint collections of local variables $\{\mathbf{z}_{t}\}_{t=1}^{T}$ and $\{\mathbf{y}_{t}\}_{t=1}^{T}$. Each $\mathbf{z}_{t}$ is a \emph{perceptual code} decoded by a neural network $p(\mathbf{x}_{t}|\mathbf{z}_{t})$. Here, the notation $\mathbf{y}_{t}$ denotes the collection of all other local latent variables at timestep $t$, and $\boldsymbol{\Omega}$ denotes the collection of all episode-level latent variables. 

In particular, we consider generative models defined so that the local variables $\mathbf{x}_{t}, \mathbf{z}_{t}, \mathbf{y}_{t}$ at the various timesteps are conditionally independent and identically distributed, given the top-level variables $\boldsymbol{\Omega}$. Models of this kind are known as conditionally independent hierarchical models \citep{Kass1989}. 

\begin{figure}[ht]
   \begin{center}
    \begin{tabular}{cccc}
      \begin{tikzpicture}
  \node[obs]                               (x) {$x_{t}$};
  \node[latent, above=of x] (z) {$z_{t}$};
  \node[latent, above=of x, xshift=1.5cm]  (y) {$y_{t}$};
  \node[latent, above=of z]            (Omega) {$\Omega$};

  \edge {Omega, y} {z};
  \edge {Omega} {y};
  \edge {z} {x}; 

  \plate {episode} {(x)(z)(y)} {$T$};
  \plate {} {(x)(y)(z)(Omega)(episode.north west)(episode.south east)} {$N$};

\end{tikzpicture}
    \end{tabular}
  \end{center}
\caption{The generative models considered, described with plate notation.}
\label{generative_model_plate_notation}
\end{figure}
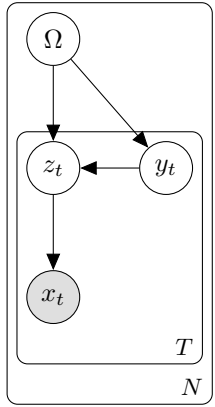

In this paper, we consider specific models under this broad structure, motivated by the analytic tractability of variational Bayesian inference in the specific models chosen. Not all models in this class possess such tractability. The generality of the presentation above serves only to provide a unified treatment of the proposed mean-field variational Bayesian approach. The detailed requirements for its applicability are given in the next section. 

\subsection{Inference Model}

As is typical of traditional mean-field variational Bayesian inference, we construct an inference model with some of the stochastic dependencies between the latent variables removed. In particular, for an episode $\mathbf{X} = \{\mathbf{x}_{t}\}_{t=1}^{T}$, we consider  inference models of the form
\begin{align}
q(\mathbf{Z}, \mathbf{Y}, \boldsymbol{\Omega}) & = q(\mathbf{Z})q(\mathbf{Y})q(\boldsymbol{\Omega}).
\end{align}
As discussed in Appendix A, 
the variational distributions $q(\mathbf{Z})$ and $q(\mathbf{Y})$ would reduce to distributions of the form $\prod_{t=1}^{T}q(\mathbf{z}_{t})$ and $\prod_{t=1}^{T}q(\mathbf{y}_{t})$, as a consequence of optimization, rather than due to additional assumptions. 

Due to the intractabilities associated with computing each optimal $q(\mathbf{z}_{t})$ in closed-form, we adopt an approach for $q(\mathbf{Z})$ based on fixed-form variational inference \citep{Salimans2012}, of the amortized variety \citep{KingmaWelling2013}. We structure each $q(\mathbf{z}_{t})$ as a multivariate Gaussian distribution with diagonal covariance, and we use a neural network to select the variational parameters of each $q(\mathbf{z}_{t})$. Motivated by the fact that the optimal form of the joint distribution $q(\mathbf{Z})$ does not possess stochastic dependencies between the $\mathbf{z}_{t}$'s, we structure the joint distribution $q(\mathbf{Z})$ of our fixed-form model according to this same specification. 


Subsequent to selecting the variational parameters for each $q(\mathbf{z}_{t})$, we use a coordinate ascent variational inference algorithm, based on mean-field variational Bayes or structured mean-field variational Bayes, to iteratively update the other variational distributions. Coordinate ascent variational inference is applicable to any latent variables with priors and complete conditionals in the exponential family, and which satisfy conditional conjugacy. The updates can be derived from first principles, by applying variational calculus to free-form optimization of the evidence lower bound. Under the aforementioned conditions, the updates are analytically tractable, closed under iterations, and guaranteed to monotonically improve the evidence lower bound \citep{Jordan1999, Beal2003}. 

A short derivation in Appendix B 
shows that the coordinate ascent variational inference updates do not depend on the generative neural network. The variational Bayesian update formulas for several specific models are provided in Appendix C. 

Note that after one iteration of this type of algorithm, the inference model has the form: 
\begin{align}
q(\mathbf{Z}, \mathbf{Y}, \boldsymbol{\Omega}) &= q(\boldsymbol{\Omega})\prod_{t=1}^{T}q(\mathbf{y}_{t})q(\mathbf{z}_{t})
\end{align} 
as a consequence of optimization; it may have additional factorization within $q(\boldsymbol{\Omega})$ and $q(\mathbf{y}_{t})$ as well. 
Note that in our setting we are not performing inference over any global latent variables (shared across episodes) but rather latent variables at two levels of a hierarchical Bayesian model \citep{Gelman2004} whose latent variables are inferred separately for each episode $\mathbf{X}$. The underlying generative model's parameters are shared across episodes. 

\subsection{Objective}

As in AEVB, the objective is to maximize the evidence lower bound \citep{KingmaWelling2013}. For the models considered here, the ELBO has the form 
\begin{align}
\mathbb{E}_{q(\mathbf{Z})q(\mathbf{Y})q(\boldsymbol{\Omega})} \ln{p(\mathbf{X},\mathbf{Z},\mathbf{Y},\boldsymbol{\Omega})} - \ln{q(\mathbf{Z})q(\mathbf{Y})q(\boldsymbol{\Omega})}. 
\end{align}
During the forward pass, we perform the hybrid inference algorithm described in the previous section. Subsequent to running the inference algorithm during the forward pass, the inference model reduces to $q(\mathbf{Z}, \mathbf{Y}, \boldsymbol{\Omega}) = q(\boldsymbol{\Omega})\prod_{t=1}^{T}q(\mathbf{y}_{t})q(\mathbf{z}_{t})$ at which point the evidence lower bound can be expressed as 

\begin{equation}
\begin{split}
\mathcal{L}_{\theta,\phi}(X) = \, &\mathbb{E}_{q(\boldsymbol{\Omega})} \sum_{t=1}^{T} \mathbb{E}_{q(\mathbf{y}_{t})q(\mathbf{z}_{t})} \big{[} \ln{p(\mathbf{x}_{t}|\mathbf{z}_{t})} - D_{\text{KL}}(q(\mathbf{z}_{t})||p(\mathbf{z}_{t}|\mathbf{y}_{t},\boldsymbol{\Omega})) - D_{\text{KL}}(q(\mathbf{y}_{t})||p(\mathbf{y}_{t}|\boldsymbol{\Omega})) \big{]} \\ 
&- D_{\text{KL}}(q(\boldsymbol{\Omega})||p(\boldsymbol{\Omega})) 
\end{split}
\end{equation}
which is more computationally expedient to work with. A detailed derivation is available in Appendix A. 
Note that for the models we study in this paper, all the KL divergences (and expected KL divergences) appearing in the ELBO have closed-form expressions. In particular, if the prior and posterior for memory are matrix-variate Gaussian distributions, their KL divergence has a closed form expression (Appendix D.1). 

For conditional KL divergence terms, it is also possible in many cases to apply variants of the reparametrization trick to the continuous conditioning variables \citep{KingmaWelling2013, Rezende2014, Figurnov2018}. In particular, there is a reparametrization trick for multivariate Gaussians with full covariance \citep{Rezende2014}. This can also be applied to latent variables with matrix-variate Gaussian distributions, since these random variables can be identified with their vectorized counterparts (Appendix D.2). 

We use a weighted sum to compute the expectation of any conditional KL divergence terms with respect to any discrete latent variables in $\mathbf{y}_{t}$, such as mixture assignments. Note that, thanks to the mean-field factorization of our inference model, $q(\mathbf{z}_{t}, \mathbf{y}_{t}) = q(\mathbf{z}_{t})q(\mathbf{y}_{t})$, we do not have to decode a separate code $\mathbf{z}_{t}$ for each possible setting of these discrete variables. 

\subsection{Training Algorithm}

We train our generative model and recognition model using the AEVB algorithm \citep{KingmaWelling2013}. 
In AEVB, training on a batch of data is a two-step process, slightly reminiscent of variational EM \citep{Jordan1999}. For our model, training proceeds as follows:
\begin{enumerate}
\item In the E-like step, data is processed via a forward pass through the computation graph. During the forward pass, the recognition model sets the variational parameters of each $q(\mathbf{z}_{t})$, and these are used to iteratively tune the variational parameters of all $q(\mathbf{y}_{t})$ and $q(\boldsymbol{\Omega})$ using mean-field variational Bayesian methods (described in Appendix B). 
\item In the M-like step, we backpropagate a gradient estimate of the evidence lower bound through the computation graph. 
Using the reparameterization trick \citep{KingmaWelling2013}, we obtain unbiased gradient estimates of the evidence lower bound with respect to the generative model parameters $\theta$ and the recognition model parameters $\phi$. These gradient estimates are used to train the generative model and the recognition model. 
\end{enumerate}

Note that the variational parameters of $q(\mathbf{Z})$, $q(\mathbf{Y})$, $q(\boldsymbol{\Omega})$ are all computationally dependent on the recognition model's outputs. We wish to train $\phi$ so as to minimize the KL divergence between the inference distribution and the true posterior: 
\begin{align}
D_{\text{KL}}(q_{\phi}(\mathbf{Z})q_{\phi}(\mathbf{Y})q_{\phi}(\boldsymbol{\Omega})||p_{\theta}(\mathbf{Z},\mathbf{Y},\boldsymbol{\Omega}|\mathbf{X}))
\end{align}
While the KL divergence itself is intractable, we can obtain an unbiased estimate of its negative gradient by using the reparametrization trick on the evidence lower bound, and backpropagating through the memory writing algorithm.
\section{Discussion}

\subsection{Theoretical Discussion}

In principle, many variations on the above idea are possible. 

Measuring generalization in the outer loop objective of meta-learning algorithms is a common training strategy \citep{Finn2017, Garnelo2018}, and can be applied here as well. For example, we could consider adapting the episode-level latent variables using only a strict subset of the observations in the episode $\mathbf{X}^{\prime} \subset \mathbf{X}$. We could then compute the evidence lower bound for the entire episode $\mathbf{X}$ by applying the recognition model to select variational parameters for $q(\mathbf{z}_{t})$ and applying mean-field variational Bayes to the distribution $q(\mathbf{y}_{t})$. This would measure a certain type of generalization of the episode-level latent variables' variational distribution $q(\boldsymbol{\Omega})$, since these would have only been tuned on a strict subset of the episode. 

It may also be possible to interpolate between training curricula measuring generalization and those that do not. For example, we could use a training curriculum based on online VB. This would have the  effect of making some observations' contribution to the episode-level latent variables slightly more `stale' than others, which could be used to encourage generalization among the encodings of each observation. A similar approach is used by \citet{Wu2018b}, though the authors use a heuristic inference algorithm, not mean-field variational Bayes. The limitations of their algorithm are demonstrated in our experiments in the meta-learned episodic setting, and we also observe an online inference algorithm to underperform an offline one, which suggests that an offline inference algorithm should be used when possible. An alternative would be to dispense with the variables $\mathbf{y}_{t}$ from the model, so that better theoretical guarantees could be obtained in the online setting (Cf. \citet{NealHinton1998}). 

Finally, it is possible in some cases to `collapse out' some of the latent variables \citep{Teh2007}. 
Unfortunately, training on the collapsed evidence lower bound can have practical problems. For instance, if we can collapse out the episode-level latent variables, an inadvertent consequence is that the inference model is never trained to produce codes which alias well in memory. Put simply, some type of inference distribution for $\boldsymbol{\Omega}$ will still be needed at test time, and the codes sampled from the generative model $\mathbf{z} \sim p(\mathbf{z}|\mathbf{y},\boldsymbol{\Omega})$ may not resemble those from the episode. 

\subsection{Practical Discussion} 

In practice, we found several techniques were necessary to achieve top results. We detail these here. 

\subsubsection{Dynamic Memory Initialization}

In our fast-adapting mixture model setting, the variational mean of each cluster location is initialized using the k-means++ initialization \citep{Arthur2007}, which is a randomized algorithm typically used to seed Lloyd's algorithm. Our memory writing algorithm then runs as normal, and obtains significantly better results. We found this initialization was beneficial to use during training, not just at test time. This initialization is not backpropagated through. 

\subsubsection{Stable Representations}

To obtain stable representations for our dynamic memory initialization, it is sensible to avoid the use of batch normalization. To speed up training in our mixture-based memory models, we instead use group normalization \citep{YxWuHe2018} in the encoder and decoder. Further, we used the Swish-1 nonlinearity \citep{Ramachandran2017}, and reduced the Adam optimizer hyperparameter $\beta_{1}$ from $0.9$ to $0.0$. 

\subsubsection{Novel Stochastic Regularizer}

We found the sample quality of the observations generated from memory could be improved by incorporating a novel stochastic regularizer. 
Our regularizer modifies the variational parameters of perceptual codes. 
The regularizer is only applied after memory writing, so that inference is unaffected by these modifications. 

Our regularizer applies to models whose conditional priors $p(\mathbf{z}_{t}|\mathbf{y}_{t},\boldsymbol{\Omega})$ for perceptual codes are spherical Gaussians with a constant variance, and whose recognition model parametrizes a fixed-form diagonal Gaussian distribution $q(\mathbf{z}_{t})$ over perceptual codes. Our regularizer replaces the mean of the variational distribution $q(\mathbf{z}_{t})$ with a random convex combination $t \boldsymbol{\mu}_{q,z_{t}} + (1- t) \boldsymbol{\mu}_{p,z_{t}}$ of the original mean $\boldsymbol{\mu}_{q,z_{t}}$ of $q(\mathbf{z}_{t})$ and the expectation $\boldsymbol{\mu}_{p,z_{t}} := \mathbf{E}_{q(\boldsymbol{\Omega})q(\mathbf{y}_{t})}[\boldsymbol{\mu}_{p,z_{t}}(\boldsymbol{\Omega},\mathbf{y}_{t})]$ of the mean parameter for the distribution $p(\mathbf{z}_{t}|\mathbf{y}_{t},\boldsymbol{\Omega})$. We sampled $t := \gamma - \epsilon + \delta * s$ for $s \sim \text{Beta}(\alpha, \beta)$ where $\gamma, \epsilon, \delta, \alpha, \beta$ are hyperparameters.

\section{Experiments}

The code for our experiments is available at \texttt{https://github.com/lucaslingle/metavb}

\subsection{Evaluation Settings and Datasets}


To evaluate the performance of fast-adapting latent variable models, we believe the proper benchmark is the test-set evidence lower bound. It is also possible to use conditional objectives, but these do not correspond to a lower bound on the log-likelihood of the model, and do not adequately reflect the nature of the free energy surface formed by the evidence lower bound. Thus, we use the evidence lower bound for the episode. 


To facilitate comparison with non-episodic models like VAEs, we will compare models' ELBO on episodes rather than individual frames. To simplify comparison for future works by allowing comparison between different episode lengths, in practice we report the ELBO divided by the episode length (`ELBO per frame'). 

For evaluation, we extensively use three types of data: synthetic data, the CIFAR-10 dataset \citep{Krizhevsky2009}, and the CelebFace Attributes dataset \citep{Liu2015}, and further information is given in Appendix E. 

\subsection{Benchmarking the Inference Algorithms}

First we investigate inferential performance with application to a type of linear Gaussian model with a matrix-variate Gaussian prior on the observation matrix. Similarly structured models can serve as useful submodules of higher-capacity models (e.g., our models in Appendix C.4-C.5), and so the comparative merit of various inference algorithms in this setting has broader implications. 

For these experiments, we generate synthetic episodes, and then we use this data to benchmark inferential performance using two algorithms. The first algorithm will use our coordinate-ascent variational inference approach, based on mean-field variational Bayes. The second algorithm will be the DKM algorithm given by \citet{Wu2018b}. 

Our coordinate-ascent variational inference approach, based on mean-field variational Bayes, consistently outperformed the DKM algorithm (Fig. 2). 

\begin{figure}[h]
\includegraphics[height=5.0cm]{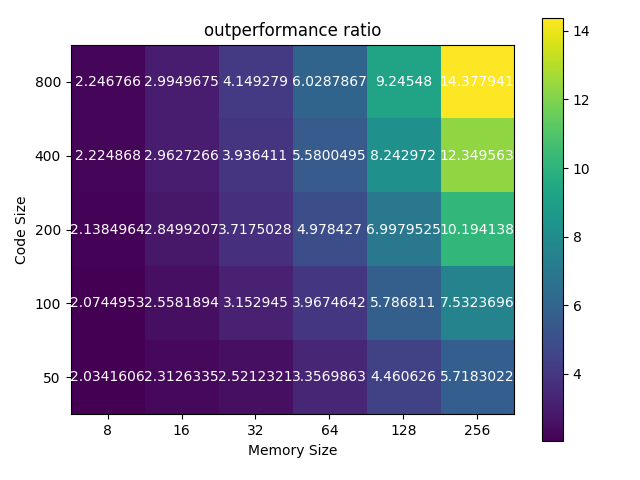} 
\includegraphics[height=5.0cm]{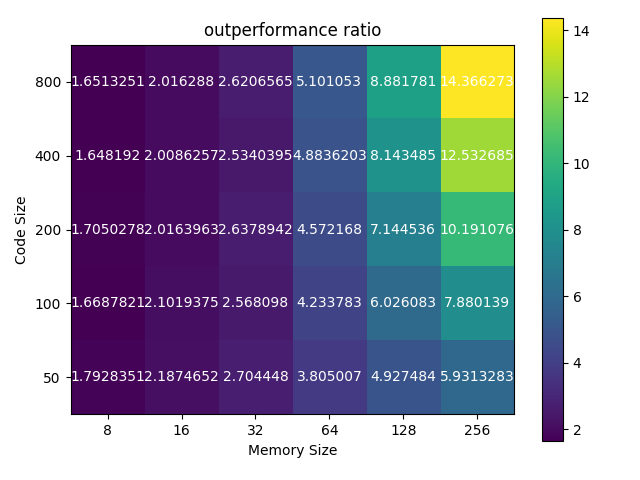} 
\caption{
Left and Right: Heatmaps showing the outperformance ratio by our algorithm on the evidence lower bound vs (a) DKM online non-iterative,  (b) DKM batched iterative. Results of each square within each heatmap used a single distinct episode of synthetic data, of length $T = 32$.} 
\end{figure}

In Figure 2a, it can be seen that our standard algorithm obtained an ELBO between 2x to 14x better than the DKM algorithm on the same data. In Figure 2b, we show an iterative offline algorithm, similar in spirit to the DKM algorithm. This algorithm did better, but ours still outperformed it by a minimum of $64\%$. 

These results suggest that our proposed approach may be a more reliable choice if performance on the evidence lower bound is desired. This distinction may be particularly meaningful at test time, since it may also indicate a misallocation of probability density by the underperforming algorithm. In deep generative modeling settings, we conjecture that this could lead to poorer quality observations being generated or retrieved, as well as decreased coverage of the observations in the episode. 

\subsection{Benchmarking on Standard Objectives}

We now benchmark our proposed approach in the context of deep generative models. Our goal in this section is to test if our inferential approach improves the sample efficiency and training stability of deep generative models trained on the standard evidence lower bound objective. 

For this experiment we apply our proposed approach to two simple generative models. We use simple models in this section to control for the complexity of the generative model; our baselines will be similar in complexity. 

These models use one top-level latent variable, $\mathbf{\Omega} = \{ \mathbf{M} \}$, with a matrix-variate Gaussian prior $ \mathcal{MN}_{K \times C}(\mathbf{M}|\mathbf{0}_{K \times C}, \mathbf{I}_{K}, \mathbf{I}_{C})$; they use local latent variables $\mathbf{y}_{t} = \{ \mathbf{w}_{t} \}$, which have either a Gaussian distribution or a one-hot categorical distribution; and they use perceptual codes defined by $p(\mathbf{z}_{t}|\mathbf{w}_{t},\mathbf{M}) = \mathcal{N}_{C}(\mathbf{z}_{t}|\mathbf{M}^{\top}\mathbf{w}_{t}, \sigma_{z}^{2}\mathbf{I}_{C})$, and use a decoder neural network as usual. 

\subsubsection{Baselines}

We implemented and benchmarked against: (1) the variational autoencoder (VAE) \citep{KingmaWelling2013}; (2) the Neural Statistician (NS) \citep{Edwards2017}; and (3) the Dynamic Kanerva Machine (DKM) \citep{Wu2018b}. 

Our implementations use the same architectural specification for the encoder and decoder of each model. Where required, additional transformations of the encoder output are made, in order to compute variational parameters. Architecture details and hyperparameters are given in App. F. 

\subsubsection{Experimental Results}

In this section, we report the quantitative results for our models and those we have benchmarked against. 

\begin{table}[h!]
\centering
\scalebox{0.8}{
\begin{tabular}{ c | c c } 
 \hline \\ [-1.5ex]
 Model & CIFAR-10; Test Set ELBO per frame \\
 \hline \\ [-1.5ex]
  VAE \citep{KingmaWelling2013} & \bf{3508.5} \\
  NS \citep{Edwards2017} & 2178.5 \\
  DKM \citep{Wu2018b} & 2198.3 \\
 \bf{VBM, Basic, Gaussian (ours)} & \bf{3451.7} \\
 \bf{VBM, Basic, Categorical (ours)} & \bf{3593.7} \\
 \hline
 \hline \\ [-1.5ex]
  Model & CelebA; Test Set ELBO per frame \\
  \hline \\ [-1.5ex]
  VAE \citep{KingmaWelling2013} & \bf{4174.8} \\
  NS \citep{Edwards2017} & 3242.3 \\
  DKM \citep{Wu2018b} & 2868.0 \\
  \bf{VBM, Basic, Gaussian (ours)} & \bf{4155.3} \\
  \bf{VBM, Basic, Categorical (ours)} & \bf{4422.5} \\
 \hline
\end{tabular}}
\caption{ELBO per frame on CIFAR-10 and CelebA. Each model was randomly initialized and trained independently.} 
\end{table}

\emph{VAE.} The training process for the VAE was stable and sample efficient. The VAE model is a non-episodic model and can be trained with a fairly small batch size, but for fair comparison with the other models, we train it using a batch of episodes in similar fashion to the episodic models. 

\emph{Neural Statistician.} The Neural Statistician's performance was very stable, but asymptoted on both datasets as training progressed. In later epochs, the training set performance improved very slowly, but a slight continued improvement on the validation set allowed it to avoid early stopping for the entire duration. 
We note that compared to the model used by \citet{Edwards2017}, we added an additional nonlinearity in the pooling layer, before computing the approximate posterior over the context variable's variational parameters with a linear layer. We found our Neural Statistician was otherwise unstable in our unstructured episodic setting. 

\emph{Dynamic Kanerva Machine.} The DKM became intermittently unstable as training progressed. This occurred on both datasets. The DKM was the only model to stop early. It stopped early on both datasets. We note that \citet{Wu2018b} used a biased estimator to train their models, approximating samples of the variational posterior for memory using its mean, see Sec 3 of their paper. For fair comparison with the other models, we used an unbiased estimator for training and benchmarking of all models, including the DKM. 

\emph{Our Models.} 
Interestingly, our models trained very stably and obtained final performance comparable to the VAE, while outperforming the other models by a large margin. The \emph{basic Gaussian} model is quite similar to the DKM in its graphical model specification, and yet our model obtains significantly better results. We attribute this result to the noisy input to the decoder in the DKM model, which arises from our use of an unbiased estimator. The \emph{basic Categorical} model was a bit slower to train with the basic hyperparameter setup than the VAE or Gaussian model, but we found that the per-epoch training progress could be sped up by halving the batch size and duplicating the episodes processed within each batch; with this trick, the basic Categorical model obtains competitive results with the VAE and our basic Gaussian model in the same number of maximum training epochs.\footnote{Since our goal is to measure sample efficiency rather than the efficiency of each gradient step, we consider this modification to be a reasonable one.} Surprisingly, while the Neural Statistician also trained slowly, we found this modified training curriculum did not improve its performance. 

In summary, our models obtain good quantitative performance and train steadily like the VAE, and much more efficiently than the other baseline episodic generative models. Since these episodic models were similar in terms of both encoder-decoder architecture and graphical model structure--but were based on other inference algorithms--we believe these results demonstrate a remarkable reliability by mean-field variational Bayes from the standpoints of quantitative performance, training stability, and sample efficiency. 

\subsection{Qualitative Evaluation}

In this section, we present qualitative results for our more sophisticated models: a \emph{scalable mixture-based memory model} (App. C.4) 
and a \emph{tree-structured memory model} (App. C.5). 

The models are both trained on the CIFAR-10 training dataset, using the improved encoder-decoder architecture, the stochastic regularizer, and the dynamic memory initialization. Training hyperparams and further details are given in App. F.2. 

\subsubsection{Generating from Memory}

Let's investigate the samples generated from a memory state $q(\boldsymbol{\Omega})$, when the memory is written to using test set data. For this experiment, we generate from memory using the algorithm described in App. G. 
Our results are shown below (Fig. 3). 

\begin{figure}[h!]
\includegraphics[height=6.00cm]{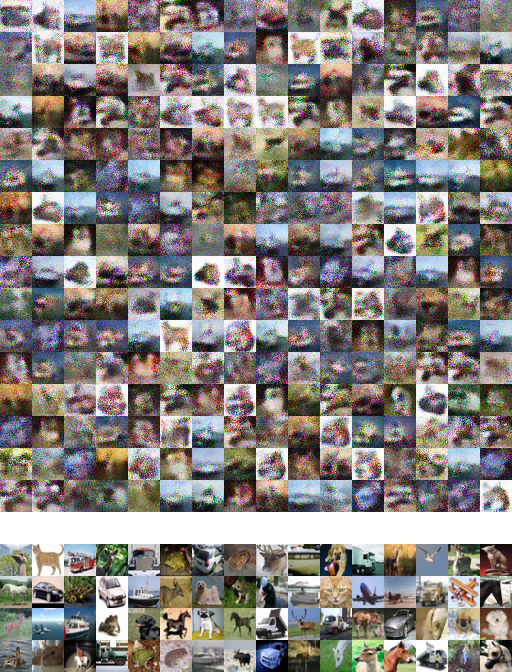} 
\hspace{0.5cm}
\includegraphics[height=6.00cm]{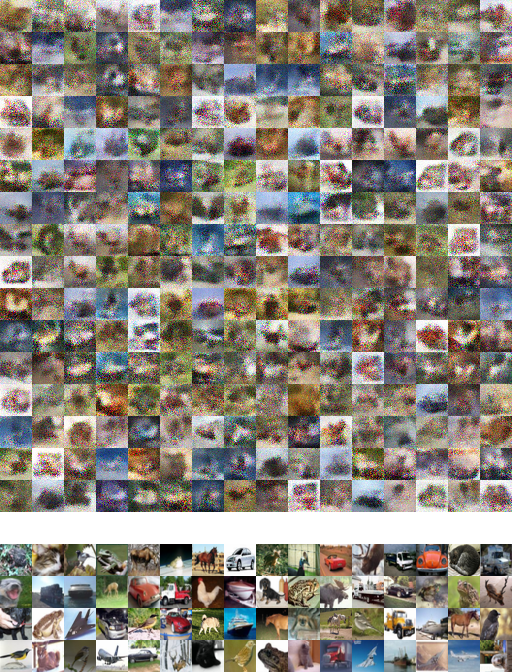} 
\caption{
Observations generated from memory, based on a reference episode written into memory. Reference episodes shown below each grid of samples. Left: Scalable Mixture-Based Memory Model. Right: Tree-Structured Memory Model.}
\end{figure}

For the scalable mixture-based memory model, the observations generated from memory are crisp, diverse, coherent, and largely depict the same entities, scenes, and patterns as the observations in each episode. Samples for additional episodes are shown in App. H, 
and illustrate the same high degree of \emph{sample fidelity}. Our results improve on those of prior works (see App. H), 
generally allowing greater recall of diverse episodes, and generating more coherent observations in more cases. 

By inspection, the tree-structured memory model generates samples that tend to be more blurry by default. Interestingly, despite the fact that the model can in principle assign the partitions of a code into different clustering patterns--a degree of freedom absent from the scalable mixture-based memory model--the result when naively generating from memory is worse, in this seemingly more powerful model. This can be understood as a consequence of the fact that we are no longer modeling the covariance between all code components, but only those within each code partition. (This generative model essentially specifies a block diagonal covariance matrix for codes, but the blocks can be swapped out.) Since the variability among different partitions of a code is no longer directly coordinated, blurry images may result. It may be possible to resolve this by conditioning all the addressing weight priors for a given timestep (App. C.5, 
Eq. 227) on yet another latent variable, which could represent their mean; we leave this as a possible direction for future work. 

Thus, the scalable mixture-based memory model can generate some sharp images directly, but the tree-structured memory model will require additional techniques in order to do so. In the next section, we discuss a simple technique, applicable to both models, that allows the proposed tree-structured memory model to generate more compelling samples, and also improves the results of the scalable mixture model. 

\subsubsection{Iterative Reading}

In several prior works \citep{Hinton2006, Wu2018a}, a variety of iterative sampling algorithms have been proposed and empirically observed to improve sample quality of fast-adapting generative models. In this section, we show samples generated by an iterative sampling algorithm of our own design, and demonstrate that the samples generated directly from memory in our models are similarly amenable to improvement via the use of our proposed algorithm. 

For this experiment, we iteratively sample from memory using the algorithm described in App. G. 
Our results are shown in Fig. 4. 

\begin{figure}[h]
\includegraphics[height=5.0cm, width=5.0cm]{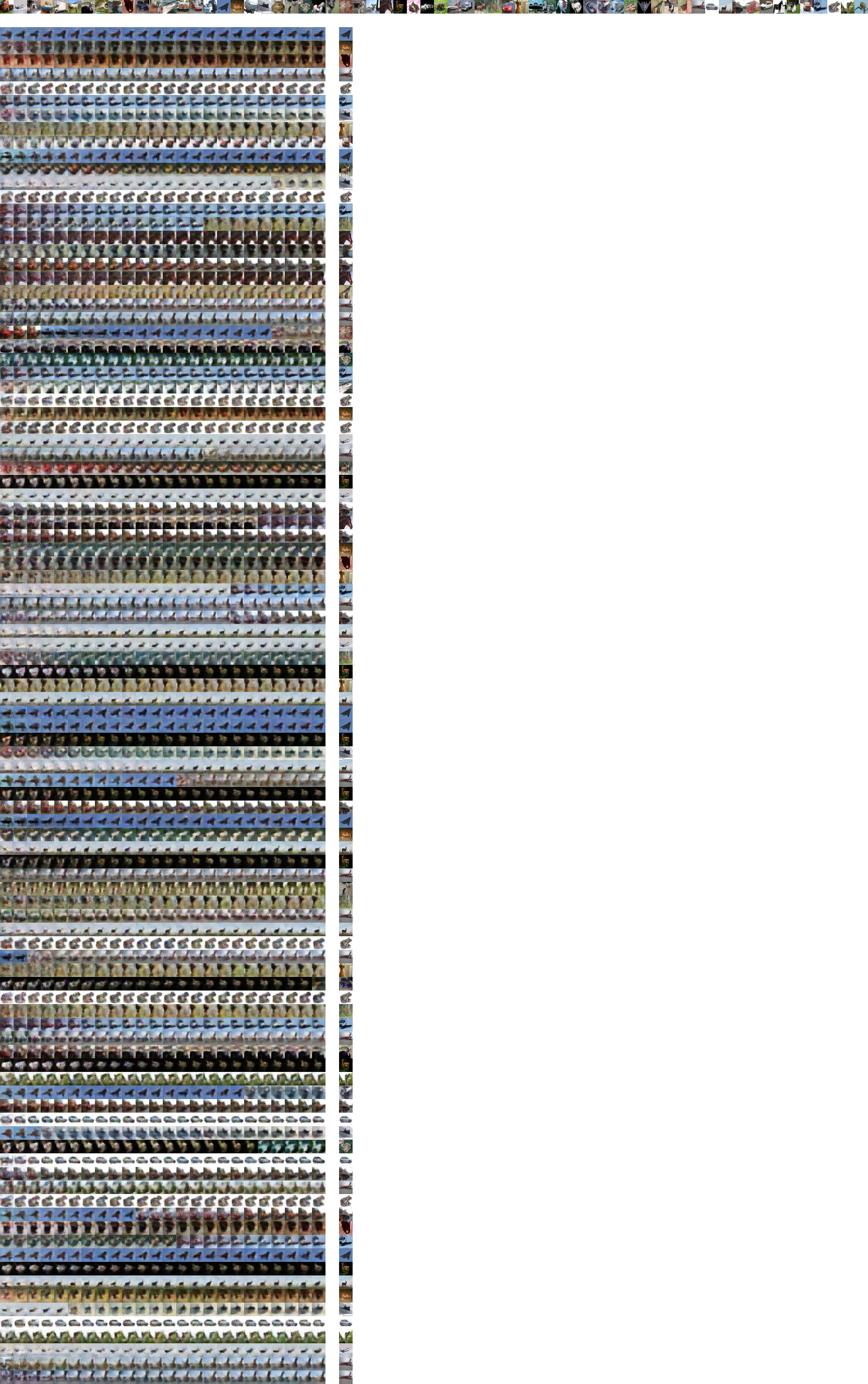} 
\includegraphics[height=5.0cm, width=5.0cm]{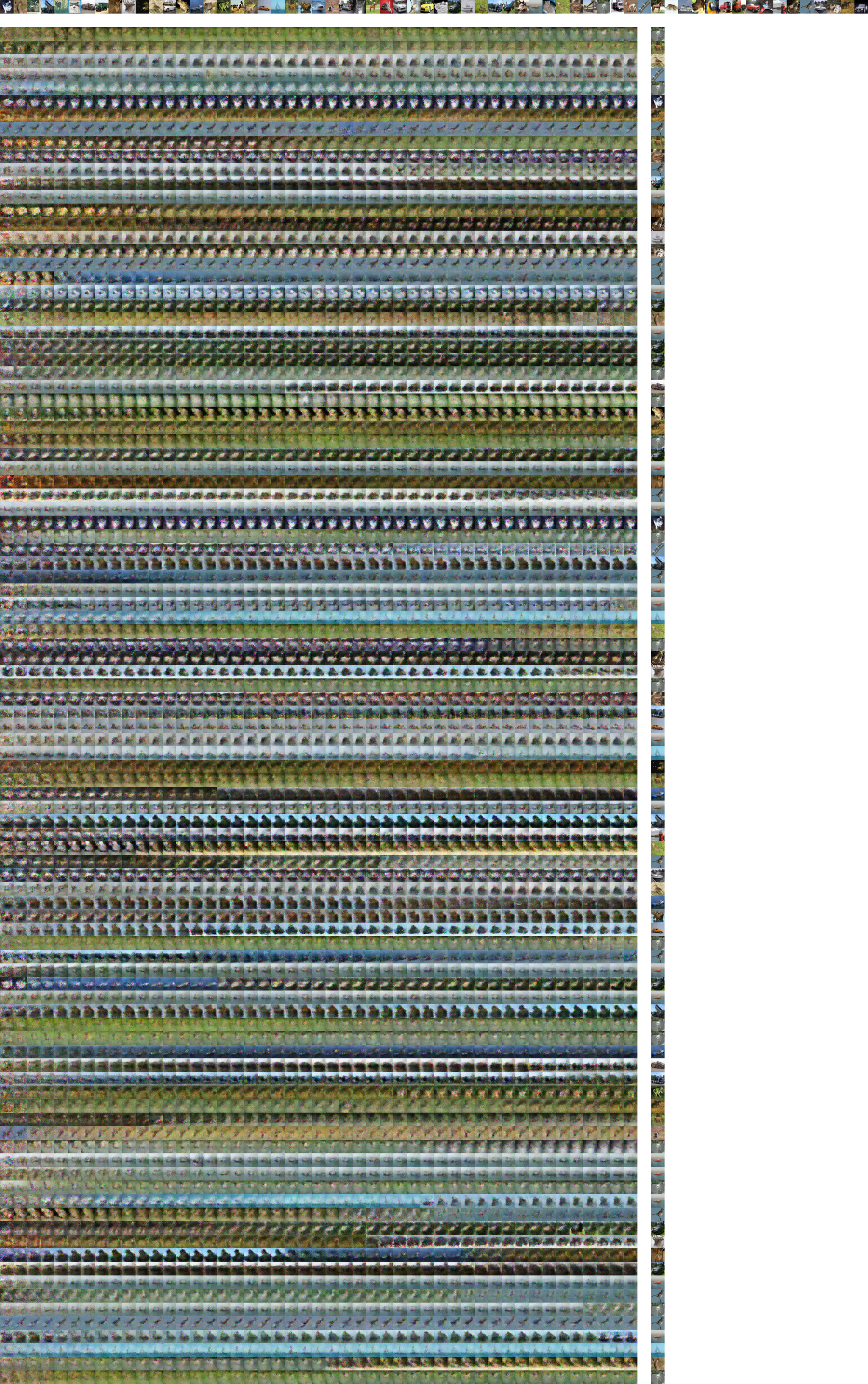} 
\caption{
Observations iteratively generated from memory, based on a reference episode written into memory. Reference episode is shown above each grid of iterative samples, and nearest neighbor of the last sample in each sequence is shown in column to the right. Left: Scalable Mixture-Based Memory Model. Right: Tree-Structured Memory Model. (Best seen electronically.)}
\end{figure}

\subsubsection{Resizing Memory}

In our models, the latent space of perceptual codes is organized, but the explicit memory layout is determined on-the-fly by the dynamic memory initialization and the subsequent optimization via mean-field variational Bayes. This makes resizability of our scalable mixture-based memory model simple. For instance, in TensorFlow \citep{Abadi2016} we can resize our trained memory models simply by instantiating a model from the same class and with the same variable scope, and restoring the trained variables from a checkpoint. By changing the field for the model size in the class, a larger memory model can be obtained. The only checkpointed variables are the neural network parameters, and in contrast to \citet{Rao2019, Marblestone2020}, these networks have no parametric dependency on the memory size or number of clusters. Since the VB updates depend only on perceptual codes themselves and not directly any neural network parameters, they are implemented to adapt automatically. This facilitates increased capacity and fast writes without any gradient-based training of a large memory model. 

In this section, we investigate the sample quality of the observations generated from memory, when both the episode length and the memory size are grown proportionally. Using the same model from the previous sections, we consider an episode length $T = 640$ and a number of clusters $H = 100$. 

Below, we display a visualization of the memory state, samples generated from the model, and iterative reads. 

\begin{figure}[h]
\includegraphics[height=10cm]{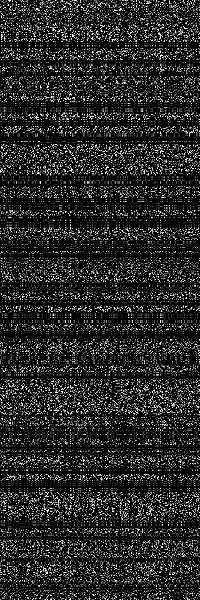} 
\includegraphics[height=10cm]{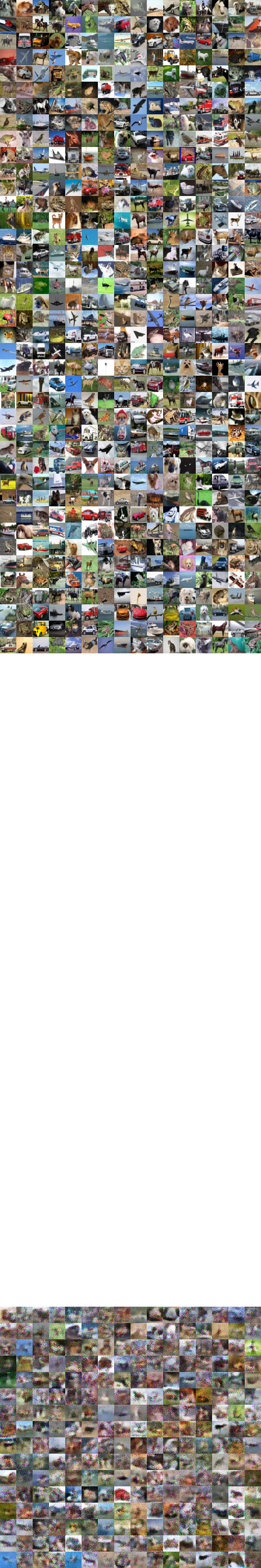} 
\includegraphics[height=10cm]{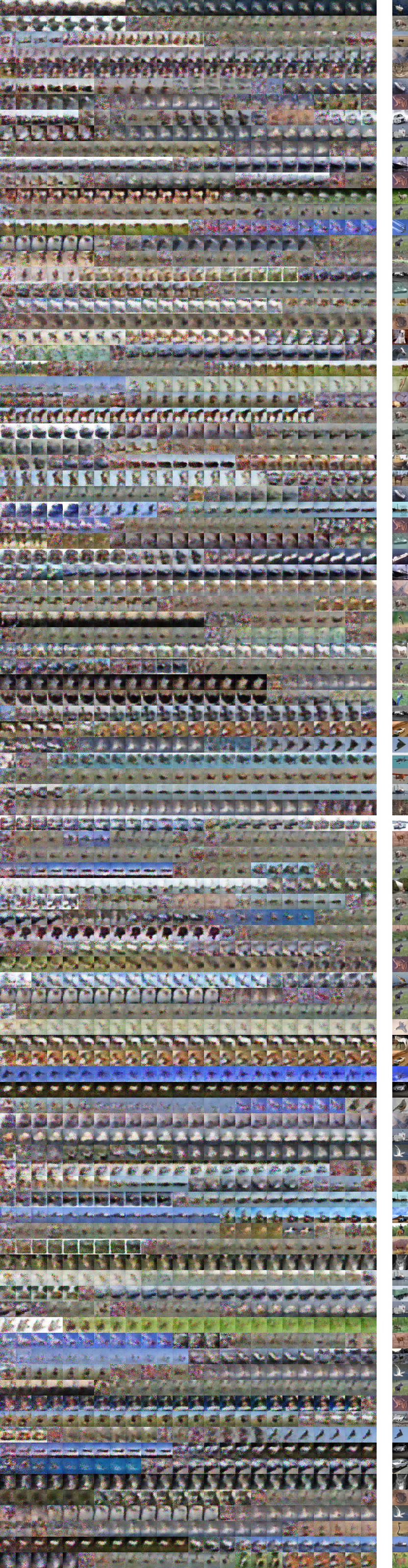} 
\caption{Left: Visualization of the memory state for the scalable mixture-based memory model with $H = 100$ clusters, created by running mean-field variational Bayes on a test-set episode of length $T = 640$, and concatenating the mean $\mathbf{R}_{h}$ of each matrix-variate Gaussian memory distribution $q(\mathbf{M}_{h})$ together on the vertical axis, for $h = 1, \ldots H$; each matrix had $K = 6$ rows. Center: Samples generated directly from memory. Right: Iteratively generated samples.}
\end{figure}

In App. H, 
we also show samples for an even larger episode, of length $T = 1280$, with memory scaled sublinearly to $H = 150$ clusters, obtaining comparable or even better results to the ones shown above. 

\section{Conclusion}

In this work we have introduced a scalable method for designing fast-adapting generative models. For the results presented in this paper, we have used nearly i.i.d. episodic data, and yet our models train stably and maintain an expressive distribution over latent space. The qualitative results suggest that our models are capable of generating reasonable quality observations resembling test-time data. Moreover, the models presented in this paper have no parametric dependency on the memory size, and thus it is easy to resize our models at test time. 

Our contribution leverages the AEVB framework and mean-field variational Bayes, and creates fast-adapting latent-space generative models. It is made possible by a new result, showing that the relevant VB updates, for deep conditionally independent hierarchical models with neural network decoders, do not depend on this neural network decoder. In general, the VB updates for any given latent variable \emph{only} depend on the generative model through the expectation of the natural parameters for that latent variable, conditioned on the other latent variables in its Markov boundary. Consequently, the perceptual codes serve as a `buffer' between the decoder and the other latent variables, facilitating tractable VB updates. 

Some possible directions for future work would be as follows. (1) Improving the encoder-decoder architecture. (2) Improving the ability of the memory module to store perspective-invariant representations rather than fully `perceptual' ones. (3) Improving the ability of the memory module to generalize. (4) Improving the ability of the memory module to incorporate temporal conditioning information. (5) Investigating the viability of the approach in other domains. (6) Investigating the viability of performing Bayesian model selection in latent space. 

\clearpage 

\bibliographystyle{iclr2020_conference} 
\bibliography{ms} 

\appendix

\section{Evidence Lower Bound}

In this section, we derive the evidence lower bound for our model. 

\begin{align}
\ln p(\mathbf{X}) &= \ln \int p(\mathbf{X},\mathbf{Z},\mathbf{Y},\boldsymbol{\Omega}) d\mathbf{Z}d\mathbf{Y}d\boldsymbol{\Omega} \\
&= \ln \int q(\mathbf{Z})q(\mathbf{Y})q(\boldsymbol{\Omega}) \frac{p(\mathbf{X},\mathbf{Z},\mathbf{Y},\boldsymbol{\Omega})}{q(\mathbf{Z})q(\mathbf{Y})q(\boldsymbol{\Omega})} d\mathbf{Z}d\mathbf{Y}d\boldsymbol{\Omega} \\
&\geq \int q(\mathbf{Z})q(\mathbf{Y})q(\boldsymbol{\Omega}) \ln \frac{p(\mathbf{X},\mathbf{Z},\mathbf{Y},\boldsymbol{\Omega})}{q(\mathbf{Z})q(\mathbf{Y})q(\boldsymbol{\Omega})} d\mathbf{Z}d\mathbf{Y}d\boldsymbol{\Omega} \\
&= \mathbb{E}_{q(\mathbf{Z})q(\mathbf{Y})q(\boldsymbol{\Omega})} \ln{p(\mathbf{X},\mathbf{Z},\mathbf{Y},\boldsymbol{\Omega})} - \ln{q(\mathbf{Z})q(\mathbf{Y})q(\boldsymbol{\Omega})} 
\end{align}

For purposes of training via stochastic gradient ascent, it will be useful to simplify the evidence lower bound further. 

Three remarks are in order:
\begin{itemize}
\item In our generative model, the local latent variables at each timestep are conditionally independent from those at the other timesteps, given memory $\boldsymbol{\Omega}$. 
\item As shown in Appendix B, the variational distribution $q(\mathbf{Y})$ factorizes as $q(\mathbf{Y}) = \prod_{t=1}^{T} q(\mathbf{y}_{t})$ as a consequence of optimization. 
\item A similar derivation holds for $q(\mathbf{Z})$, except that each $q(\mathbf{z}_{t})$ given by variational Bayes does not have an analytically tractable density. We therefore use a recognition model, and design it so as to compute $q(\mathbf{Z})$ without any stochastic dependencies between the codes at different timesteps. Consequently, we have $q(\mathbf{Z}) = \prod_{t=1}^{T} q(\mathbf{z}_{t})$. 
\end{itemize}

Thus, after applying the recognition model and running the memory writing algorithm, the evidence lower bound simplifies to 

\begin{align}
\ln p(\mathbf{X}) &\geq \mathbb{E}_{q(\mathbf{Z})q(\mathbf{Y})q(\boldsymbol{\Omega})}{\bigg{[}\ln{p(\mathbf{X},\mathbf{Z},\mathbf{Y},\boldsymbol{\Omega})} - \ln{q(\mathbf{Z})q(\mathbf{Y})q(\boldsymbol{\Omega})}\bigg{]}} \\
&= \mathbb{E}_{q(\mathbf{Z})q(\mathbf{Y})q(\boldsymbol{\Omega})}{\bigg{[}\ln{p(\mathbf{X},\mathbf{Z},\mathbf{Y}|\boldsymbol{\Omega})} + \ln{p(\boldsymbol{\Omega})} - \ln{q(\boldsymbol{\Omega})} - \ln{q(\mathbf{Z})q(\mathbf{Y})}\bigg{]}} \\
&= \mathbb{E}_{q(\mathbf{Z})q(\mathbf{Y})q(\boldsymbol{\Omega})}{\bigg{[}\ln{p(\mathbf{X},\mathbf{Z},\mathbf{Y}|\boldsymbol{\Omega})} - \ln{q(\mathbf{Z})q(\mathbf{Y})}\bigg{]}} - D_{\text{KL}}(q(\boldsymbol{\Omega})||p(\boldsymbol{\Omega})) \\
&= \mathbb{E}_{q(\mathbf{Z})q(\mathbf{Y})q(\boldsymbol{\Omega})}{\bigg{[}\sum_{t=1}^{T}\ln{p(\mathbf{x}_{t},\mathbf{z}_{t},\mathbf{y}_{t}|\boldsymbol{\Omega})} - \ln{q(\mathbf{z}_{t})q(\mathbf{y}_{t})} \bigg{]}} - D_{\text{KL}}(q(\boldsymbol{\Omega})||p(\boldsymbol{\Omega})) \\
&= \mathbb{E}_{q(\mathbf{Z})q(\mathbf{Y})q(\boldsymbol{\Omega})}{\bigg{[}\sum_{t=1}^{T}\ln{p(\mathbf{x}_{t}|\mathbf{z}_{t})p(\mathbf{z}_{t}|\mathbf{y}_{t},\boldsymbol{\Omega})p(\mathbf{y}_{t}|\boldsymbol{\Omega})} - \ln{q(\mathbf{z}_{t})q(\mathbf{y}_{t})} \bigg{]}} - D_{\text{KL}}(q(\boldsymbol{\Omega})||p(\boldsymbol{\Omega})) 
\end{align}

Thus, 
\begin{equation}
\begin{split}
\ln{p(\mathbf{X})} \geq \, &\mathbb{E}_{q(\boldsymbol{\Omega})} \sum_{t=1}^{T} \mathbb{E}_{q(\mathbf{y}_{t})q(\mathbf{z}_{t})} \big{[} \ln{p(\mathbf{x}_{t}|\mathbf{z}_{t})} - D_{\text{KL}}(q(\mathbf{z}_{t})||p(\mathbf{z}_{t}|\mathbf{y}_{t},\boldsymbol{\Omega})) - D_{\text{KL}}(q(\mathbf{y}_{t})||p(\mathbf{y}_{t}|\boldsymbol{\Omega})) \big{]} \\ 
&- D_{\text{KL}}(q(\boldsymbol{\Omega})||p(\boldsymbol{\Omega})) 
\end{split}
\end{equation}

This concludes the derivation. 
\section{Variational Bayes}
\label{appdx:B}

In this section we use variational calculus to derive formulae for updating the variational distributions for addresses and memory. 

Following \citet{Beal2003}, we cast inference as a constrained optimization problem in function space, apply the method of Lagrange multipliers, take functional derivatives w.r.t. each variational distribution separately, and equate these to zero. This allows us to obtain an update formula for each variational distribution that maximizes the evidence lower bound when the other variational distributions are held fixed. By iteratively applying these updates, we perform a form of coordinate ascent in function space, with each iteration yielding monotonic improvement in the evidence lower bound. 

Write 
\begin{align}
\mathcal{F}_{X}[q_{Z}, q_{Y}, q_{\Omega}] &:= \int q_{Z}(\mathbf{Z})q_{Y}(\mathbf{Y})q_{\Omega}(\boldsymbol{\Omega}) \ln \frac{p(\mathbf{X},\mathbf{Z},\mathbf{Y},\boldsymbol{\Omega})}{q_{Z}(\mathbf{Z})q_{Y}(\mathbf{Y})q_{\boldsymbol{\Omega}}(\boldsymbol{\Omega})} d\mathbf{Z}d\mathbf{Y}d\boldsymbol{\Omega} \\
\mathcal{G}_{Y}[q_{Y}] &:= \int q_{Y}(\mathbf{Y}) d\mathbf{Y} \\
\mathcal{G}_{\Omega}[q_{\Omega}] &:= \int q_{\Omega}(\boldsymbol{\Omega}) d\boldsymbol{\Omega} 
\,
\end{align}

It will be useful to rewrite $\mathcal{F}_{X}$. We have
\begin{equation}
\begin{split}
\mathcal{F}_{X}[q_{Z}, q_{Y}, q_{\Omega}] = &\int q_{Z}(\mathbf{Z})q_{Y}(\mathbf{Y})q_{\Omega}(\boldsymbol{\Omega}) \ln{p(\mathbf{X},\mathbf{Z},\mathbf{Y},\boldsymbol{\Omega})} d\mathbf{Z}d\mathbf{Y}d\boldsymbol{\Omega} \\
&- \int q_{Z}(\mathbf{Z})q_{Y}(\mathbf{Y})q_{\Omega}(\boldsymbol{\Omega}) \ln{q_{Z}(\mathbf{Z})q_{Y}(\mathbf{Y})q_{\Omega}(\boldsymbol{\Omega})} d\mathbf{Z}d\mathbf{Y}d\boldsymbol{\Omega} \\
\end{split}
\end{equation}
which reduces to
\begin{equation}
\begin{split}
\mathcal{F}_{X}[q_{Z}, q_{Y}, q_{\Omega}] = &\int q_{Z}(\mathbf{Z})q_{Y}(\mathbf{Y})q_{\Omega}(\boldsymbol{\Omega}) \ln{p(\mathbf{X},\mathbf{Z},\mathbf{Y},\boldsymbol{\Omega})} d\mathbf{Z}d\mathbf{Y}d\boldsymbol{\Omega} \\
&- \int q_{Z}(\mathbf{Z}) \ln{q_{Z}(\mathbf{Z})} d\mathbf{Z} \\
&- \int q_{Y}(\mathbf{Y}) \ln{q_{Y}(\mathbf{Y})} d\mathbf{Y} \\
&- \int q_{\Omega}(\boldsymbol{\Omega}) \ln{q_{\Omega}(\boldsymbol{\Omega})} d\boldsymbol{\Omega}
\end{split}
\end{equation}

\subsection{Variational Inference for Addresses}

We seek to optimize $\mathcal{F}_{X}[q_{Z}, q_{Y}, q_{\Omega}]$ with respect to the free distribution $q_{Y}$. This can be accomplished by the method of Lagrange multipliers. We form the Lagrangian expression
\begin{align}
\hat{\mathcal{F}}_{X}[q_{Z}, q_{Y}, q_{\Omega}] &:= \mathcal{F}_{X}[q_{Z}, q_{Y}, q_{\Omega}] + \lambda\bigg{(}\mathcal{G}_{Y}[q_{Y}] - 1\bigg{)} 
\end{align}
Taking the functional derivative of the Lagrangian expression with respect to $q_{Y}$, we have
\begin{equation}
\begin{split}
\frac{\delta \hat{\mathcal{F}}_{X}}{\delta q_{Y}} = &\int q_{Z}(\mathbf{Z})q_{\Omega}(\boldsymbol{\Omega}) \ln{p(\mathbf{X},\mathbf{Z},\mathbf{Y},\boldsymbol{\Omega})} d\mathbf{Z}d\boldsymbol{\Omega} \\
&- [\ln{q_{Y}(\mathbf{Y})} + 1] \\
&+ \lambda(1 - 0)
\end{split}
\end{equation}

Equating to zero and rearranging, we obtain
\begin{align}
\ln{q_{Y}(\mathbf{Y})} = \int q_{Z}(\mathbf{Z})q_{\Omega}(\boldsymbol{\Omega}) \ln{p(\mathbf{X},\mathbf{Z},\mathbf{Y},\boldsymbol{\Omega})} d\mathbf{Z}d\boldsymbol{\Omega} -1 + \lambda
\end{align}

Thus, 
\begin{align}
q_{Y}(\mathbf{Y}) &= \exp \bigg{\{} \int q_{Z}(\mathbf{Z})q_{\Omega}(\boldsymbol{\Omega}) \ln{p(\mathbf{X},\mathbf{Z},\mathbf{Y},\boldsymbol{\Omega})} d\mathbf{Z}d\boldsymbol{\Omega} -1 + \lambda \bigg{\}} \\
&\propto \exp \bigg{\{} \int q_{Z}(\mathbf{Z})q_{\Omega}(\boldsymbol{\Omega}) \ln{p(\mathbf{X},\mathbf{Z},\mathbf{Y},\boldsymbol{\Omega})} d\mathbf{Z}d\boldsymbol{\Omega}\bigg{\}} \\
&= \exp \bigg{\{} \int q_{Z}(\mathbf{Z})q_{\Omega}(\boldsymbol{\Omega}) [\ln{p(\boldsymbol{\Omega})} + \sum_{t=1}^{T} \ln{p(\mathbf{x}_{t},\mathbf{z}_{t},\mathbf{y}_{t}|\boldsymbol{\Omega})}] d\mathbf{Z}d\boldsymbol{\Omega}\bigg{\}} \\
&\propto \exp \bigg{\{} \int q_{Z}(\mathbf{Z})q_{\Omega}(\boldsymbol{\Omega}) [\sum_{t=1}^{T} \ln{p(\mathbf{x}_{t},\mathbf{z}_{t},\mathbf{y}_{t}|\boldsymbol{\Omega})}] d\mathbf{Z}d\boldsymbol{\Omega}\bigg{\}} \\
&= \exp \bigg{\{} \sum_{t=1}^{T} \int q_{Z}(\mathbf{z}_{t})q_{\Omega}(\boldsymbol{\Omega}) \ln{p(\mathbf{x}_{t},\mathbf{z}_{t},\mathbf{y}_{t}|\boldsymbol{\Omega})} d\mathbf{z}_{t}d\boldsymbol{\Omega}\bigg{\}} \\
&= \prod_{t=1}^{T} \exp \bigg{\{} \int q_{Z}(\mathbf{z}_{t})q_{\Omega}(\boldsymbol{\Omega}) \ln{p(\mathbf{x}_{t},\mathbf{z}_{t},\mathbf{y}_{t}|\boldsymbol{\Omega})} d\mathbf{z}_{t}d\boldsymbol{\Omega}\bigg{\}} \\
&= \prod_{t=1}^{T} \exp \bigg{\{} \int q_{Z}(\mathbf{z}_{t})q_{\Omega}(\boldsymbol{\Omega}) [\ln{p(\mathbf{x}_{t}|\mathbf{z}_{t})} + \ln{p(\mathbf{z}_{t},\mathbf{y}_{t}|\boldsymbol{\Omega})}] d\mathbf{z}_{t}d\boldsymbol{\Omega}\bigg{\}} \\
&\propto \prod_{t=1}^{T} \exp \bigg{\{} \int q_{Z}(\mathbf{z}_{t})q_{\Omega}(\boldsymbol{\Omega}) \ln{p(\mathbf{z}_{t},\mathbf{y}_{t}|\boldsymbol{\Omega})} d\mathbf{z}_{t}d\boldsymbol{\Omega}\bigg{\}} 
\end{align}

Where `$\propto$' denotes proportionality. Note that since $q_{Y}(\mathbf{Y})$ is constrained to integrate to 1, all terms from the exponent that do not vary with $\mathbf{Y}$ can be absorbed into the proportionality constant, without affecting the density. 

Thus we have shown that the generative neural network disappears from the expression for the variational addressing distribution. 
Before proceeding further, let us do the same for memory. 

\subsection{Variational Inference for Memory}

Now for $q_{\Omega}$. We seek to optimize $\mathcal{F}_{X}[q_{Z}, q_{Y}, q_{\Omega}]$ with respect to the free distribution $q_{\Omega}$. This can be accomplished by the method of Lagrange multipliers. We form the Lagrangian expression
\begin{align}
\hat{\mathcal{F}}_{X}[q_{Z}, q_{Y}, q_{\Omega}] &:= \mathcal{F}_{X}[q_{Z}, q_{Y}, q_{\Omega}] + \lambda\bigg{(}\mathcal{G}_{\Omega}[q_{\Omega}] - 1\bigg{)} 
\end{align}
Taking the functional derivative of the Lagrangian with respect to $q_{\Omega}$, we have
\begin{equation}
\begin{split}
\frac{\delta \hat{\mathcal{F}}_{X}}{\delta q_{\Omega}} = &\int q_{Z}(\mathbf{Z})q_{Y}(\mathbf{Y}) \ln{p(\mathbf{X},\mathbf{Z},\mathbf{Y},\boldsymbol{\Omega})} d\mathbf{Z}d\mathbf{Y} \\
&- [\ln{q_{\Omega}(\boldsymbol{\Omega})} + 1] \\
&+ \lambda(1 - 0)
\end{split}
\end{equation}

Equating to zero and rearranging, we have
\begin{align}
\ln{q_{\Omega}(\boldsymbol{\Omega})} = \int q_{Z}(\mathbf{Z})q_{Y}(\mathbf{Y}) \ln{p(\mathbf{X},\mathbf{Z},\mathbf{Y},\boldsymbol{\Omega})} d\mathbf{Z}d\mathbf{Y} -1 + \lambda
\end{align}

Thus, 
\begin{align}
q_{\Omega}(\boldsymbol{\Omega}) &= \exp \bigg{\{} \int q_{Z}(\mathbf{Z})q_{Y}(\mathbf{Y}) \ln{p(\mathbf{X},\mathbf{Z},\mathbf{Y},\boldsymbol{\Omega})} d\mathbf{Z}d\boldsymbol{\Omega} -1 + \lambda \bigg{\}} \\
&\propto \exp \bigg{\{} \int q_{Z}(\mathbf{Z})q_{Y}(\mathbf{Y}) \ln{p(\mathbf{X},\mathbf{Z},\mathbf{Y},\boldsymbol{\Omega})} d\mathbf{Z}d\mathbf{Y} \bigg{\}} \\
&= \exp \bigg{\{} \int q_{Z}(\mathbf{Z})q_{Y}(\mathbf{Y}) [\ln{p(\boldsymbol{\Omega})} + \sum_{t=1}^{T} \ln{p(\mathbf{x}_{t},\mathbf{z}_{t},\mathbf{y}_{t}|\boldsymbol{\Omega})}] d\mathbf{Z}d\mathbf{Y} \bigg{\}} \\
&= p(\boldsymbol{\Omega}) \exp \bigg{\{} \int q_{Z}(\mathbf{Z})q_{Y}(\mathbf{Y}) [\sum_{t=1}^{T} \ln{p(\mathbf{x}_{t},\mathbf{z}_{t},\mathbf{y}_{t}|\boldsymbol{\Omega})}] d\mathbf{Z}d\mathbf{Y} \bigg{\}} \\
&= p(\boldsymbol{\Omega}) \exp \bigg{\{} \sum_{t=1}^{T} \int q_{Z}(\mathbf{z}_{t})q_{Y}(\mathbf{y}_{t}) \ln{p(\mathbf{x}_{t},\mathbf{z}_{t},\mathbf{y}_{t}|\boldsymbol{\Omega})} d\mathbf{z}_{t}d\mathbf{y}_{t} \bigg{\}} \\
&= p(\boldsymbol{\Omega}) \prod_{t=1}^{T} \exp \bigg{\{} \int q_{Z}(\mathbf{z}_{t})q_{Y}(\mathbf{y}_{t}) \ln{p(\mathbf{x}_{t},\mathbf{z}_{t},\mathbf{y}_{t}|\boldsymbol{\Omega})} d\mathbf{z}_{t}d\mathbf{y}_{t} \bigg{\}} \\
&= p(\boldsymbol{\Omega}) \prod_{t=1}^{T} \exp \bigg{\{} \int q_{Z}(\mathbf{z}_{t})q_{Y}(\mathbf{y}_{t}) [\ln{p(\mathbf{x}_{t}|\mathbf{z}_{t})} + \ln{p(\mathbf{z}_{t},\mathbf{y}_{t}|\boldsymbol{\Omega})}] d\mathbf{z}_{t}d\mathbf{y}_{t} \bigg{\}} \\
&\propto p(\boldsymbol{\Omega}) \prod_{t=1}^{T} \exp \bigg{\{} \int q_{Z}(\mathbf{z}_{t})q_{Y}(\mathbf{y}_{t}) \ln{p(\mathbf{z}_{t},\mathbf{y}_{t}|\boldsymbol{\Omega})} d\mathbf{z}_{t}d\mathbf{y}_{t}\bigg{\}} 
\end{align}

Where `$\propto$' denotes proportionality. Note that since $q_{\Omega}(\boldsymbol{\Omega})$ is constrained to integrate to 1, all terms from the exponent that do not vary with $\boldsymbol{\Omega}$ can be absorbed into the proportionality constant, without affecting the density. 

Thus we have shown that the generative neural network disappears from the expression for the variational memory distribution. 

This derivation is generic, and shows that any conditionally independent hierarchical model of the form $p(\boldsymbol{\Omega})\prod_{t=1}^{T}p(\mathbf{x}_{t}|\mathbf{z}_{t})p(\mathbf{z}_{t}|\mathbf{y}_{t},\boldsymbol{\Omega})p(\mathbf{y}_{t}|\boldsymbol{\Omega})$ and any inference model of the form $q(\boldsymbol{\Omega})q(\mathbf{Y})q(\mathbf{Z})$, two properties hold: (1) the VB update for the joint distribution $q(\mathbf{Y})$ factors over timesteps, and (2) the VB updates for $q(\mathbf{y}_{t})$ and $q(\boldsymbol{\Omega})$ do not depend on $p(\mathbf{x}_{t}|\mathbf{z}_{t})$. 

A similar argument shows that the VB update for $q(\mathbf{Z})$ factors over timesteps, but unlike the other updates, this one would depend on $p(\mathbf{x}_{t}|\mathbf{z}_{t})$, so the VB updates for that distribution are intractable when $p(\mathbf{x}_{t}|\mathbf{z}_{t})$ is a neural network; thus we use a recognition model to set the variational parameters of $q(\mathbf{z}_{t})$ directly. 
\section{Variational Bayesian Update Rules}

\subsection{Variational Bayesian Update Rules: Gaussian Addresses}

\begin{theorem}
Consider a generative model of the form 
\begin{align*}
p(\mathbf{X},\mathbf{Z},\mathbf{W},\mathbf{M}) &= p(\mathbf{M})\prod_{t=1}^{T}p(\mathbf{w}_{t})p(\mathbf{z}_{t}|\mathbf{w}_{t},\mathbf{M})p(\mathbf{x}_{t}|\mathbf{z}_{t}) 
\end{align*} 
and an inference model of the form $q(\mathbf{Z}, \mathbf{W}, \mathbf{M}) = q(\mathbf{Z})q(\mathbf{W})q(\mathbf{M})$. Assume that $q(\mathbf{Z}) = \prod_{t=1}^{T}q(\mathbf{z}_{t})$ and that each $q(\mathbf{z}_{t})$ is a multivariate Gaussian whose variational parameters are supplied by a recognition model. 

Suppose that
\begin{align*}
p(\mathbf{M}) &= \mathcal{MN}_{K \times C}(\mathbf{M}|\mathbf{R}=\mathbf{R}_{0}, \mathbf{U}=\mathbf{U}_{0}, \mathbf{V}=\mathbf{I}_{C}) \\
p(\mathbf{w}_{t}) &= \mathcal{N}_{K}(\mathbf{w}_{t}|\boldsymbol{\mu}=\mathbf{0}_{K}, \boldsymbol{\Sigma}=\mathbf{I}_{K}) \\
p(\mathbf{z}_{t}|\mathbf{w}_{t},\mathbf{M}) &= \mathcal{N}_{C}(\mathbf{z}_{t}|\boldsymbol{\mu}=\mathbf{M}^{\top}\mathbf{w}_{t}, \boldsymbol{\Sigma}=\mathbf{I}_{C}) 
\end{align*}
Then the variational Bayesian update rules given in Appendix B 
simplify to:
\begin{align}
\boldsymbol{\mu}_{w_{t}} &\leftarrow (\mathbf{R}\mathbf{R}^{\top} + \mathbf{I}_{K} + C\mathbf{U})^{-1}\mathbf{R}\boldsymbol{\mu}_{z_{t}} \\
\boldsymbol{\Sigma}_{w_{t}} &\leftarrow (\mathbf{R}\mathbf{R}^{\top} + \mathbf{I}_{K} + C\mathbf{U})^{-1} \\
q^{(k+1)}(\mathbf{w}_{t}) &\leftarrow \mathcal{N}_{K}(\mathbf{w}_{t}|\boldsymbol{\mu}_{w_{t}}, \boldsymbol{\Sigma}_{w_{t}}) \\
\,\\
\mathbf{R} &\leftarrow \bigg{(}\mathbf{U}_{0}^{-1} + \sum_{t=1}^{T} (\boldsymbol{\mu}_{w_{t}}\boldsymbol{\mu}_{w_{t}}^{\top} + \boldsymbol{\Sigma}_{w_{t}})\bigg{)}^{-1}\bigg{(}\mathbf{U}_{0}^{-1}\mathbf{R}_{0} + \sum_{t=1}^{T}\boldsymbol{\mu}_{w_{t}}\boldsymbol{\mu}_{z_{t}}^{\top}\bigg{)} \\
\mathbf{U} &\leftarrow \bigg{(}\mathbf{U}_{0}^{-1} + \sum_{t=1}^{T} (\boldsymbol{\mu}_{w_{t}}\boldsymbol{\mu}_{w_{t}}^{\top} + \boldsymbol{\Sigma}_{w_{t}})\bigg{)}^{-1} \\
\mathbf{V} &\leftarrow \mathbf{I}_{C} \\
\, \\
q^{(k+1)}(\mathbf{M}) &\leftarrow \mathcal{MN}_{K \times C}(\mathbf{M}|\mathbf{R}, \mathbf{U}, \mathbf{V}) 
\end{align}
\end{theorem}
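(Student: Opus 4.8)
The plan is to invoke the generic coordinate‑ascent updates established in Appendix B --- namely $q(\mathbf{w}_{t}) \propto \exp\{\mathbb{E}_{q(\mathbf{z}_{t})q(\mathbf{M})}\ln p(\mathbf{z}_{t},\mathbf{w}_{t}|\mathbf{M})\}$ and $q(\mathbf{M}) \propto p(\mathbf{M})\prod_{t}\exp\{\mathbb{E}_{q(\mathbf{z}_{t})q(\mathbf{w}_{t})}\ln p(\mathbf{z}_{t},\mathbf{w}_{t}|\mathbf{M})\}$, with $\mathbf{y}_{t}=\mathbf{w}_{t}$ and $\boldsymbol{\Omega}=\mathbf{M}$ --- substitute the given Gaussian factors, evaluate the (Gaussian) expectations in closed form, and then read off the updated variational parameters by matching the resulting log‑densities against the standard multivariate and matrix‑variate Gaussian forms. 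Here $\ln p(\mathbf{z}_{t},\mathbf{w}_{t}|\mathbf{M}) = -\tfrac12\mathbf{w}_{t}^{\top}\mathbf{w}_{t} - \tfrac12(\mathbf{z}_{t}-\mathbf{M}^{\top}\mathbf{w}_{t})^{\top}(\mathbf{z}_{t}-\mathbf{M}^{\top}\mathbf{w}_{t}) + \mathrm{const}$, and I will use two moment identities: $\mathbb{E}_{q(\mathbf{M})}[\mathbf{M}\mathbf{M}^{\top}] = \mathbf{R}\mathbf{R}^{\top} + \tr(\mathbf{V})\,\mathbf{U} = \mathbf{R}\mathbf{R}^{\top} + C\mathbf{U}$ (since $\mathbf{V}=\mathbf{I}_{C}$), and $\mathbb{E}_{q(\mathbf{w}_{t})}[\mathbf{w}_{t}\mathbf{w}_{t}^{\top}] = \boldsymbol{\mu}_{w_{t}}\boldsymbol{\mu}_{w_{t}}^{\top} + \boldsymbol{\Sigma}_{w_{t}}$.

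For the address update I would expand the quadratic, take expectations over $q(\mathbf{z}_{t})$ and $q(\mathbf{M})$, and discard all terms not involving $\mathbf{w}_{t}$; what remains is $-\tfrac12\mathbf{w}_{t}^{\top}(\mathbf{I}_{K} + \mathbf{R}\mathbf{R}^{\top} + C\mathbf{U})\mathbf{w}_{t} + \mathbf{w}_{t}^{\top}\mathbf{R}\boldsymbol{\mu}_{z_{t}}$. Completing the square identifies $q(\mathbf{w}_{t})$ as a Gaussian with the stated precision $\boldsymbol{\Sigma}_{w_{t}}^{-1} = \mathbf{R}\mathbf{R}^{\top} + \mathbf{I}_{K} + C\mathbf{U}$ and mean $\boldsymbol{\mu}_{w_{t}} = \boldsymbol{\Sigma}_{w_{t}}\mathbf{R}\boldsymbol{\mu}_{z_{t}}$. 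For the memory update, the $\ln p(\mathbf{w}_{t})$ term is constant in $\mathbf{M}$, so I would combine the matrix‑variate log‑prior $-\tfrac12\tr((\mathbf{M}-\mathbf{R}_{0})^{\top}\mathbf{U}_{0}^{-1}(\mathbf{M}-\mathbf{R}_{0}))$ (using $\mathbf{V}=\mathbf{I}_{C}$) with $\sum_{t}\mathbb{E}_{q(\mathbf{z}_{t})q(\mathbf{w}_{t})}\ln p(\mathbf{z}_{t}|\mathbf{w}_{t},\mathbf{M})$, using the cyclic trace identity to rewrite $\mathbb{E}[\mathbf{w}_{t}^{\top}\mathbf{M}\mathbf{z}_{t}] = \tr(\mathbf{M}^{\top}\boldsymbol{\mu}_{w_{t}}\boldsymbol{\mu}_{z_{t}}^{\top})$ and $\mathbb{E}[\mathbf{w}_{t}^{\top}\mathbf{M}\mathbf{M}^{\top}\mathbf{w}_{t}] = \tr(\mathbf{M}^{\top}(\boldsymbol{\mu}_{w_{t}}\boldsymbol{\mu}_{w_{t}}^{\top}+\boldsymbol{\Sigma}_{w_{t}})\mathbf{M})$. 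The $\mathbf{M}$‑dependent part then collapses to $-\tfrac12\tr(\mathbf{M}^{\top}\mathbf{A}\mathbf{M}) + \tr(\mathbf{M}^{\top}\mathbf{B})$ with $\mathbf{A} = \mathbf{U}_{0}^{-1} + \sum_{t}(\boldsymbol{\mu}_{w_{t}}\boldsymbol{\mu}_{w_{t}}^{\top} + \boldsymbol{\Sigma}_{w_{t}})$ and $\mathbf{B} = \mathbf{U}_{0}^{-1}\mathbf{R}_{0} + \sum_{t}\boldsymbol{\mu}_{w_{t}}\boldsymbol{\mu}_{z_{t}}^{\top}$; matching against $-\tfrac12\tr((\mathbf{M}-\mathbf{R})^{\top}\mathbf{U}^{-1}(\mathbf{M}-\mathbf{R}))$ then yields $\mathbf{U} = \mathbf{A}^{-1}$, $\mathbf{R} = \mathbf{A}^{-1}\mathbf{B}$, and $\mathbf{V} = \mathbf{I}_{C}$, which are exactly the stated rules.

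The computations themselves are routine Gaussian algebra; the one point requiring a little care --- and the nearest thing to an obstacle --- is verifying that the posterior over $\mathbf{M}$ is still \emph{exactly} matrix‑variate Gaussian with column covariance unchanged at $\mathbf{I}_{C}$. This follows because the likelihood $p(\mathbf{z}_{t}|\mathbf{w}_{t},\mathbf{M})$ has identity covariance in $\mathbf{z}_{t}$ and depends on $\mathbf{M}$ only through $\mathbf{M}^{\top}\mathbf{w}_{t}$, so the quadratic form it contributes acts solely on the row index of $\mathbf{M}$: no term coupling the columns of $\mathbf{M}$ is ever generated, and the Kronecker structure $\mathbf{U}\kron\mathbf{V}$ of the covariance is preserved with $\mathbf{V}$ fixed. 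One should also note that $\mathbf{A}$ is positive definite (since $\mathbf{U}_{0}^{-1}$ is, plus positive semidefinite corrections), so $\mathbf{U}=\mathbf{A}^{-1}$ is a legitimate covariance, and that all the expectations are finite because every distribution involved is Gaussian. The three update blocks in the statement are then simply these two derivations written in the order (addresses, then memory) in which one iteration of coordinate ascent applies them.
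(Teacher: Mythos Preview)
Your proposal is correct and follows the same overall strategy as the paper: invoke the generic coordinate-ascent formulae from Appendix~B, substitute the Gaussian factors, evaluate the expectations, and read off the canonical parameters. The address update $q(\mathbf{w}_{t})$ is handled essentially identically in both.

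For the memory update $q(\mathbf{M})$ there is a minor but noteworthy stylistic difference. The paper vectorizes $\mathbf{M}$ and works throughout in the $KC$-dimensional space, using the vec trick and Kronecker-product identities (e.g.\ rewriting $\tr(\mathbf{M}\mathbf{M}^{\top}\boldsymbol{\Sigma}_{w_{t}})$ as $\vect(\mathbf{M})^{\top}(\mathbf{I}_{C}\kron\boldsymbol{\Sigma}_{w_{t}})\vect(\mathbf{M})$), accumulates the canonical parameters of a $KC$-variate Gaussian, and only at the end observes that the covariance factors as $\mathbf{I}_{C}\kron\mathbf{U}$ and hence corresponds to a matrix-variate Gaussian with unchanged column covariance. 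You instead stay in matrix form, collecting everything as $-\tfrac12\tr(\mathbf{M}^{\top}\mathbf{A}\mathbf{M}) + \tr(\mathbf{M}^{\top}\mathbf{B})$ and matching directly against the matrix-variate Gaussian log-density with $\mathbf{V}=\mathbf{I}_{C}$. Your route is a bit more economical and makes the preservation of $\mathbf{V}=\mathbf{I}_{C}$ immediate (the quadratic $\tr(\mathbf{M}^{\top}\mathbf{A}\mathbf{M})$ with $\mathbf{A}\in\mathbb{R}^{K\times K}$ visibly couples only rows); the paper's vectorized route is more explicit about exactly which Kronecker identities are being used and perhaps generalizes more transparently to non-identity $\mathbf{V}$. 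Either way the algebra and the conclusion are the same.
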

\begin{proof}
Suppose we have run the variational Bayesian updates $k$ times each. Per Appendix B, the variational optimum for $q(\mathbf{w}_{t})$ on the $(k+1)$-st iteration is given by
\begin{align}
q^{(k+1)}(\mathbf{w}_{t}) &\propto \exp \bigg{\{} \int q(\mathbf{z}_{t})q^{(k)}(\mathbf{M}) \ln p(\mathbf{z}_{t},\mathbf{w}_{t}|\mathbf{M}) d\mathbf{M}d\mathbf{z}_{t} \bigg{\}} \\
&= \exp \bigg{\{} \int q(\mathbf{z}_{t})q^{(k)}(\mathbf{M}) [\ln p(\mathbf{w}_{t}) + \ln p(\mathbf{z}_{t}|\mathbf{w}_{t},\mathbf{M})] d\mathbf{M}d\mathbf{z}_{t} \bigg{\}} \\
&= p(\mathbf{w}_{t}) \exp \bigg{\{} \int q(\mathbf{z}_{t})q^{(k)}(\mathbf{M}) \ln p(\mathbf{z}_{t}|\mathbf{w}_{t},\mathbf{M}) d\mathbf{M}d\mathbf{z}_{t} \bigg{\}} \\
&\propto p(\mathbf{w}_{t}) \exp \bigg{\{} \int q(\mathbf{z}_{t}) q^{(k)}(\mathbf{M}) \bigg{[} -\frac{1}{2} (\mathbf{z}_{t} - \mathbf{M}^{\top}\mathbf{w}_{t})^{\top} \mathbf{I}_{C}(\mathbf{z}_{t} - \mathbf{M}^{\top} \mathbf{w}_{t}) \bigg{]} d\mathbf{M} d\mathbf{z}_{t} \bigg{\}} 
\end{align}
The expectation of the above quadratic form, $(\mathbf{z}_{t} - \mathbf{M}^{\top}\mathbf{w}_{t})^{\top} \mathbf{I}_{C}(\mathbf{z}_{t} - \mathbf{M}^{\top} \mathbf{w}_{t})$, w.r.t. $q^{(k)}(\mathbf{M})$ is 
\begin{align}
&\, \tr(\mathbf{I}_{C} \cdot [\mathbf{I}_{C} \kron \mathbf{w}_{t}^{\top}\mathbf{U} \mathbf{w}_{t}]) + (\mathbf{z}_{t} - \mathbf{R}^{\top}\mathbf{w}_{t})^{\top}\mathbf{I}_{C}(\mathbf{z}_{t} - \mathbf{R}^{\top}\mathbf{w}_{t}) \\
&= \tr(\mathbf{I}_{C} \kron \mathbf{w}_{t}^{\top}\mathbf{U} \mathbf{w}_{t}) + (\mathbf{z}_{t} - \mathbf{R}^{\top}\mathbf{w}_{t})^{\top}(\mathbf{z}_{t} - \mathbf{R}^{\top}\mathbf{w}_{t}) \\
&= C \cdot \mathbf{w}_{t}^{\top}\mathbf{U}\mathbf{w}_{t} + \mathbf{z}_{t}^{\top}\mathbf{z}_{t} - 2\mathbf{z}_{t}^{\top}\mathbf{R}^{\top}\mathbf{w}_{t} + \mathbf{w}_{t}^{\top}\mathbf{R}\mathbf{R}^{\top}\mathbf{w}_{t} 
\end{align}
And we thus have
\begin{align}
q^{(k+1)}(\mathbf{w}_{t}) &\propto p(\mathbf{w}_{t}) \exp \bigg{\{} \int q(\mathbf{z}_{t}) \bigg{[} -\frac{1}{2} \big{(} C \cdot \mathbf{w}_{t}^{\top}\mathbf{U}\mathbf{w}_{t} + \mathbf{z}_{t}^{\top}\mathbf{z}_{t} - 2\mathbf{z}_{t}^{\top}\mathbf{R}^{\top}\mathbf{w}_{t} + \mathbf{w}_{t}^{\top}\mathbf{R}\mathbf{R}^{\top}\mathbf{w}_{t} \big{)} \bigg{]} d\mathbf{z}_{t} \bigg{\}} \\
&\propto p(\mathbf{w}_{t}) \exp \bigg{\{} \int q(\mathbf{z}_{t}) \bigg{[} -\frac{1}{2} \big{(} C \cdot \mathbf{w}_{t}^{\top}\mathbf{U}\mathbf{w}_{t}  - 2\mathbf{z}_{t}^{\top}\mathbf{R}^{\top}\mathbf{w}_{t} + \mathbf{w}_{t}^{\top}\mathbf{R}\mathbf{R}^{\top}\mathbf{w}_{t} \big{)} \bigg{]} d\mathbf{z}_{t} \bigg{\}} \\
&= p(\mathbf{w}_{t}) \exp \bigg{\{} -\frac{1}{2} \big{(} C \cdot \mathbf{w}_{t}^{\top}\mathbf{U}\mathbf{w}_{t}  - 2\boldsymbol{\mu}_{z_{t}}^{\top}\mathbf{R}^{\top}\mathbf{w}_{t} + \mathbf{w}_{t}^{\top}\mathbf{R}\mathbf{R}^{\top}\mathbf{w}_{t} \big{)} \bigg{\}} \\
&\propto \exp \bigg{\{} -\frac{1}{2} \mathbf{w}_{t}^{\top}\mathbf{I}_{K}\mathbf{w}_{t} \bigg{\}} \exp \bigg{\{} -\frac{1}{2} \big{(} C \cdot \mathbf{w}_{t}^{\top}\mathbf{U}\mathbf{w}_{t}  - 2\boldsymbol{\mu}_{z_{t}}^{\top}\mathbf{R}^{\top}\mathbf{w}_{t} + \mathbf{w}_{t}^{\top}\mathbf{R}\mathbf{R}^{\top}\mathbf{w}_{t} \big{)} \bigg{\}} \\
&= \exp \bigg{\{} -\frac{1}{2} \mathbf{w}_{t}^{\top}(\mathbf{R}\mathbf{R}^{\top} + \mathbf{I}_{K} + C\mathbf{U}) \mathbf{w}_{t} + \boldsymbol{\mu}_{z_{t}}^{\top}\mathbf{R}^{\top}\mathbf{w}_{t} \bigg{\}} 
\end{align}
By inspection, $q^{(k+1)}(\mathbf{w}_{t})$ has the form of a multivariate Gaussian $\exp\big{\{}-\frac{1}{2}\mathbf{w}_{t}^{\top}\boldsymbol{\Lambda}\mathbf{w}_{t} + \boldsymbol{\eta}^{\top}\mathbf{w}_{t} + a\big{\}}$ with canonical parameters 
\begin{align*}
\boldsymbol{\Lambda} &= (\mathbf{R}\mathbf{R}^{\top} + \mathbf{I}_{K} + C\mathbf{U}) \\
\boldsymbol{\eta} &= \mathbf{R}\boldsymbol{\mu}_{z_{t}}
\end{align*}
Converting from canonical parameters back to moment parameters, we obtain
\begin{align}
\boldsymbol{\mu}_{w_{t}} &= \boldsymbol{\Lambda}^{-1}\boldsymbol{\eta} = (\mathbf{R}\mathbf{R}^{\top} + \mathbf{I}_{K} + C\mathbf{U})^{-1}\mathbf{R}\boldsymbol{\mu}_{z_{t}} \\
\boldsymbol{\Sigma}_{w_{t}} &= \boldsymbol{\Lambda}^{-1} = (\mathbf{R}\mathbf{R}^{\top} + \mathbf{I}_{K} + C\mathbf{U})^{-1}
\end{align}

Thus, $q^{(k+1)}(\mathbf{w}_{t}) = \mathcal{N}_{K}(\mathbf{w}_{t}|\boldsymbol{\mu}_{w_{t}}, \boldsymbol{\Sigma}_{w_{t}})$ with $\boldsymbol{\mu}_{w}$ and $\boldsymbol{\Sigma}_{w}$ given by the moment parameters above.

Now for $q(\mathbf{M})$. Per Appendix B, the variational optimum for $q(\mathbf{M})$ on the $(k+1)$-st iteration is given by
\begin{align}
q^{(k+1)}(\mathbf{M}) &\propto p(\mathbf{M}) \prod_{t=1}^{T} \exp \bigg{\{} \int q(\mathbf{z}_{t})q^{(k+1)}(\mathbf{w}_{t}) \ln p(\mathbf{z}_{t},\mathbf{w}_{t}|\mathbf{M}) d\mathbf{w}_{t}d\mathbf{z}_{t} \bigg{\}} 
\end{align}

It will be useful to simplify each term in the above product for $q^{(k+1)}(\mathbf{M})$. We have 
\begin{align}
&\exp \bigg{\{} \int q(\mathbf{z}_{t})q^{(k+1)}(\mathbf{w}_{t}) \ln{p(\mathbf{z}_{t},\mathbf{w}_{t}|\mathbf{M})} d\mathbf{w}_{t}d\mathbf{z}_{t} \bigg{\}} \\
&= \exp \bigg{\{} \int q(\mathbf{z}_{t})q^{(k+1)}(\mathbf{w}_{t}) [\ln{p(\mathbf{w}_{t})} + \ln{p(\mathbf{z}_{t}|\mathbf{w}_{t},\mathbf{M})}] d\mathbf{w}_{t}d\mathbf{z}_{t} \bigg{\}} \\
&\propto \exp \bigg{\{} \int q(\mathbf{z}_{t})q^{(k+1)}(\mathbf{w}_{t}) [\ln{p(\mathbf{z}_{t}|\mathbf{w}_{t},\mathbf{M})}] d\mathbf{w}_{t}d\mathbf{z}_{t} \bigg{\}} \\
&\propto \exp \bigg{\{} \int q(\mathbf{z}_{t})q^{(k+1)}(\mathbf{w}_{t}) \bigg{[} -\frac{1}{2} (\mathbf{z}_{t} - \mathbf{M}^{\top}\mathbf{w}_{t})^{\top} \mathbf{I}_{C}^{-1} (\mathbf{z}_{t} - \mathbf{M}^{\top} \mathbf{w}_{t}) \bigg{]} d\mathbf{w}_{t}d\mathbf{z}_{t} \bigg{\}} \\
&= \exp \bigg{\{} \int q(\mathbf{z}_{t})q^{(k+1)}(\mathbf{w}_{t}) \bigg{[} -\frac{1}{2} \big{(} \mathbf{z}_{t}^{\top}\mathbf{z}_{t} - 2\mathbf{w}_{t}^{\top}\mathbf{M}\mathbf{z}_{t} + \mathbf{w}_{t}^{\top}\mathbf{M}\mathbf{M}^{\top}\mathbf{w}_{t} \big{)} \bigg{]} d\mathbf{w}_{t}d\mathbf{z}_{t} \bigg{\}} \\
&\propto \exp \bigg{\{} \int q(\mathbf{z}_{t})q^{(k+1)}(\mathbf{w}_{t}) \bigg{[} -\frac{1}{2} \mathbf{w}_{t}^{\top}\mathbf{M}\mathbf{M}^{\top}\mathbf{w}_{t} + \mathbf{w}_{t}^{\top}\mathbf{M}\mathbf{z}_{t} \bigg{]} d\mathbf{w}_{t}d\mathbf{z}_{t} \bigg{\}}.
\end{align}

The expectation of the above quadratic form, $\mathbf{w}_{t}^{\top}\mathbf{M}\mathbf{M}^{\top}\mathbf{w}_{t}$ w.r.t. $q^{(k+1)}(\mathbf{w}_{t})$ is
\begin{align}
&\, \tr(\mathbf{M}\mathbf{M}^{\top}\boldsymbol{\Sigma}_{w_{t}}) + \boldsymbol{\mu}_{w_{t}}^{\top}\mathbf{M}\mathbf{M}^{\top}\boldsymbol{\mu}_{w_{t}}.
\end{align}
Let $\mathbf{L}_{w_{t}}$ denote the Cholesky decomposition of $\boldsymbol{\Sigma}_{w_{t}}$. Using the cyclic invariance property of traces, the identity $\tr(A^{\top}B) = \vect(B)^{\top}\vect(A)$, and the vec trick, $\vect(AXB) = (B^{\top}\kron A)\vect(X)$, the first term equals 
\begin{align}
&\, \tr(\mathbf{M}\mathbf{M}^{\top}\boldsymbol{\Sigma}_{w_{t}}) \\
&= \tr(\mathbf{M}\mathbf{M}^{\top}\mathbf{L}_{w_{t}}\mathbf{L}_{w_{t}}^{\top}) \\
&= \tr(\mathbf{M}^{\top}\mathbf{L}_{w_{t}}\mathbf{L}_{w_{t}}^{\top}\mathbf{M}) \\
&= \tr([\mathbf{L}_{w_{t}}^{\top}\mathbf{M}]^{\top}\mathbf{L}_{w_{t}}^{\top}\mathbf{M}) \\
&= \vect(\mathbf{L}_{w_{t}}^{\top}\mathbf{M})^{\top}\vect(\mathbf{L}_{w_{t}}^{\top}\mathbf{M}) \\
&= \bigg{[}(\mathbf{I}_{C} \kron \mathbf{L}_{w_{t}}^{\top})\vect(\mathbf{M})\bigg{]}^{\top} \bigg{[}(\mathbf{I}_{C} \kron \mathbf{L}_{w_{t}}^{\top})\vect(\mathbf{M})\bigg{]} \\
&= \vect(\mathbf{M})^{\top}(\mathbf{I}_{C} \kron \mathbf{L}_{w_{t}}^{\top})^{\top} (\mathbf{I}_{C} \kron \mathbf{L}_{w_{t}}^{\top}) \vect(\mathbf{M}) \\
&= \vect(\mathbf{M})^{\top}(\mathbf{I}_{C} \kron \mathbf{L}_{w_{t}}) (\mathbf{I}_{C} \kron \mathbf{L}_{w_{t}}^{\top}) \vect(\mathbf{M}) \\
&= \vect(\mathbf{M})^{\top}(\mathbf{I}_{C} \kron \mathbf{L}_{w_{t}}\mathbf{L}_{w_{t}}^{\top}) \vect(\mathbf{M}) \\
&= \vect(\mathbf{M})^{\top}(\mathbf{I}_{C} \kron \boldsymbol{\Sigma}_{w_{t}}) \vect(\mathbf{M}).
\end{align}
Likewise, using the same properties, along with the trivial identity $\tr(a) = a$ for scalars, the second term equals
\begin{align}
&\, \boldsymbol{\mu}_{w_{t}}^{\top}\mathbf{M}\mathbf{M}^{\top}\boldsymbol{\mu}_{w_{t}} \\
&= \tr(\boldsymbol{\mu}_{w_{t}}^{\top}\mathbf{M}\mathbf{M}^{\top}\boldsymbol{\mu}_{w_{t}}) \\
&= \tr(\mathbf{M}^{\top}\boldsymbol{\mu}_{w_{t}}\boldsymbol{\mu}_{w_{t}}^{\top}\mathbf{M}) \\
&= \tr([\boldsymbol{\mu}_{w_{t}}^{\top}\mathbf{M}]^{\top}\boldsymbol{\mu}_{w_{t}}^{\top}\mathbf{M}) \\
&= \vect(\boldsymbol{\mu}_{w_{t}}^{\top}\mathbf{M})^{\top}\vect(\boldsymbol{\mu}_{w_{t}}^{\top}\mathbf{M}) \\
&= \bigg{[}(\mathbf{I}_{C} \kron \boldsymbol{\mu}_{w_{t}}^{\top})\vect(\mathbf{M})\bigg{]}^{\top} \bigg{[}(\mathbf{I}_{C} \kron \boldsymbol{\mu}_{w_{t}}^{\top})\vect(\mathbf{M})\bigg{]} \\
&= \vect(\mathbf{M})^{\top} (\mathbf{I}_{C} \kron \boldsymbol{\mu}_{w_{t}}^{\top})^{\top} (\mathbf{I}_{C} \kron \boldsymbol{\mu}_{w_{t}}^{\top}) \vect(\mathbf{M}) \\
&= \vect(\mathbf{M})^{\top} (\mathbf{I}_{C} \kron \boldsymbol{\mu}_{w_{t}}) (\mathbf{I}_{C} \kron \boldsymbol{\mu}_{w_{t}}^{\top}) \vect(\mathbf{M}) \\
&= \vect(\mathbf{M})^{\top} (\mathbf{I}_{C} \kron \boldsymbol{\mu}_{w_{t}}\boldsymbol{\mu}_{w_{t}}^{\top}) \vect(\mathbf{M}).
\end{align}
The expectation of the quadratic form $\mathbf{w}_{t}^{\top}\mathbf{M}\mathbf{M}^{\top}\mathbf{w}_{t}$ may therefore be written as
\begin{align}
&\, \vect(\mathbf{M})^{\top} (\mathbf{I}_{C} \kron \boldsymbol{\mu}_{w_{t}}\boldsymbol{\mu}_{w_{t}}^{\top}) \vect(\mathbf{M}) + \vect(\mathbf{M})^{\top} (\mathbf{I}_{C} \kron \boldsymbol{\Sigma}_{w_{t}}) \vect(\mathbf{M}) \\
&=  \vect(\mathbf{M})^{\top} (\mathbf{I}_{C} \kron (\boldsymbol{\mu}_{w_{t}}\boldsymbol{\mu}_{w_{t}}^{\top} + \boldsymbol{\Sigma}_{w_{t}})) \vect(\mathbf{M}).
\end{align}
Finally, the expectation of the other term in the integrand, $\mathbf{w}_{t}^{\top} \mathbf{M} \mathbf{z}_{t}$, w.r.t. $q^{(k+1)}(\mathbf{w}_{t})$ is 
\begin{align}
&\, \boldsymbol{\mu}_{w_{t}}^{\top} \mathbf{M} \mathbf{z}_{t} \\
&= \tr(\boldsymbol{\mu}_{w_{t}}^{\top} \mathbf{M} \mathbf{z}_{t}) \\
&= \tr( \mathbf{z}_{t} \boldsymbol{\mu}_{w_{t}}^{\top} \mathbf{M}) \\
&= \tr( [\boldsymbol{\mu}_{w_{t}} \mathbf{z}_{t}^{\top}]^{\top} \mathbf{M}) \\
&= \vect(\mathbf{M})^{\top} \vect(\boldsymbol{\mu}_{w_{t}}\mathbf{z}_{t}^{\top})  \\
&= \vect(\boldsymbol{\mu}_{w_{t}}\mathbf{z}_{t}^{\top})^{\top} \vect(\mathbf{M}), 
\end{align}
where we have used the identity $\tr(A^{\top}B) = \vect(B)^{\top}\vect(A)$ in the fifth line, and used the fact that a scalar is equal to its transpose on the sixth line.

The density $q^{(k+1)}(\mathbf{M})$ is therefore a product of $p(\mathbf{M})$ and terms of the form  
\begin{align}
&\exp \bigg{\{} \int q(\mathbf{z}_{t})q^{(k+1)}(\mathbf{w}_{t}) \ln{p(\mathbf{z}_{t},\mathbf{w}_{t}|\mathbf{M})} d\mathbf{w}_{t}d\mathbf{z}_{t} \bigg{\}} \\
&\propto \exp \bigg{\{} -\frac{1}{2} \vect(\mathbf{M})^{\top} (\mathbf{I}_{C} \kron (\boldsymbol{\mu}_{w_{t}}\boldsymbol{\mu}_{w_{t}}^{\top} + \boldsymbol{\Sigma}_{w_{t}})) \vect(\mathbf{M}) + \vect(\boldsymbol{\mu}_{w_{t}}\boldsymbol{\mu}_{z_{t}}^{\top})^{\top} \vect(\mathbf{M}) \bigg{\}} 
\end{align}
Each of these terms has the form of a multivariate Gaussian over $\vect(\mathbf{M})$, written in canonical form. Furthermore, $p(\mathbf{M})$ is a matrix-variate Gaussian with mean $\mathbf{R}_{0}$, row covariance $\mathbf{U}_{0}$ and column covariance $\mathbf{I}_{C}$. By definition \citep{GuptaNagar1999}, the density $p(\mathbf{M})$ is equal to a multivariate Gaussian density over the vectorized random variable $\vect(\mathbf{M})$,
\begin{align}
p(\mathbf{M}) = \mathcal{N}_{KC}(\vect(\mathbf{M})|\boldsymbol{\mu}=\vect(\mathbf{R}_{0}), \boldsymbol{\Sigma}=\mathbf{I}_{C}\kron\mathbf{U}_{0}) 
\end{align}

By rewriting $p(\mathbf{M})$ in canonical form, we can combine the canonical parameters of all terms in the product for $q^{(k+1)}(\mathbf{M})$ by simple addition. We have

\begin{align}
p(\mathbf{M}) &\propto \exp \bigg{\{} -\frac{1}{2} \vect(\mathbf{M})^{\top} \boldsymbol{\Lambda} \vect(\mathbf{M}) + \boldsymbol{\eta}^{\top}\vect(\mathbf{M}) \bigg{\}} \\
&= \exp \bigg{\{} -\frac{1}{2} \vect(\mathbf{M})^{\top} (\mathbf{I}_{C} \kron \mathbf{U}_{0})^{-1} \vect(\mathbf{M}) + [(\mathbf{I}_{C} \kron \mathbf{U}_{0})^{-1}\vect(\mathbf{R}_{0})]^{\top}\vect(\mathbf{M}) \bigg{\}} \\
&= \exp \bigg{\{} -\frac{1}{2} \vect(\mathbf{M})^{\top} (\mathbf{I}_{C} \kron \mathbf{U}_{0}^{-1}) \vect(\mathbf{M}) + [(\mathbf{I}_{C} \kron \mathbf{U}_{0}^{-1})\vect(\mathbf{R}_{0})]^{\top}\vect(\mathbf{M}) \bigg{\}} \\
&= \exp \bigg{\{} -\frac{1}{2} \vect(\mathbf{M})^{\top} (\mathbf{I}_{C} \kron \mathbf{U}_{0}^{-1}) \vect(\mathbf{M}) + [\vect(\mathbf{U}_{0}^{-1}\mathbf{R}_{0}\mathbf{I}_{C})]^{\top}\vect(\mathbf{M}) \bigg{\}} \\
&= \exp \bigg{\{} -\frac{1}{2} \vect(\mathbf{M})^{\top} (\mathbf{I}_{C} \kron \mathbf{U}_{0}^{-1}) \vect(\mathbf{M}) + \vect(\mathbf{U}_{0}^{-1}\mathbf{R}_{0})^{\top}\vect(\mathbf{M}) \bigg{\}} 
\end{align}

Thus, 

\begin{equation}
\begin{split}
q^{(k+1)}(\mathbf{M}) \propto &\exp \bigg{\{} -\frac{1}{2} \vect(\mathbf{M})^{\top} (\mathbf{I}_{C} \kron \mathbf{U}_{0}^{-1}) \vect(\mathbf{M}) + \vect(\mathbf{U}_{0}^{-1}\mathbf{R}_{0})^{\top}\vect(\mathbf{M}) \bigg{\}} \\
&\cdot\prod_{t=1}^{T} \exp \bigg{\{} -\frac{1}{2} \vect(\mathbf{M})^{\top} (\mathbf{I}_{C} \kron (\boldsymbol{\mu}_{w_{t}}\boldsymbol{\mu}_{w_{t}}^{\top} + \boldsymbol{\Sigma}_{w_{t}})) \vect(\mathbf{M}) + \vect(\boldsymbol{\mu}_{w_{t}}\boldsymbol{\mu}_{z_{t}}^{\top})^{\top} \vect(\mathbf{M}) \bigg{\}}
\end{split}
\end{equation}

Consequently, $q^{(k+1)}(\mathbf{M})$ simplifies to a multivariate Gaussian over $\vect(\mathbf{M})$ with canonical parameters 

\begin{align}
\boldsymbol{\Lambda} &= \mathbf{I}_{C} \kron \bigg{(} \mathbf{U}_{0}^{-1} + \sum_{t=1}^{T} (\boldsymbol{\mu}_{w_{t}}\boldsymbol{\mu}_{w_{t}}^{\top} + \boldsymbol{\Sigma}_{w_{t}}) \bigg{)} \\
\boldsymbol{\eta} &= \vect\bigg{(}\mathbf{U}_{0}^{-1}\mathbf{R}_{0} + \sum_{t=1}^{T} \boldsymbol{\mu}_{w_{t}}\boldsymbol{\mu}_{z_{t}}^{\top}\bigg{)} 
\end{align}

Consequently, the density $q^{(k+1)}(\mathbf{M})$ can be expressed as a multivariate Gaussian over $\vect(\mathbf{M})$ with moment parameters
\begin{align}
\boldsymbol{\mu} &= \boldsymbol{\Lambda}^{-1}\boldsymbol{\eta} \\
&= \bigg{[}\mathbf{I}_{C} \kron \bigg{(} \mathbf{U}_{0}^{-1} + \sum_{t=1}^{T} (\boldsymbol{\mu}_{w_{t}}\boldsymbol{\mu}_{w_{t}}^{\top} + \boldsymbol{\Sigma}_{w_{t}}) \bigg{)}\bigg{]}^{-1}\vect\bigg{(}\mathbf{U}_{0}^{-1}\mathbf{R}_{0} + \sum_{t=1}^{T} \boldsymbol{\mu}_{w_{t}}\boldsymbol{\mu}_{z_{t}}^{\top}\bigg{)} \\
&= \bigg{[}\mathbf{I}_{C} \kron \bigg{(} \mathbf{U}_{0}^{-1} + \sum_{t=1}^{T} (\boldsymbol{\mu}_{w_{t}}\boldsymbol{\mu}_{w_{t}}^{\top} + \boldsymbol{\Sigma}_{w_{t}}) \bigg{)}^{-1}\bigg{]}\vect\bigg{(}\mathbf{U}_{0}^{-1}\mathbf{R}_{0} + \sum_{t=1}^{T} \boldsymbol{\mu}_{w_{t}}\boldsymbol{\mu}_{z_{t}}^{\top}\bigg{)} \\
&= \vect\bigg{(} \bigg{(} \mathbf{U}_{0}^{-1} + \sum_{t=1}^{T} (\boldsymbol{\mu}_{w_{t}}\boldsymbol{\mu}_{w_{t}}^{\top} + \boldsymbol{\Sigma}_{w_{t}}) \bigg{)}^{-1} \bigg{(}\mathbf{U}_{0}^{-1}\mathbf{R}_{0} + \sum_{t=1}^{T} \boldsymbol{\mu}_{w_{t}}\boldsymbol{\mu}_{z_{t}}^{\top}\bigg{)} \bigg{)}, 
\end{align}
where the last line follows from the vec trick, $\vect(AXB) = (B^{\top} \kron A) \vect(X)$, and
\begin{align}
\boldsymbol{\Sigma} &= \boldsymbol{\Lambda}^{-1} \\
&= \bigg{[}\mathbf{I}_{C} \kron \bigg{(} \mathbf{U}_{0}^{-1} + \sum_{t=1}^{T} (\boldsymbol{\mu}_{w_{t}}\boldsymbol{\mu}_{w_{t}}^{\top} + \boldsymbol{\Sigma}_{w_{t}}) \bigg{)}\bigg{]}^{-1} \\
&= \mathbf{I}_{C} \kron \bigg{(} \mathbf{U}_{0}^{-1} + \sum_{t=1}^{T} (\boldsymbol{\mu}_{w_{t}}\boldsymbol{\mu}_{w_{t}}^{\top} + \boldsymbol{\Sigma}_{w_{t}}) \bigg{)}^{-1}.
\end{align}

Since $q^{(k+1)}(\mathbf{M})$ can be written as a multivariate normal distribution over $\vect(\mathbf{M})$ whose covariance factorizes via the Kronecker product, $q^{(k+1)}(\mathbf{M})$ can be written as a matrix-variate Gaussian, with mean, row covariance, and column covariance given by
\begin{align}
\mathbf{R} &= \bigg{(} \mathbf{U}_{0}^{-1} + \sum_{t=1}^{T} (\boldsymbol{\mu}_{w_{t}}\boldsymbol{\mu}_{w_{t}}^{\top} + \boldsymbol{\Sigma}_{w_{t}}) \bigg{)}^{-1} \bigg{(}\mathbf{U}_{0}^{-1}\mathbf{R}_{0} + \sum_{t=1}^{T} \boldsymbol{\mu}_{w_{t}}\boldsymbol{\mu}_{z_{t}}^{\top}\bigg{)} \\
\mathbf{U} &= \bigg{(} \mathbf{U}_{0}^{-1} + \sum_{t=1}^{T} (\boldsymbol{\mu}_{w_{t}}\boldsymbol{\mu}_{w_{t}}^{\top} + \boldsymbol{\Sigma}_{w_{t}}) \bigg{)}^{-1} \\
\mathbf{V} &= \mathbf{I}_{C}.
\end{align}

An inductive argument therefore shows that the parametric families of $q(\mathbf{w})$ and $q(\mathbf{M})$ are closed under iterations of the update equations given by the variational Bayesian EM algorithm. This concludes the derivation.
\end{proof}

This derivation was made possible by the fact that the latent space model is a type of linear Gaussian graphical model (LGGM). LGGMs also make an appearance in KFAC \citep{MartensGrosse2015}, and have been successfully used for Bayesian matrix factorization \citep{Beal2003}. Here, we use the conditional conjugacy properties of $p(\mathbf{M}|\mathbf{W},\mathbf{Z})$ to compute an approximate posterior distribution over a large latent variable $\mathbf{M}$ analytically (we found mean-field variational Bayes more robust than sampling approach of \citet{Wu2018a}). We optimize the smaller distributions $q(\mathbf{w}_{t})$ using mean-field variational Bayes as well, such that they optimize the ELBO analytically. The conditional conjugacy properties of $p(\mathbf{M}|\mathbf{W},\mathbf{Z})$ also give it a generative interpretation as a layer of model parameters, similar to LEO \citep{Rusu2019}, but without the need for an aggregate embedding to be computed beforehand, and without the need for any gradient-based optimization in latent space during inner-loop adaptation; our entire update loop is based on mean-field variational Bayes.

\subsection{Variational Bayesian Update Rules: Categorical Addresses}

For ease of exposition:
\begin{itemize}
\item Let $\Cat_{K}$ represent a probability distribution over one-hot $K$-vectors (i.e., the standard basis in $\mathbb{R}^{K}$), rather than $K$ discrete integers. 
\item Let $\DiagPart(\mathbf{A})$ denote the vector whose elements are the diagonal entries of a square matrix $\mathbf{A}$. 
\item Let $\Diag(\mathbf{v})$ denote a diagonal matrix formed from a vector $\mathbf{v}$. 
\end{itemize}

\begin{theorem}
Consider a generative model of the form 
\begin{align*}
p(\mathbf{X},\mathbf{Z},\mathbf{W},\mathbf{M}) &= p(\mathbf{M})\prod_{t=1}^{T}p(\mathbf{w}_{t})p(\mathbf{z}_{t}|\mathbf{w}_{t},\mathbf{M})p(\mathbf{x}_{t}|\mathbf{z}_{t}) 
\end{align*} 
and an inference model of the form $q(\mathbf{Z}, \mathbf{W}, \mathbf{M}) = q(\mathbf{Z})q(\mathbf{W})q(\mathbf{M})$. Assume that $q(\mathbf{Z}) = \prod_{t=1}^{T}q(\mathbf{z}_{t})$ and that each $q(\mathbf{z}_{t})$ is a multivariate Gaussian whose variational parameters are supplied by a recognition model. 
Suppose that
\begin{align*}
p(\mathbf{M}) &= \mathcal{MN}_{K \times C}(\mathbf{M}|\mathbf{R}=\mathbf{R}_{0}, \mathbf{U}=\mathbf{U}_{0}, \mathbf{V}=\mathbf{I}_{C}) \\
p(\mathbf{w}_{t}) &= \Cat_{K}(\mathbf{w}_{t}|\boldsymbol{\theta}=\frac{1}{K}\cdot\boldsymbol{1}_{K}) \\ 
p(\mathbf{z}_{t}|\mathbf{w}_{t},\mathbf{M}) &= \mathcal{N}_{C}(\mathbf{z}_{t}|\boldsymbol{\mu}=\mathbf{M}^{\top}\mathbf{w}_{t}, \boldsymbol{\Sigma}=\mathbf{I}_{C}) 
\end{align*}
Then the variational Bayesian update rules given in Appendix B 
simplify to:
\begin{align}
\boldsymbol{\theta}_{w_{t}} &\leftarrow \Softmax((-1/2)\cdot\DiagPart(\mathbf{R}\mathbf{R}^{\top} + C\mathbf{U}) + \mathbf{R}\boldsymbol{\mu}_{z_{t}}) \\
q^{(k+1)}(\mathbf{w}_{t}) &\leftarrow \Cat_{K}(\mathbf{w}_{t}|\boldsymbol{\theta}_{w_{t}}) \\
\,\\
\mathbf{R} &\leftarrow \bigg{(}\mathbf{U}_{0}^{-1} + \sum_{t=1}^{T} \Diag(\boldsymbol{\theta}_{w_{t}}) \bigg{)}^{-1}\bigg{(}\mathbf{U}_{0}^{-1}\mathbf{R}_{0} + \sum_{t=1}^{T}\boldsymbol{\theta}_{w_{t}}\boldsymbol{\mu}_{z_{t}}^{\top}\bigg{)} \\
\mathbf{U} &\leftarrow \bigg{(}\mathbf{U}_{0}^{-1} + \sum_{t=1}^{T} \Diag(\boldsymbol{\theta}_{w_{t}}) \bigg{)}^{-1} \\
\mathbf{V} &\leftarrow \mathbf{I}_{C} \\
\, \\
q^{(k+1)}(\mathbf{M}) &\leftarrow \mathcal{MN}_{K \times C}(\mathbf{M}|\mathbf{R}, \mathbf{U}, \mathbf{V}) 
\end{align}
\end{theorem}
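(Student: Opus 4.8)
The plan is to mirror, almost step for step, the proof of Theorem 1 for Gaussian addresses, replacing the Gaussian variational family for $q(\mathbf{w}_t)$ with a categorical one and exploiting two elementary facts about one-hot vectors. First I would invoke the generic address update from Appendix B: on the $(k+1)$-st sweep, $q^{(k+1)}(\mathbf{w}_t) \propto \exp\{\int q(\mathbf{z}_t)q^{(k)}(\mathbf{M})[\ln p(\mathbf{w}_t) + \ln p(\mathbf{z}_t|\mathbf{w}_t,\mathbf{M})]\,d\mathbf{M}\,d\mathbf{z}_t\}$. Because $p(\mathbf{w}_t) = \Cat_K(\mathbf{w}_t|\tfrac1K\boldsymbol{1}_K)$ is uniform, $\ln p(\mathbf{w}_t)$ is constant in $\mathbf{w}_t$ and gets absorbed into the proportionality constant. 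The Gaussian likelihood contributes exactly the same quadratic form $(\mathbf{z}_t - \mathbf{M}^{\top}\mathbf{w}_t)^{\top}(\mathbf{z}_t - \mathbf{M}^{\top}\mathbf{w}_t)$ as in Theorem 1, whose expectation under $q^{(k)}(\mathbf{M}) = \mathcal{MN}_{K\times C}(\mathbf{M}|\mathbf{R},\mathbf{U},\mathbf{I}_C)$ is $C\,\mathbf{w}_t^{\top}\mathbf{U}\mathbf{w}_t + \mathbf{z}_t^{\top}\mathbf{z}_t - 2\mathbf{z}_t^{\top}\mathbf{R}^{\top}\mathbf{w}_t + \mathbf{w}_t^{\top}\mathbf{R}\mathbf{R}^{\top}\mathbf{w}_t$.

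The key step is the identity $\mathbf{w}_t^{\top}\mathbf{A}\mathbf{w}_t = \mathbf{w}_t^{\top}\DiagPart(\mathbf{A})$, valid for any square $\mathbf{A}$ whenever $\mathbf{w}_t$ is one-hot. Applying it to $\mathbf{A} = \mathbf{U}$ and $\mathbf{A} = \mathbf{R}\mathbf{R}^{\top}$, then taking the remaining expectation over $q(\mathbf{z}_t)$ (only the mean $\boldsymbol{\mu}_{z_t}$ survives the cross term, and $\mathbf{z}_t^{\top}\mathbf{z}_t$ is constant in $\mathbf{w}_t$), collapses the exponent to the affine function $-\tfrac12\mathbf{w}_t^{\top}\DiagPart(\mathbf{R}\mathbf{R}^{\top} + C\mathbf{U}) + \mathbf{w}_t^{\top}\mathbf{R}\boldsymbol{\mu}_{z_t}$. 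A probability mass function on one-hot vectors proportional to $\exp\{\mathbf{w}_t^{\top}\boldsymbol{\ell}\}$ is precisely $\Cat_K$ with parameter $\Softmax(\boldsymbol{\ell})$, which yields the stated update for $\boldsymbol{\theta}_{w_t}$.

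For the memory update I would reuse verbatim the vectorization argument from the proof of Theorem 1, which writes $q^{(k+1)}(\mathbf{M})$ as $p(\mathbf{M})$ times a product of Gaussians in $\vect(\mathbf{M})$ whose canonical parameters are built from the first and second moments of $q^{(k+1)}(\mathbf{w}_t)$. The only change is that for the categorical family $\mathbb{E}[\mathbf{w}_t] = \boldsymbol{\theta}_{w_t}$, and since $\mathbf{w}_t\mathbf{w}_t^{\top} = \Diag(\mathbf{w}_t)$ for one-hot vectors, $\mathbb{E}[\mathbf{w}_t\mathbf{w}_t^{\top}] = \Diag(\boldsymbol{\theta}_{w_t})$; thus every occurrence of $\boldsymbol{\mu}_{w_t}\boldsymbol{\mu}_{w_t}^{\top} + \boldsymbol{\Sigma}_{w_t}$ is replaced by $\Diag(\boldsymbol{\theta}_{w_t})$ and every occurrence of $\boldsymbol{\mu}_{w_t}$ by $\boldsymbol{\theta}_{w_t}$. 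Adding canonical parameters across the product, rewriting $p(\mathbf{M})$ in canonical form via its Kronecker-structured covariance, converting back to moment form, and recognizing the resulting vectorized Gaussian as a matrix-variate Gaussian then gives the stated $\mathbf{R}$, $\mathbf{U}$, $\mathbf{V} = \mathbf{I}_C$. A short induction closes the argument: the updates keep $q(\mathbf{w}_t)$ in $\Cat_K$ and $q(\mathbf{M})$ matrix-variate Gaussian with column covariance $\mathbf{I}_C$, so the formulas are closed under iteration. I do not anticipate a real obstacle; the only care needed is bookkeeping of the one-hot identities and verifying that each term dropped into a proportionality constant is genuinely independent of $\mathbf{w}_t$ (respectively $\mathbf{M}$).
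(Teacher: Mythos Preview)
Your proposal is correct and follows essentially the same route as the paper's proof: both invoke the generic address/memory updates from Appendix~B, compute the same expected quadratic form under $q^{(k)}(\mathbf{M})$, exploit the uniform prior and one-hot structure to reduce the address update to a softmax over $\DiagPart(\mathbf{R}\mathbf{R}^{\top}+C\mathbf{U})$, and then reuse the Theorem~1 vectorization argument for $q(\mathbf{M})$ with the second-moment substitution $\mathbb{E}[\mathbf{w}_t\mathbf{w}_t^{\top}]=\Diag(\boldsymbol{\theta}_{w_t})$. The only cosmetic difference is that you invoke the identity $\mathbf{w}_t^{\top}\mathbf{A}\mathbf{w}_t=\mathbf{w}_t^{\top}\DiagPart(\mathbf{A})$ directly, whereas the paper evaluates the quadratic form at each basis vector $\mathbf{e}_i$; these are equivalent.
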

\begin{proof}
Suppose we have run the variational Bayesian updates $k$ times each. Per Appendix B, the variational optimum for $q(\mathbf{w}_{t})$ on the $(k+1)$-st iteration is given by
\begin{align}
q^{(k+1)}(\mathbf{w}_{t}) &\propto \exp \bigg{\{} \int q(\mathbf{z}_{t})q^{(k)}(\mathbf{M}) \ln p(\mathbf{z}_{t},\mathbf{w}_{t}|\mathbf{M}) d\mathbf{M}d\mathbf{z}_{t} \bigg{\}} \\
&= \exp \bigg{\{} \int q(\mathbf{z}_{t})q^{(k)}(\mathbf{M}) [\ln p(\mathbf{w}_{t}) + \ln p(\mathbf{z}_{t}|\mathbf{w}_{t},\mathbf{M})] d\mathbf{M}d\mathbf{z}_{t} \bigg{\}} \\
&= p(\mathbf{w}_{t}) \exp \bigg{\{} \int q(\mathbf{z}_{t})q^{(k)}(\mathbf{M}) \ln p(\mathbf{z}_{t}|\mathbf{w}_{t},\mathbf{M}) d\mathbf{M}d\mathbf{z}_{t} \bigg{\}} \\
&\propto p(\mathbf{w}_{t}) \exp \bigg{\{} \int q(\mathbf{z}_{t}) q^{(k)}(\mathbf{M}) \bigg{[} -\frac{1}{2} (\mathbf{z}_{t} - \mathbf{M}^{\top}\mathbf{w}_{t})^{\top} \mathbf{I}_{C}(\mathbf{z}_{t} - \mathbf{M}^{\top} \mathbf{w}_{t}) \bigg{]} d\mathbf{M} d\mathbf{z}_{t} \bigg{\}} 
\end{align}
The expectation of the above quadratic form, $(\mathbf{z}_{t} - \mathbf{M}^{\top}\mathbf{w}_{t})^{\top} \mathbf{I}_{C}(\mathbf{z}_{t} - \mathbf{M}^{\top} \mathbf{w}_{t})$, w.r.t. $q^{(k)}(\mathbf{M})$ is 
\begin{align}
&\, \tr(\mathbf{I}_{C} \cdot [\mathbf{I}_{C} \kron \mathbf{w}_{t}^{\top}\mathbf{U} \mathbf{w}_{t}]) + (\mathbf{z}_{t} - \mathbf{R}^{\top}\mathbf{w}_{t})^{\top}\mathbf{I}_{C}(\mathbf{z}_{t} - \mathbf{R}^{\top}\mathbf{w}_{t}) \\
&= \tr(\mathbf{I}_{C} \kron \mathbf{w}_{t}^{\top}\mathbf{U} \mathbf{w}_{t}) + (\mathbf{z}_{t} - \mathbf{R}^{\top}\mathbf{w}_{t})^{\top}(\mathbf{z}_{t} - \mathbf{R}^{\top}\mathbf{w}_{t}) \\
&= C \cdot \mathbf{w}_{t}^{\top}\mathbf{U}\mathbf{w}_{t} + \mathbf{z}_{t}^{\top}\mathbf{z}_{t} - 2\mathbf{z}_{t}^{\top}\mathbf{R}^{\top}\mathbf{w}_{t} + \mathbf{w}_{t}^{\top}\mathbf{R}\mathbf{R}^{\top}\mathbf{w}_{t} 
\end{align}
Due to a uniform prior for $p(\mathbf{w}_{t})$, the value of $p(\mathbf{w}_{t})$ does not vary with $\mathbf{w}_{t}$. Thus, 
\begin{align}
q^{(k+1)}(\mathbf{w}_{t}) &\propto p(\mathbf{w}_{t}) \exp \bigg{\{} \int q(\mathbf{z}_{t}) \bigg{[} -\frac{1}{2} \big{(} C \cdot \mathbf{w}_{t}^{\top}\mathbf{U}\mathbf{w}_{t} + \mathbf{z}_{t}^{\top}\mathbf{z}_{t} - 2\mathbf{z}_{t}^{\top}\mathbf{R}^{\top}\mathbf{w}_{t} + \mathbf{w}_{t}^{\top}\mathbf{R}\mathbf{R}^{\top}\mathbf{w}_{t} \big{)} \bigg{]} d\mathbf{z}_{t} \bigg{\}} \\
&\propto p(\mathbf{w}_{t}) \exp \bigg{\{} \int q(\mathbf{z}_{t}) \bigg{[} -\frac{1}{2} \big{(} C \cdot \mathbf{w}_{t}^{\top}\mathbf{U}\mathbf{w}_{t}  - 2\mathbf{z}_{t}^{\top}\mathbf{R}^{\top}\mathbf{w}_{t} + \mathbf{w}_{t}^{\top}\mathbf{R}\mathbf{R}^{\top}\mathbf{w}_{t} \big{)} \bigg{]} d\mathbf{z}_{t} \bigg{\}} \\
&= p(\mathbf{w}_{t}) \exp \bigg{\{} -\frac{1}{2} \big{(} C \cdot \mathbf{w}_{t}^{\top}\mathbf{U}\mathbf{w}_{t}  - 2\boldsymbol{\mu}_{z_{t}}^{\top}\mathbf{R}^{\top}\mathbf{w}_{t} + \mathbf{w}_{t}^{\top}\mathbf{R}\mathbf{R}^{\top}\mathbf{w}_{t} \big{)} \bigg{\}} \\
&\propto \exp \bigg{\{} -\frac{1}{2} \big{(} C \cdot \mathbf{w}_{t}^{\top}\mathbf{U}\mathbf{w}_{t}  - 2\boldsymbol{\mu}_{z_{t}}^{\top}\mathbf{R}^{\top}\mathbf{w}_{t} + \mathbf{w}_{t}^{\top}\mathbf{R}\mathbf{R}^{\top}\mathbf{w}_{t} \big{)} \bigg{\}} \\
&= \exp \bigg{\{} -\frac{1}{2} \mathbf{w}_{t}^{\top}(\mathbf{R}\mathbf{R}^{\top} + C\mathbf{U}) \mathbf{w}_{t} + \boldsymbol{\mu}_{z_{t}}^{\top}\mathbf{R}^{\top}\mathbf{w}_{t} \bigg{\}} 
\end{align}

Evaluating the result for any choice $\mathbf{w}_{t} = \mathbf{e}_{i}$, we have:
\begin{align}
q^{(k+1)}(\mathbf{w}_{t}=\mathbf{e}_{i}) &\propto \exp \bigg{\{} -\frac{1}{2} \mathbf{e}_{i}^{\top}(\mathbf{R}\mathbf{R}^{\top} + C\mathbf{U})\mathbf{e}_{i} + \boldsymbol{\mu}_{z_{t}}^{\top}\mathbf{R}^{\top}\mathbf{e}_{i} \bigg{\}} \\
&= \exp \bigg{\{} -\frac{1}{2} [\mathbf{R}\mathbf{R}^{\top} + C\mathbf{U}]_{ii} + [\boldsymbol{\mu}_{z_{t}}^{\top}\mathbf{R}^{\top}]_{\cdot,i} \bigg{\}} \\
&= \exp \bigg{\{} -\frac{1}{2} [\mathbf{R}\mathbf{R}^{\top} + C\mathbf{U}]_{ii} + [(\mathbf{R}\boldsymbol{\mu}_{z_{t}})^\top]_{i,\cdot} \bigg{\}}
\end{align}

Thus, $q^{(k+1)}(\mathbf{w}_{t})$ has the form
\begin{align}
q^{(k+1)}(\mathbf{w}_{t}) &\propto \exp \bigg{\{} \bigg{[}-\frac{1}{2} \DiagPart(\mathbf{R}\mathbf{R}^{\top} + C\mathbf{U}) + \mathbf{R}\boldsymbol{\mu}_{z_{t}}\bigg{]}^{\top}\mathbf{w}_{t} \bigg{\}}
\end{align}
where $\DiagPart(\cdot)$ denotes the $K$-vector formed from the diagonal elements of the matrix. Thus, there are $K$ possible values and their probabilities are proportional to the exponential of each coordinate in the vector of natural parameters given above. Thus,
\begin{align}
\boldsymbol{\theta}_{w_{t}} &= \Softmax((-1/2)\cdot\DiagPart(\mathbf{R}\mathbf{R}^{\top} + C\mathbf{U}) + \mathbf{R}\boldsymbol{\mu}_{z_{t}}) \\
q^{(k+1)}(\mathbf{w}_{t}) &= \Cat_{K}(\mathbf{w}_{t}|\boldsymbol{\theta}_{w_{t}}) 
\end{align}

Now for $q(\mathbf{M})$. Per Appendix B, the variational optimum for $q(\mathbf{M})$ on the $(k+1)$-st iteration is given by
\begin{align}
q^{(k+1)}(\mathbf{M}) &\propto p(\mathbf{M}) \prod_{t=1}^{T} \exp \bigg{\{} \int q(\mathbf{z}_{t})q^{(k+1)}(\mathbf{w}_{t}) \ln p(\mathbf{z}_{t},\mathbf{w}_{t}|\mathbf{M}) d\mathbf{w}_{t}d\mathbf{z}_{t} \bigg{\}} 
\end{align}

It will be useful to simplify each term in the above product for $q^{(k+1)}(\mathbf{M})$. We have 
\begin{align}
&\exp \bigg{\{} \int q(\mathbf{z}_{t})q^{(k+1)}(\mathbf{w}_{t}) \ln{p(\mathbf{z}_{t},\mathbf{w}_{t}|\mathbf{M})} d\mathbf{w}_{t}d\mathbf{z}_{t} \bigg{\}} \\
&= \exp \bigg{\{} \int q(\mathbf{z}_{t})q^{(k+1)}(\mathbf{w}_{t}) [\ln{p(\mathbf{w}_{t})} + \ln{p(\mathbf{z}_{t}|\mathbf{w}_{t},\mathbf{M})}] d\mathbf{w}_{t}d\mathbf{z}_{t} \bigg{\}} \\
&\propto \exp \bigg{\{} \int q(\mathbf{z}_{t})q^{(k+1)}(\mathbf{w}_{t}) [\ln{p(\mathbf{z}_{t}|\mathbf{w}_{t},\mathbf{M})}] d\mathbf{w}_{t}d\mathbf{z}_{t} \bigg{\}} \\
&\propto \exp \bigg{\{} \int q(\mathbf{z}_{t})q^{(k+1)}(\mathbf{w}_{t}) \bigg{[} -\frac{1}{2} (\mathbf{z}_{t} - \mathbf{M}^{\top}\mathbf{w}_{t})^{\top} \mathbf{I}_{C}^{-1} (\mathbf{z}_{t} - \mathbf{M}^{\top} \mathbf{w}_{t}) \bigg{]} d\mathbf{w}_{t}d\mathbf{z}_{t} \bigg{\}} \\
&= \exp \bigg{\{} \int q(\mathbf{z}_{t})q^{(k+1)}(\mathbf{w}_{t}) \bigg{[} -\frac{1}{2} \big{(} \mathbf{z}_{t}^{\top}\mathbf{z}_{t} - 2\mathbf{w}_{t}^{\top}\mathbf{M}\mathbf{z}_{t} + \mathbf{w}_{t}^{\top}\mathbf{M}\mathbf{M}^{\top}\mathbf{w}_{t} \big{)} \bigg{]} d\mathbf{w}_{t}d\mathbf{z}_{t} \bigg{\}} \\
&\propto \exp \bigg{\{} \int q(\mathbf{z}_{t})q^{(k+1)}(\mathbf{w}_{t}) \bigg{[} -\frac{1}{2} \mathbf{w}_{t}^{\top}\mathbf{M}\mathbf{M}^{\top}\mathbf{w}_{t} + \mathbf{w}_{t}^{\top}\mathbf{M}\mathbf{z}_{t} \bigg{]} d\mathbf{w}_{t}d\mathbf{z}_{t} \bigg{\}} 
\end{align}

The expectation of the above quadratic form, $\mathbf{w}_{t}^{\top}\mathbf{M}\mathbf{M}^{\top}\mathbf{w}_{t}$ w.r.t. $q^{(k+1)}(\mathbf{w}_{t})$ is
\begin{align}
&\, \tr(\mathbf{M}\mathbf{M}^{\top}\boldsymbol{\Sigma}_{w_{t}}) + \boldsymbol{\mu}_{w_{t}}^{\top}\mathbf{M}\mathbf{M}^{\top}\boldsymbol{\mu}_{w_{t}} 
\end{align}
where $\mu_{w_{t}}$ and $\Sigma_{w_{t}}$ are the mean and covariance of the $K$-vectors given by the variational categorical distribution $q^{(k+1)}(\mathbf{w}_{t})$.\footnote{The formula for expectations of quadratic forms holds for expectations w.r.t. arbitrary real-valued multivariate random variables, not just Gaussian ones. See \citet{Mathai1992} for a reference.} We will simplify these symbols soon!

From here, the derivation for $q(\mathbf{M})$ follows the one from Section C.1, 
which is generic. We obtain the same expression for $q^{(k+1)}(\mathbf{M})$.

Specifically,
\begin{align}
\mathbf{R} &= \bigg{(} \mathbf{U}_{0}^{-1} + \sum_{t=1}^{T} (\boldsymbol{\mu}_{w_{t}}\boldsymbol{\mu}_{w_{t}}^{\top} + \boldsymbol{\Sigma}_{w_{t}}) \bigg{)}^{-1} \bigg{(}\mathbf{U}_{0}^{-1}\mathbf{R}_{0} + \sum_{t=1}^{T} \boldsymbol{\mu}_{w_{t}}\boldsymbol{\mu}_{z_{t}}^{\top}\bigg{)} \\
\mathbf{U} &= \bigg{(} \mathbf{U}_{0}^{-1} + \sum_{t=1}^{T} (\boldsymbol{\mu}_{w_{t}}\boldsymbol{\mu}_{w_{t}}^{\top} + \boldsymbol{\Sigma}_{w_{t}}) \bigg{)}^{-1} \\
\mathbf{V} &= \mathbf{I}_{C} \\
q^{(k+1)}(\mathbf{M}) &= \mathcal{MN}_{K \times C}(\mathbf{M}|\mathbf{R},\mathbf{U},\mathbf{V}).
\end{align}

Using an elementary covariance identity, we have
\begin{align}
\boldsymbol{\Sigma}_{w_{t}} &:= \Cov(\mathbf{w}_{t}, \mathbf{w}_{t}) \\
&=\mathbb{E}_{\mathbf{w}_{t} \sim q^{(k+1)}(\mathbf{w}_{t})} [\mathbf{w}_{t}\mathbf{w}_{t}^{\top}] - \mathbb{E}_{\mathbf{w}_{t} \sim q^{(k+1)}(\mathbf{w}_{t})}[\mathbf{w}_{t}]\mathbb{E}_{\mathbf{w}_{t} \sim q^{(k+1)}(\mathbf{w}_{t})}[\mathbf{w}_{t}]^{\top} \\
&= \mathbb{E}_{\mathbf{w}_{t} \sim q^{(k+1)}(\mathbf{w}_{t})}[\mathbf{w}_{t}\mathbf{w}_{t}^{\top}] - \boldsymbol{\mu}_{w_{t}}\boldsymbol{\mu}_{w_{t}}^{\top} \\
\end{align}
And thus, 
\begin{align}
(\boldsymbol{\mu}_{w_{t}}\boldsymbol{\mu}_{w_{t}}^{\top} + \boldsymbol{\Sigma}_{w_{t}}) &= \mathbb{E}_{\mathbf{w}_{t} \sim q^{(k+1)}(\mathbf{w}_{t})}[\mathbf{w}_{t}\mathbf{w}_{t}^{\top}] 
\end{align}
Consequently, the expectation on the right-hand side is a weighted sum of $K$ terms, corresponding to the $K$ possible values of $\mathbf{w}_{t}$. For the $i$-th term, we have $\mathbf{w}_{t} = \mathbf{e}_{i}$, and thus $\mathbf{w}_{t}\mathbf{w}_{t}^{\top}$ is a $K \times K$ matrix with a $1$ for entry $(i,i)$ and zeros elsewhere. Consequently, their probability-weighted sum is the diagonal matrix $\Diag(\boldsymbol{\theta}_{w_{t}})$. 

We thus have
\begin{align}
\mathbf{R} &= \bigg{(} \mathbf{U}_{0}^{-1} + \sum_{t=1}^{T} \Diag(\boldsymbol{\theta}_{w_{t}}) \bigg{)}^{-1} \bigg{(}\mathbf{U}_{0}^{-1}\mathbf{R}_{0} + \sum_{t=1}^{T} \boldsymbol{\theta}_{w_{t}}\boldsymbol{\mu}_{z_{t}}^{\top}\bigg{)} \\
\mathbf{U} &= \bigg{(} \mathbf{U}_{0}^{-1} + \sum_{t=1}^{T} \Diag(\boldsymbol{\theta}_{w_{t}}) \bigg{)}^{-1} \\
\mathbf{V} &= \mathbf{I}_{C} \\
q^{(k+1)}(\mathbf{M}) &= \mathcal{MN}_{K \times C}(\mathbf{M}|\mathbf{R},\mathbf{U},\mathbf{V}).
\end{align}

An inductive argument therefore shows that the parametric families of $q(\mathbf{w})$ and $q(\mathbf{M})$ are closed under iterations of the update equations given by the variational Bayesian EM algorithm. This concludes the derivation.
\end{proof}

\subsection{Variational Bayesian Update Rules: Gaussian, Mean-Shifted}

\begin{theorem}
Consider a generative model of the form 
\begin{align*}
p(\mathbf{X},\mathbf{Z},\mathbf{W},\mathbf{M},\mathbf{b}) &= p(\mathbf{M})p(\mathbf{b})\prod_{t=1}^{T}p(\mathbf{w}_{t})p(\mathbf{z}_{t}|\mathbf{w}_{t},\mathbf{M}, \mathbf{b})p(\mathbf{x}_{t}|\mathbf{z}_{t}) 
\end{align*} 
and an inference model of the form $q(\mathbf{Z}, \mathbf{W}, \mathbf{M},\mathbf{b}) = q(\mathbf{Z})q(\mathbf{W})q(\mathbf{M})q(\mathbf{b})$. Assume that $q(\mathbf{Z}) = \prod_{t=1}^{T}q(\mathbf{z}_{t})$ and that each $q(\mathbf{z}_{t})$ is a multivariate Gaussian whose variational parameters are supplied by a recognition model. 

Suppose that
\begin{align*}
p(\mathbf{M}) &= \mathcal{MN}_{K \times C}(\mathbf{M}|\mathbf{R}=\mathbf{R}_{0}, \mathbf{U}=\mathbf{U}_{0}, \mathbf{V}=\mathbf{I}_{C}) \\
p(\mathbf{w}_{t}) &= \mathcal{N}_{K}(\mathbf{w}_{t}|\boldsymbol{\mu}=\mathbf{0}_{K}, \boldsymbol{\Sigma}=\mathbf{I}_{K}) \\
p(\mathbf{z}_{t}|\mathbf{w}_{t},\mathbf{M},\mathbf{b}) &= \mathcal{N}_{C}(\mathbf{z}_{t}|\boldsymbol{\mu}=\mathbf{M}^{\top}\mathbf{w}_{t} + \mathbf{b}, \boldsymbol{\Sigma}=\sigma_{z}^{2}\mathbf{I}_{C}) \\
p(\mathbf{b}) &= \mathcal{N}_{C}(\mathbf{b}|\boldsymbol{\mu}=\boldsymbol{\mu}_{b0}, \boldsymbol{\Sigma}=\boldsymbol{\Sigma}_{b0})
\end{align*}
Then the variational Bayesian update rules are:
\begin{align}
\boldsymbol{\mu}_{b} &\leftarrow (\boldsymbol{\Sigma}_{b0}^{-1} + \sigma_{z}^{-2} T \mathbf{I}_{C})^{-1} (\boldsymbol{\Sigma}_{b0}^{-1} \boldsymbol{\mu}_{b0} + \sigma_{z}^{-2} \sum_{t=1}^{T} [\boldsymbol{\mu}_{z_{t}} - \mathbf{R}^{\top} \boldsymbol{\mu}_{w_{t}}]) \\
\boldsymbol{\Sigma}_{b} &\leftarrow (\boldsymbol{\Sigma}_{b0}^{-1} + \sigma_{z}^{-2} T \mathbf{I}_{C})^{-1} \\
q^{(k+1)}(\mathbf{b}) &\leftarrow \mathcal{N}_{C}(\mathbf{b}|\boldsymbol{\mu}=\boldsymbol{\mu}_{b}, \boldsymbol{\Sigma}=\boldsymbol{\Sigma}_{b}) \\
\, \\
\boldsymbol{\mu}_{w_{t}} &\leftarrow (\mathbf{I}_{K} + \sigma_{z}^{-2} \mathbf{R}\mathbf{R}^{\top} + \sigma_{z}^{-2} C\mathbf{U})^{-1}  \sigma_{z}^{-2} \mathbf{R}(\boldsymbol{\mu}_{z_{t}}-\boldsymbol{\mu}_{b}) \\
\boldsymbol{\Sigma}_{w_{t}} &\leftarrow (\mathbf{I}_{K} + \sigma_{z}^{-2} \mathbf{R}\mathbf{R}^{\top} + \sigma_{z}^{-2} C\mathbf{U})^{-1}  \\
q^{(k+1)}(\mathbf{w}_{t}) &\leftarrow \mathcal{N}_{K}(\mathbf{w}_{t}|\boldsymbol{\mu}_{w_{t}}, \boldsymbol{\Sigma}_{w_{t}}) \\
\,\\
\mathbf{R} &\leftarrow \bigg{(}\mathbf{U}_{0}^{-1} + \sigma_{z}^{-2} \sum_{t=1}^{T} (\boldsymbol{\mu}_{w_{t}}\boldsymbol{\mu}_{w_{t}}^{\top} + \boldsymbol{\Sigma}_{w_{t}})\bigg{)}^{-1}\bigg{(}\mathbf{U}_{0}^{-1}\mathbf{R}_{0} + \sigma_{z}^{-2} \sum_{t=1}^{T}\boldsymbol{\mu}_{w_{t}}(\boldsymbol{\mu}_{z_{t}}-\boldsymbol{\mu}_{b})^{\top}\bigg{)} \\
\mathbf{U} &\leftarrow \bigg{(}\mathbf{U}_{0}^{-1} + \sigma_{z}^{-2} \sum_{t=1}^{T} (\boldsymbol{\mu}_{w_{t}}\boldsymbol{\mu}_{w_{t}}^{\top} + \boldsymbol{\Sigma}_{w_{t}})\bigg{)}^{-1} \\
\mathbf{V} &\leftarrow \mathbf{I}_{C} \\
\, \\
q^{(k+1)}(\mathbf{M}) &\leftarrow \mathcal{MN}_{K \times C}(\mathbf{M}|\mathbf{R}, \mathbf{U}, \mathbf{V}) 
\end{align}
\end{theorem}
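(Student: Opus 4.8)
The plan is to apply the generic coordinate-ascent variational Bayesian update formulae derived in Appendix B to each of the three free distributions $q(\mathbf{b})$, $q(\mathbf{w}_t)$, and $q(\mathbf{M})$ in turn, using the fact, established there, that the decoder $p(\mathbf{x}_t|\mathbf{z}_t)$ drops out of every update. The derivation is structurally the same as the proof of the first theorem in this subsection (Gaussian addresses), with two modifications carried throughout: the observation noise is $\sigma_z^2\mathbf{I}_C$ rather than $\mathbf{I}_C$, contributing a factor $\sigma_z^{-2}$ to every likelihood term; and the conditional mean of $\mathbf{z}_t$ is shifted by $\mathbf{b}$, so that the role played by $\mathbf{z}_t$ in the earlier proof is now played by $\mathbf{z}_t - \mathbf{b}$, whose expectation under $q$ is $\boldsymbol{\mu}_{z_t} - \boldsymbol{\mu}_b$.

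First I would handle $q(\mathbf{b})$. Since $\mathbf{b}$ appears only in the $T$ likelihood factors and its own Gaussian prior, the update gives $\ln q(\mathbf{b}) = \mathrm{const} + \ln p(\mathbf{b}) + \sum_t \mathbb{E}_{q(\mathbf{z}_t)q(\mathbf{w}_t)q(\mathbf{M})}[\ln p(\mathbf{z}_t|\mathbf{w}_t,\mathbf{M},\mathbf{b})]$. Expanding each Gaussian log-density, using $\mathbb{E}_{q(\mathbf{M})}[\mathbf{M}^{\top}] = \mathbf{R}^{\top}$, and discarding terms not depending on $\mathbf{b}$ leaves a quadratic form whose canonical parameters are $\boldsymbol{\Lambda}_b = \boldsymbol{\Sigma}_{b0}^{-1} + \sigma_z^{-2}T\mathbf{I}_C$ and $\boldsymbol{\eta}_b = \boldsymbol{\Sigma}_{b0}^{-1}\boldsymbol{\mu}_{b0} + \sigma_z^{-2}\sum_t(\boldsymbol{\mu}_{z_t} - \mathbf{R}^{\top}\boldsymbol{\mu}_{w_t})$; converting to moments yields the stated $\boldsymbol{\mu}_b$, $\boldsymbol{\Sigma}_b$. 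Next, for $q(\mathbf{w}_t)$, I would repeat the computation from the Gaussian-addresses proof but with $\mathbf{z}_t$ replaced by $\mathbf{z}_t - \mathbf{b}$ and an overall $\sigma_z^{-2}$: the expectation over $q(\mathbf{M})$ of the quadratic form $(\mathbf{z}_t - \mathbf{b} - \mathbf{M}^{\top}\mathbf{w}_t)^{\top}(\mathbf{z}_t - \mathbf{b} - \mathbf{M}^{\top}\mathbf{w}_t)$ produces the same $C\,\mathbf{w}_t^{\top}\mathbf{U}\mathbf{w}_t + \mathbf{w}_t^{\top}\mathbf{R}\mathbf{R}^{\top}\mathbf{w}_t$ terms plus a linear term $-2(\boldsymbol{\mu}_{z_t} - \boldsymbol{\mu}_b)^{\top}\mathbf{R}^{\top}\mathbf{w}_t$, and combining with the $\mathcal{N}_K(\mathbf{0}_K,\mathbf{I}_K)$ prior gives precision $\mathbf{I}_K + \sigma_z^{-2}\mathbf{R}\mathbf{R}^{\top} + \sigma_z^{-2}C\mathbf{U}$ and natural parameter $\sigma_z^{-2}\mathbf{R}(\boldsymbol{\mu}_{z_t} - \boldsymbol{\mu}_b)$, hence the stated moments.

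The main work, and essentially the only nontrivial step, is the $q(\mathbf{M})$ update. Here I would again substitute $\mathbf{z}_t - \mathbf{b}$ for $\mathbf{z}_t$, carry the $\sigma_z^{-2}$, and follow the vectorization argument from the Gaussian-addresses proof: using the Cholesky factor of $\boldsymbol{\mu}_{w_t}\boldsymbol{\mu}_{w_t}^{\top} + \boldsymbol{\Sigma}_{w_t}$, the vec trick $\vect(AXB) = (B^{\top} \kron A)\vect(X)$, and $\tr(A^{\top}B) = \vect(B)^{\top}\vect(A)$, the expected log-likelihood terms become Gaussian in $\vect(\mathbf{M})$ with precision contribution $\sigma_z^{-2}\,\mathbf{I}_C \kron (\boldsymbol{\mu}_{w_t}\boldsymbol{\mu}_{w_t}^{\top} + \boldsymbol{\Sigma}_{w_t})$ and natural-parameter contribution $\sigma_z^{-2}\vect\!\big(\boldsymbol{\mu}_{w_t}(\boldsymbol{\mu}_{z_t} - \boldsymbol{\mu}_b)^{\top}\big)$. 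Writing $p(\mathbf{M})$ in canonical form over $\vect(\mathbf{M})$ as $\mathcal{N}_{KC}(\vect(\mathbf{R}_0), \mathbf{I}_C \kron \mathbf{U}_0)$ and summing, the total precision is $\mathbf{I}_C \kron \big(\mathbf{U}_0^{-1} + \sigma_z^{-2}\sum_t(\boldsymbol{\mu}_{w_t}\boldsymbol{\mu}_{w_t}^{\top} + \boldsymbol{\Sigma}_{w_t})\big)$ — crucially still a Kronecker product with $\mathbf{I}_C$ on the left — so by the correspondence between matrix-variate and vectorized Gaussians the result is $\mathcal{MN}_{K\times C}(\mathbf{M}|\mathbf{R},\mathbf{U},\mathbf{I}_C)$ with $\mathbf{U}$ and $\mathbf{R}$ as stated, $\mathbf{R}$ being recovered from $\boldsymbol{\mu} = \boldsymbol{\Lambda}^{-1}\boldsymbol{\eta}$ by another application of the vec trick. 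The hard part will simply be bookkeeping the $\sigma_z^{-2}$ factors and the $\mathbf{b}$-shift consistently across the coupled updates; the algebra is otherwise identical to the earlier theorem. Finally, since each update returns a distribution in the family it started in ($\mathcal{N}_C$ for $\mathbf{b}$, $\mathcal{N}_K$ for each $\mathbf{w}_t$, $\mathcal{MN}_{K\times C}$ for $\mathbf{M}$), an induction on the iteration count $k$ shows these families are closed under the updates, completing the proof.
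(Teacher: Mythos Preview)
Your proposal is correct and follows essentially the same approach as the paper's own proof: derive the $q(\mathbf{b})$ update directly from the quadratic form to obtain the canonical parameters, then reduce the $q(\mathbf{w}_t)$ and $q(\mathbf{M})$ updates to the already-proved Gaussian-addresses theorem by grouping $\mathbf{b}$ with $\mathbf{z}_t$ (so that $\boldsymbol{\mu}_{z_t}$ is replaced by $\boldsymbol{\mu}_{z_t}-\boldsymbol{\mu}_b$ via the mean-field factorization $q(\mathbf{z}_t)q(\mathbf{b})$) and carrying the $\sigma_z^{-2}$ factor throughout. The paper's proof is in fact briefer than your outline, simply invoking the earlier derivation for the $\mathbf{w}_t$ and $\mathbf{M}$ updates rather than re-walking the vectorization argument.
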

\begin{proof}
Suppose we have run the variational Bayesian updates $k$ times each. Then the optimal value for $q^{(k+1)}(\mathbf{b})$ is given by:
\begin{align}
q^{(k+1)}(\mathbf{b}) \propto p(\mathbf{b}) \exp \bigg{\{} \int q(\mathbf{Z})q^{(k)}(\mathbf{W})q^{(k)}(\mathbf{M}) \sum_{t=1}^{T} \ln{p(\mathbf{z}_{t}|\mathbf{w}_{t},\mathbf{b},\mathbf{M})} d\mathbf{Z}d\mathbf{W}d\mathbf{M} \bigg{\}}
\end{align}
Then, the natural parameters for $p(\mathbf{b})$ are given by $\boldsymbol{\Lambda}_{b0} = \boldsymbol{\Sigma}_{b0}^{-1}, \boldsymbol{\eta}_{b0} = \boldsymbol{\Sigma}_{b0}^{-1} \boldsymbol{\mu}_{b0}$. The exponentiated integral, meanwhile, can be rewritten as a product whose terms are indexed by $t$. The $t$-th term in the product is proportional to:
\begin{align}
&\exp \bigg{\{} \int q(\mathbf{z}_{t}) q^{(k)}(\mathbf{w}_{t}) q^{(k)}(\mathbf{M}) \cdot -\frac{1}{2} (\mathbf{z}_{t}-[\mathbf{M}^{\top}\mathbf{w}_{t}+\mathbf{b}])^{\top}(\sigma_{z}^{2}\mathbf{I}_{C})^{-1}(\mathbf{z}_{t}-[\mathbf{M}^{\top}\mathbf{w}_{t}+\mathbf{b}]) d\mathbf{w}_{t}d\mathbf{z}_{t}d\mathbf{M} \bigg{\}} \\
&\propto \exp \bigg{\{} \int q(\mathbf{z}_{t})q^{(k)}(\mathbf{w}_{t})q^{(k)}(\mathbf{M}) \cdot -\frac{1}{2} \sigma_{z}^{-2} \bigg{[} \mathbf{z}_{t}^{\top}\mathbf{z}_{t} -2\mathbf{z}_{t}^{\top}[\mathbf{M}^{\top}\mathbf{w}_{t}+\mathbf{b}] + [\mathbf{M}^{\top}\mathbf{w}_{t}+\mathbf{b}]^{\top}[\mathbf{M}^{\top}\mathbf{w}_{t}+\mathbf{b}]\bigg{]} d\mathbf{w}_{t}d\mathbf{z}_{t}d\mathbf{M} \bigg{\}} \\
&\propto \exp \bigg{\{} \int \sigma_{z}^{-2} q(\mathbf{z}_{t})q^{(k)}(\mathbf{w}_{t})q^{(k)}(\mathbf{M}) \bigg{[} \mathbf{z}_{t}^{\top} [\mathbf{M}^{\top}\mathbf{w}_{t} +\mathbf{b}] - \frac{1}{2} [\mathbf{M}^{\top}\mathbf{w}_{t}+\mathbf{b}]^{\top}[\mathbf{M}^{\top}\mathbf{w}_{t}+\mathbf{b}]\bigg{]} d\mathbf{w}_{t}d\mathbf{z}_{t}d\mathbf{M} \bigg{\}} \\
&\propto \exp \bigg{\{} \int \sigma_{z}^{-2} q(\mathbf{z}_{t})q^{(k)}(\mathbf{w}_{t})q^{(k)}(\mathbf{M}) \bigg{[} \mathbf{z}_{t}^{\top} \mathbf{b} - \frac{1}{2} \big{(} \mathbf{w}_{t}\mathbf{M}\mathbf{M}^{\top}\mathbf{w}_{t} + 2[\mathbf{M}^{\top}\mathbf{w}_{t}]^{\top}\mathbf{b} + \mathbf{b}^{\top}\mathbf{b} \big{)} \bigg{]} d\mathbf{w}_{t}d\mathbf{z}_{t}d\mathbf{M} \bigg{\}} \\
&\propto \exp \bigg{\{} \int \sigma_{z}^{-2} q(\mathbf{z}_{t})q^{(k)}(\mathbf{w}_{t})q^{(k)}(\mathbf{M}) \bigg{[} \mathbf{z}_{t}^{\top} \mathbf{b} - [\mathbf{M}^{\top}\mathbf{w}_{t}]^{\top}\mathbf{b} - \frac{1}{2} \mathbf{b}^{\top}\mathbf{b} \bigg{]} d\mathbf{w}_{t}d\mathbf{z}_{t}d\mathbf{M} \bigg{\}} \\
&= \exp \bigg{\{} \sigma_{z}^{-2} \bigg{[} - \frac{1}{2} \mathbf{b}^{\top}\mathbf{b} + [\boldsymbol{\mu}_{z_{t}} - \mathbf{R}^{\top}\boldsymbol{\mu}_{w_{t}}]^{\top}\mathbf{b} \bigg{]} \bigg{\}} 
\end{align}
where we have dropped all additive terms in the exponent which do not vary with $\mathbf{b}$. The sum of the natural parameters from the prior and above terms is then given by $\boldsymbol{\Lambda}_{b} = \boldsymbol{\Sigma}_{b0}^{-1} + \sigma_{z}^{-2} \sum_{t=1}^{T} \mathbf{I}_{C}$ and $\boldsymbol{\eta}_{b} = \boldsymbol{\Sigma}_{b0}^{-1}\boldsymbol{\mu}_{b0} + \sigma_{z}^{-2} \sum_{t=1}^{T} [\boldsymbol{\mu}_{z_{t}} - \mathbf{R}^{\top}\boldsymbol{\mu}_{w_{t}}]$. Converting back to moment parameters, we have the updates given in the theorem statement. 

The updates for $q(\mathbf{M})$ and $q(\mathbf{w}_{t})$ follow immediately from those of the first derivation we did, in Appendix C.1. 
We can group the new quantity $\mathbf{b}$ appearing in the quadratic form $(\mathbf{z}_{t}-[\mathbf{M}^{\top}\mathbf{w}_{t}+\mathbf{b}])^{\top}(\mathbf{z}_{t}-[\mathbf{M}^{\top}\mathbf{w}_{t}+\mathbf{b}])$ with the term $\mathbf{z}_{t}$ in the updates done previously, and use the mean field structure of our inference model to see that the expectations in the previous proof w.r.t. $q(\mathbf{z}_{t})$ are now replaced with expectations of the grouped term $\mathbf{z}_{t}-\mathbf{b}$ w.r.t. $q(\mathbf{z}_{t})q(\mathbf{b})$. 
\end{proof}

\subsection{Variational Bayesian Update Rules: Scalable Mixture-Based Memory Model}

\begin{theorem}
Consider a generative model of the usual form, with episode-level latent variables $\boldsymbol{\Omega} = \{ \mathbf{M}_{1:H},\mathbf{b}_{1:H} \}$ and addressing variables $\mathbf{y}_{t} = \{ \mathbf{s}_{t}, \mathbf{w}_{t} \}$. Consider an inference model of the form $q(\mathbf{Z}, \mathbf{W}, \mathbf{M}_{1:H}, \mathbf{b}_{1:H}, \mathbf{S}) = q(\mathbf{Z})q(\mathbf{S})q(\mathbf{W}|\mathbf{S})q(\mathbf{M}_{1:H})q(\mathbf{b}_{1:H})$. Assume that $q(\mathbf{Z}) = \prod_{t=1}^{T}q(\mathbf{z}_{t})$ and that each $q(\mathbf{z}_{t})$ is a multivariate Gaussian whose variational parameters are supplied by a recognition model. 

Suppose that
\begin{align}
p(\mathbf{M}_{h}) &= \mathcal{MN}_{K \times C}(\mathbf{M}|\mathbf{R}=\mathbf{R}_{0}, \mathbf{U}=\mathbf{U}_{0}, \mathbf{V}=\mathbf{I}_{C}) \\
p(\mathbf{w}_{t}|\mathbf{s}_{t}=h) &= \mathcal{N}_{K}(\mathbf{w}_{t}|\boldsymbol{\mu}=\mathbf{0}_{K}, \boldsymbol{\Sigma}=\mathbf{I}_{K}) \\
p(\mathbf{z}_{t}|\mathbf{s}_{t}=h, \mathbf{w}_{t},\mathbf{M}_{1:H},\mathbf{b}_{1:H}) &= \mathcal{N}_{C}(\mathbf{z}_{t}|\boldsymbol{\mu}=\mathbf{M}_{h}^{\top}\mathbf{w}_{t} + \mathbf{b}_{h}, \boldsymbol{\Sigma}=\sigma_{z}^{2}\mathbf{I}_{C}) \\
p(\mathbf{b}_{h}) &= \mathcal{N}_{C}(\mathbf{b}|\boldsymbol{\mu}=\boldsymbol{\mu}_{b0h}, \boldsymbol{\Sigma}=\boldsymbol{\Sigma}_{b0h}) \\
p(\mathbf{s}_{t}) &= \Cat_{H}(\mathbf{s}_{t}|\boldsymbol{\theta}=\frac{1}{H}\boldsymbol{1}_{H})
\end{align}

Then the variational Bayesian update rules are:
\begin{equation}
\begin{split}
\boldsymbol{\theta}_{s_{t}} = \Softmax \bigg{(} &-\frac{1}{2} \sigma_{z}^{-2} \bigg{[} \boldsymbol{\mu}_{w_{th}}^{\top}(\mathbf{R}_{h}\mathbf{R}_{h}^{\top}+C\mathbf{U}_{h})\boldsymbol{\mu}_{w_{th}} + \tr[(\mathbf{R}_{h}\mathbf{R}_{h}^{\top}+C\mathbf{U}_{h})\boldsymbol{\Sigma}_{w_{th}}] + \boldsymbol{\mu}_{b_{h}}^{\top} \boldsymbol{\mu}_{b_{h}} + \tr(\boldsymbol{\Sigma}_{b_{h}}) \bigg{]} \\
&+ \sigma_{z}^{-2} \boldsymbol{\mu}_{w_{th}}^{\top}\mathbf{R}_{h}(\boldsymbol{\mu}_{z_{t}} - \boldsymbol{\mu}_{b_{h}}) \\
&+ \sigma_{z}^{-2} \boldsymbol{\mu}_{b_{h}}^{\top}\boldsymbol{\mu}_{z_{t}} \\
&+ \frac{1}{2} \log{\det{\boldsymbol{\Sigma}_{w_{th}}}} \\
&- \frac{1}{2} \bigg{[} \boldsymbol{\mu}_{w_{th}}^{\top}\boldsymbol{\mu}_{w_{th}} + \tr(\boldsymbol{\Sigma}_{w_{th}}) \bigg{]} \bigg{)} 
\end{split}
\end{equation}

\begin{align}
q^{(k+1)}(\mathbf{s}_{t}) &\leftarrow \Cat_{H}(\mathbf{s}_{t}|\boldsymbol{\theta}=\boldsymbol{\theta}_{s_{t}}) \\
\, \\
\boldsymbol{\mu}_{b_{h}} &\leftarrow \bigg{(}\boldsymbol{\Sigma}_{b0}^{-1} + \sigma_{z}^{-2} \sum_{t=1}^{T} \boldsymbol{\theta}_{s_{t}}[h] \mathbf{I}_{C}\bigg{)}^{-1} \bigg{(}\boldsymbol{\Sigma}_{b0}^{-1} \boldsymbol{\mu}_{b0} + \sigma_{z}^{-2} \sum_{t=1}^{T} \boldsymbol{\theta}_{s_{t}}[h] (\boldsymbol{\mu}_{z_{t}} - \mathbf{R}_{h}^{\top} \boldsymbol{\mu}_{w_{th}})\bigg{)} \\
\boldsymbol{\Sigma}_{b_{h}} &\leftarrow \bigg{(}\boldsymbol{\Sigma}_{b0}^{-1} + \sigma_{z}^{-2} \sum_{t=1}^{T} \boldsymbol{\theta}_{s_{t}}[h] \mathbf{I}_{C}\bigg{)}^{-1} \\
q^{(k+1)}(\mathbf{b}_{h}) &\leftarrow \mathcal{N}_{C}(\mathbf{b}_{h}|\boldsymbol{\mu}=\boldsymbol{\mu}_{b_{h}}, \boldsymbol{\Sigma}=\boldsymbol{\Sigma}_{b_{h}}) \\
\, \\
\boldsymbol{\mu}_{w_{th}} &\leftarrow (\mathbf{I}_{K} + \sigma_{z}^{-2} \mathbf{R}_{h}\mathbf{R}_{h}^{\top} + \sigma_{z}^{-2} C\mathbf{U}_{h})^{-1}  \sigma_{z}^{-2} \mathbf{R}_{h}(\boldsymbol{\mu}_{z_{t}}-\boldsymbol{\mu}_{b_{h}}) \\
\boldsymbol{\Sigma}_{w_{th}} &\leftarrow (\mathbf{I}_{K} + \sigma_{z}^{-2} \mathbf{R}_{h}\mathbf{R}_{h}^{\top} + \sigma_{z}^{-2} C\mathbf{U}_{h})^{-1}  \\
q^{(k+1)}(\mathbf{w}_{t}|\mathbf{s}_{t}=h) &\leftarrow \mathcal{N}_{K}(\mathbf{w}_{t}|\boldsymbol{\mu}_{w_{th}}, \boldsymbol{\Sigma}_{w_{th}}) \\
\,\\
\mathbf{R}_{h} &\leftarrow \bigg{(}\mathbf{U}_{0}^{-1} + \sigma_{z}^{-2} \sum_{t=1}^{T} \boldsymbol{\theta}_{s_{t}}[h] (\boldsymbol{\mu}_{w_{th}}\boldsymbol{\mu}_{w_{th}}^{\top} + \boldsymbol{\Sigma}_{w_{th}})\bigg{)}^{-1}\bigg{(}\mathbf{U}_{0}^{-1}\mathbf{R}_{0} + \sigma_{z}^{-2} \sum_{t=1}^{T} \boldsymbol{\theta}_{s_{t}}[h] \boldsymbol{\mu}_{w_{th}}(\boldsymbol{\mu}_{z_{t}}-\boldsymbol{\mu}_{b_{h}})^{\top}\bigg{)} \\
\mathbf{U}_{h} &\leftarrow \bigg{(}\mathbf{U}_{0}^{-1} + \sigma_{z}^{-2} \sum_{t=1}^{T} \boldsymbol{\theta}_{s_{t}}[h] (\boldsymbol{\mu}_{w_{th}}\boldsymbol{\mu}_{w_{th}}^{\top} + \boldsymbol{\Sigma}_{w_{th}})\bigg{)}^{-1} \\
\mathbf{V}_{h} &\leftarrow \mathbf{I}_{C} \\
\, \\
q^{(k+1)}(\mathbf{M}_{h}) &\leftarrow \mathcal{MN}_{K \times C}(\mathbf{M}_{h}|\mathbf{R}_{h}, \mathbf{U}_{h}, \mathbf{V}_{h}) 
\end{align}

\end{theorem}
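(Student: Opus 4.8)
The plan is to reduce the statement to the generic mean-field VB update formulas of Appendix B together with the closed forms already established in Appendices C.1 and C.3, and then to treat the one genuinely new factor, $q(\mathbf{s}_t)$, by a \emph{structured} mean-field argument, since the inference model carries the structured factor $q(\mathbf{W}\mid\mathbf{S})$. First I would invoke the generic result of Appendix B: the decoder $p(\mathbf{x}_t\mid\mathbf{z}_t)$ drops out of every update that is not the update for $q(\mathbf{z}_t)$, so each coordinate-ascent update is an exponentiated expectation of $\ln$ of the latent-space joint $p(\boldsymbol{\Omega})\prod_t p(\mathbf{s}_t)p(\mathbf{w}_t\mid\mathbf{s}_t)p(\mathbf{z}_t\mid\mathbf{s}_t,\mathbf{w}_t,\boldsymbol{\Omega})$ under the remaining variational factors.

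For $q(\mathbf{w}_t\mid\mathbf{s}_t=h)$, the optimal conditional is, up to normalization, $\exp\{\mathbb{E}_{q(\mathbf{z}_t)q(\mathbf{M}_h)q(\mathbf{b}_h)}\ln\big[p(\mathbf{w}_t)p(\mathbf{z}_t\mid\mathbf{s}_t=h,\mathbf{w}_t,\mathbf{M}_h,\mathbf{b}_h)\big]\}$; conditioning on $\mathbf{s}_t=h$ selects exactly the $h$-th memory component, so this is identical to the mean-shifted Gaussian address update of Appendix C.3 with $(\mathbf{M},\mathbf{b})$ replaced by $(\mathbf{M}_h,\mathbf{b}_h)$. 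Grouping $\mathbf{z}_t-\mathbf{b}_h$ as in that proof and using $\mathbb{E}_{q(\mathbf{M}_h)}[\mathbf{M}_h\mathbf{M}_h^\top]=\mathbf{R}_h\mathbf{R}_h^\top+C\mathbf{U}_h$ yields the stated Gaussian with precision $\mathbf{I}_K+\sigma_z^{-2}\mathbf{R}_h\mathbf{R}_h^\top+\sigma_z^{-2}C\mathbf{U}_h$. The updates for $q(\mathbf{M}_h)$ and $q(\mathbf{b}_h)$ go the same way: only timesteps with $\mathbf{s}_t=h$ contribute a term involving $\mathbf{M}_h$ or $\mathbf{b}_h$, and since $q(\mathbf{s}_t)$ is categorical the expectation over $\mathbf{s}_t$ turns the sum over $t$ into a $\boldsymbol{\theta}_{s_t}[h]$-weighted sum; the remaining algebra (vectorization, the vec trick, recognizing a Kronecker-structured covariance as a matrix-variate Gaussian) is exactly that of Appendices C.1 and C.3 applied per component, which is why $T\mathbf{I}_C$ and the unweighted sums there become $\sum_t\boldsymbol{\theta}_{s_t}[h]\mathbf{I}_C$ and $\sum_t\boldsymbol{\theta}_{s_t}[h](\cdot)$ here.

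The new ingredient is the update for $q(\mathbf{s}_t)$, obtained by collapsing the structured pair $q(\mathbf{s}_t)q(\mathbf{w}_t\mid\mathbf{s}_t)$. Having fixed $q(\mathbf{w}_t\mid\mathbf{s}_t=h)$ to its optimum, the optimal $q(\mathbf{s}_t=h)$ is proportional to $\exp$ of the expected complete-data log-joint for that timestep plus the entropy of $q(\mathbf{w}_t\mid\mathbf{s}_t=h)$, i.e.\ to $\exp\{\mathbb{E}_{q(\mathbf{w}_t\mid\mathbf{s}_t=h)q(\mathbf{z}_t)q(\mathbf{M}_h)q(\mathbf{b}_h)}[\ln p(\mathbf{z}_t\mid\mathbf{s}_t=h,\mathbf{w}_t,\mathbf{M}_h,\mathbf{b}_h)]+\mathbb{E}_{q(\mathbf{w}_t\mid\mathbf{s}_t=h)}[\ln p(\mathbf{w}_t)]+H[q(\mathbf{w}_t\mid\mathbf{s}_t=h)]\}$, the uniform $p(\mathbf{s}_t)$ contributing only a constant. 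Expanding the Gaussian log-likelihood's quadratic form and taking expectations gives the squared terms $\boldsymbol{\mu}_{w_{th}}^\top(\mathbf{R}_h\mathbf{R}_h^\top+C\mathbf{U}_h)\boldsymbol{\mu}_{w_{th}}+\tr[(\mathbf{R}_h\mathbf{R}_h^\top+C\mathbf{U}_h)\boldsymbol{\Sigma}_{w_{th}}]$ and $\boldsymbol{\mu}_{b_h}^\top\boldsymbol{\mu}_{b_h}+\tr(\boldsymbol{\Sigma}_{b_h})$ and the cross-terms $\boldsymbol{\mu}_{w_{th}}^\top\mathbf{R}_h(\boldsymbol{\mu}_{z_t}-\boldsymbol{\mu}_{b_h})$ and $\boldsymbol{\mu}_{b_h}^\top\boldsymbol{\mu}_{z_t}$ (all scaled by $-\tfrac12\sigma_z^{-2}$ or $\sigma_z^{-2}$ as appropriate); the Gaussian entropy supplies $\tfrac12\log\det\boldsymbol{\Sigma}_{w_{th}}$, and $\mathbb{E}_{q(\mathbf{w}_t\mid\mathbf{s}_t=h)}[\ln p(\mathbf{w}_t)]$ supplies $-\tfrac12[\boldsymbol{\mu}_{w_{th}}^\top\boldsymbol{\mu}_{w_{th}}+\tr(\boldsymbol{\Sigma}_{w_{th}})]$; normalizing over $h$ is the softmax. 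Finally, an induction over iterations, as in C.1, shows all parametric families are closed under these updates.

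The main obstacle I expect is precisely the $q(\mathbf{s}_t)$ bookkeeping: one must remember that in the structured mean-field the $\mathbf{s}_t$-update sees the \emph{log partition function} of the unnormalized Gaussian over $\mathbf{w}_t$ for component $h$ — which is where the $\tfrac12\log\det\boldsymbol{\Sigma}_{w_{th}}$ term and the residual prior/entropy terms in the last two lines come from — rather than merely a naive expected log-likelihood; getting the signs and the $\sigma_z^{-2}$ factors right on the cross-terms between $\mathbf{w}_t$, $\mathbf{b}_h$ and $\mathbf{z}_t$ is the only delicate part. Everything else is a component-indexed, $\boldsymbol{\theta}_{s_t}[h]$-reweighted rerun of the earlier derivations.
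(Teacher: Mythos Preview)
Your proposal is correct and follows essentially the same route as the paper: the paper also derives the $q(\mathbf{s}_t)$ update via a structured mean-field functional derivative, arriving at $q(\mathbf{s}_t)\propto p(\mathbf{s}_t)\exp\{\mathbb{E}[\ln p(\mathbf{z}_t\mid\mathbf{s}_t,\mathbf{w}_t,\boldsymbol{\Omega})]-D_{\mathrm{KL}}(q(\mathbf{w}_t\mid\mathbf{s}_t)\Vert p(\mathbf{w}_t\mid\mathbf{s}_t))\}$, which is exactly your entropy-plus-cross-entropy decomposition, and then reduces the $q(\mathbf{w}_t\mid\mathbf{s}_t=h)$, $q(\mathbf{M}_h)$, and $q(\mathbf{b}_h)$ updates to the single-component derivations of Appendices C.1 and C.3 with $\boldsymbol{\theta}_{s_t}[h]$-weighting, just as you outline. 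The only cosmetic difference is that the paper packages the last two lines of the softmax argument as a negative KL rather than splitting them into entropy and expected log-prior.
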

\begin{proof}
Suppose we have run the variational Bayesian updates $k$ times each. As per the previous derivations, the update for $q(\mathbf{s}_{t})$ can be obtained by taking the functional derivative of a Lagrangian expression $\hat{\mathcal{F}}_{X}$ formed by adding the evidence lower bound and a constraint $\lambda \bigg{[} \int q(\mathbf{s}_{t}) d\mathbf{s}_{t} - 1\bigg{]}$ to ensure $q(\mathbf{s}_{t})$ integrates to $1$. Taking the functional derivative with respect to $q(\mathbf{s}_{t})$, we have 
\begin{equation}
\begin{split}
\frac{\delta \hat{\mathcal{F}}_{X}}{\delta q(\mathbf{s}_{t})} &= \int q(\mathbf{w}_{t}|\mathbf{s}_{t}) q(\mathbf{z}_{t}) q(\mathbf{M}_{1:H})q(\mathbf{b}_{1:H}) \\
&\qquad \cdot \bigg{[}\ln{p(\mathbf{s}_{t})p(\mathbf{w}_{t}|\mathbf{s}_{t})p(\mathbf{z}_{t}|\mathbf{s}_{t},\mathbf{w}_{t},\mathbf{M}_{1:H},\mathbf{b}_{1:H})} - \ln{q(\mathbf{w}_{t}|\mathbf{s}_{t})} \bigg{]} \\ 
&\qquad d\mathbf{z}_{t}d\mathbf{w}_{t}d\mathbf{M}_{1:H}d\mathbf{b}_{1:H} \\
&-[\ln{q(\mathbf{s}_{t})}+1] + \lambda 
\end{split}
\end{equation}
Setting to zero and rearranging, we have
\begin{equation}
\begin{split}
q(\mathbf{s}_{t}) &\propto p(\mathbf{s}_{t}) \exp \bigg{\{} \int q(\mathbf{z}_{t}) q(\mathbf{w}_{t}|\mathbf{s}_{t}) q(\mathbf{M}_{1:H}) q(\mathbf{b}_{1:H}) \ln{p(\mathbf{z}_{t}|\mathbf{s}_{t},\mathbf{w}_{t},\mathbf{M}_{1:H},\mathbf{b}_{1:H})} d\mathbf{z}_{t}d\mathbf{w}_{t}d\mathbf{M}_{1:H}d\mathbf{b}_{1:H} \\
&\qquad -D_{\text{KL}}(q(\mathbf{w}_{t}|\mathbf{s}_{t})||p(\mathbf{w}_{t}|\mathbf{s}_{t})) \bigg{\}} 
\end{split}
\end{equation}
Simplifying the result using the formula for the expectation of quadratic forms, and the fact that we have a uniform prior over the variable $\mathbf{s}_{t}$, gives the update in the theorem statement. 

For $q(\mathbf{b}_{h})$ and $q(\mathbf{M}_{h})$, the variational posteriors have natural parameters obtained by summing over timesteps, as in the previous derivations. Unlike the previous derivations, the updated natural parameters contributed by the exponentiated expectations over $\ln{p(\mathbf{z}_{t}|\mathbf{s}_{t},\mathbf{w}_{t},\mathbf{M}_{1:H},\mathbf{b}_{1:H})}$ are weighted by $q(\mathbf{s}_{t}=h)$ within the exponential, and hence weighted by the scalar $\mathbf{\theta}_{s_{t}}[h]$. Converting back to moment parameters for each cluster $h$ gives the updates in the theorem statement, and follows the previous derivations for the single-cluster case; see especially Appendix  C.1. 

For $q(\mathbf{w}_{t}|\mathbf{s}_{t})$, we no longer have a mean-field assumption, and instead obtain an update by optimizing each possible conditional distribution $q(\mathbf{w}_{t}|\mathbf{s}_{t}=h)$, $h =1, \dots, H$. This gives the optimal joint distribution $q(\mathbf{s}_{t}, \mathbf{w}_{t})$ when the variational marginal distribution $q(\mathbf{s}_{t})$ is held fixed. As per the previous derivations, the update for $q(\mathbf{w}_{t}|\mathbf{s}_{t}=h)$ can be obtained by taking the functional derivative of a Lagrangian expression $\hat{\mathcal{F}}_{X}$ formed by adding the evidence lower bound and a constraint $\lambda \bigg{[} \int q(\mathbf{w}_{t}|\mathbf{s}_{t}=h) d\mathbf{w}_{t} - 1\bigg{]}$ to ensure $q(\mathbf{w}_{t}|\mathbf{s}_{t}=h)$ integrates to $1$. Taking the functional derivative with respect to $q(\mathbf{w}_{t}|\mathbf{s}_{t}=h)$, we have 
\begin{equation}
\begin{split}
\frac{\delta \hat{\mathcal{F}}_{X}}{\delta q(\mathbf{w}_{t}|\mathbf{s}_{t}=h)} &= \int q(\mathbf{s}_{t}=h) q(\mathbf{z}_{t}) q(\mathbf{M}_{1:H})q(\mathbf{b}_{1:H}) \\
&\qquad \cdot \bigg{[}\ln{p(\mathbf{w}_{t}|\mathbf{s}_{t}=h)p(\mathbf{z}_{t}|\mathbf{s}_{t}=h,\mathbf{w}_{t},\mathbf{M}_{1:H},\mathbf{b}_{1:H})}  \bigg{]} \\ 
&\qquad d\mathbf{z}_{t}d\mathbf{M}_{1:H}d\mathbf{b}_{1:H} \\
&- q(\mathbf{s}_{t}=h)[\ln{q(\mathbf{w}_{t}|\mathbf{s}_{t}=h)}+1] + \lambda
\end{split}
\end{equation}
Equating to zero, and observing that the scalar $q(\mathbf{s}_{t}=h)$ is strictly greater than zero, since the distribution $q(\mathbf{s}_{t})$ is produced by a previous iteration of our algorithm and consists of a softmax, we see that all terms above can be divided through by the nonzero scalar $q(\mathbf{s}_{t}=h)$, and the result will still equal zero. Rearranging, we see that the optimal distribution for $q(\mathbf{w}_{t}|\mathbf{s}_{t}=h)$ is given by 
\begin{equation}
\begin{split}
q(\mathbf{w}_{t}|\mathbf{s}_{t}=h) \propto p(\mathbf{w}_{t}|\mathbf{s}_{t}=h) \exp \bigg{\{} \int & q(\mathbf{z}_{t}) q(\mathbf{M}_{1:H})q(\mathbf{b}_{1:H}) \\
&\qquad \cdot \ln{p(\mathbf{z}_{t}|\mathbf{s}_{t}=h,\mathbf{w}_{t},\mathbf{M}_{1:H},\mathbf{b}_{1:H})} \\ 
&\qquad d\mathbf{z}_{t}d\mathbf{M}_{1:H}d\mathbf{b}_{1:H} \bigg{\}} 
\end{split}
\end{equation}
And thus the update for $q(\mathbf{w}_{t}|\mathbf{s}_{t}=h)$ is the same as the update for $q(\mathbf{w}_{t})$ given in the previous section, but using the variational parameters for cluster $h$. This gives the update in the theorem statement, and concludes the proof. 
\end{proof}

We note that in the non-episodic setting, it is possible to show an equivalence between the model above and a certain type of `deep' Bayesian MFA. See \citet{Beal2003} for some additional background. 

In the episodic setting, the above model is better understood as implementing a type of memory-based parameter adaptation \citep{Sprechmann2018}, in which each cluster matrix $\mathbf{M}_{h}$ acts as a layer of model parameters inside a VAE, such that sampling white noise and running it through the composed layer and decoder yields samples with a likeness of those assigned to the cluster.

\subsection{Variational Bayesian Update Rules: Tree-Structured Memory Model}

\begin{theorem}
In this section, we build on the scalable mixture model from App. C.4 
and we consider a generative model of the usual form (Fig. 1). We extend the model to collections of episode-level latent variables $\boldsymbol{\Omega} = \{ \mathbf{M}_{1:H}^{1:G},\mathbf{b}_{1:H}^{1:G} \}$ and addressing variables $\mathbf{y}_{t} = \{ \mathbf{s}_{t}^{1:G}, \mathbf{w}_{t}^{1:G} \}$. 

Here, our use of the superscript of $1:G$, above a variable name, denotes a specific collection of latent variables; the collection is indexed by $g = 1, \ldots, G$. For notational brevity, we will extend this notation to apply to collections of local latent variables as well, denoting e.g., a collection of collections, over both timesteps and the indices $g$, by the capitalized local latent variable letter followed by a superscript $1:G$. 

We consider an inference model of the form $q(\mathbf{Z}, \mathbf{S}^{1:G}, \mathbf{W}^{1:G}, \mathbf{M}_{1:H}^{1:G}, \mathbf{b}_{1:H}^{1:G}) = q(\mathbf{Z})q(\mathbf{S}^{1:G})q(\mathbf{W}^{1:G}|\mathbf{S}^{1:G})q(\mathbf{M}_{1:H}^{1:G})q(\mathbf{b}_{1:H}^{1:G})$. Assume that $q(\mathbf{Z}) = \prod_{t=1}^{T}q(\mathbf{z}_{t})$ and that each $q(\mathbf{z}_{t})$ is a multivariate Gaussian \emph{with diagonal covariance}, whose variational parameters are supplied by a recognition model. 

Let each code be a vector of dimension $C$, and let the integer $G$ be a divisor of $C$. We will denote the $g$-th consecutive set of coordinates of a code $\mathbf{z}_{t} \in \mathbb{R}^{C}$ as $\mathbf{z}_{t}^{(g)}$, and will refer to it as the $g$-th partition of the code. 

Suppose we have a generative model given by
\begin{align}
p(\boldsymbol{\Omega}) &= \prod_{g=1}^{G} p(\boldsymbol{\Omega}^{(g)}) \\
p(\boldsymbol{\Omega}^{(g)}) &= p(\mathbf{M}_{1:H}^{(g)})p(\mathbf{b}_{1:H}^{(g)}) \\
p(\mathbf{y}_{t}|\boldsymbol{\Omega}) &= p(\mathbf{y}_{t}) \\
p(\mathbf{y}_{t}) &= \prod_{g=1}^{G} p(\mathbf{y}_{t}^{(g)}) \\
p(\mathbf{y}_{t}^{(g)}) &= p(\mathbf{s}_{t}^{(g)})p(\mathbf{w}_{t}^{(g)}|\mathbf{s}_{t}^{(g)}) \\
p(\mathbf{z}_{t}|\mathbf{y}_{t}, \boldsymbol{\Omega}) &= \prod_{g=1}^{G} p(\mathbf{z}_{t}^{(g)}|\mathbf{y}_{t}^{(g)}, \boldsymbol{\Omega}^{(g)}) \\
\end{align}

In particular, let
\begin{align}
p(\mathbf{M}_{1:H}^{(g)}) &= \prod_{h=1}^{H} p(\mathbf{M}_{h}^{(g)}) \\
p(\mathbf{b}_{1:H}^{(g)}) &= \prod_{h=1}^{H} p(\mathbf{b}_{h}^{(g)}) \\
p(\mathbf{M}_{h}^{(g)}) &= \mathcal{MN}_{K \times \frac{C}{G}}(\mathbf{M}_{h}^{(g)}|\mathbf{R}=\mathbf{R}_{0}, \mathbf{U}=\mathbf{U}_{0}, \mathbf{V}=\mathbf{I}_{\frac{C}{G}}) \\
p(\mathbf{b}_{h}^{(g)}) &= \mathcal{N}_{\frac{C}{G}}(\mathbf{b}_{h}^{(g)}|\boldsymbol{\mu}=\mathbf{0}_{\frac{C}{G}}, \boldsymbol{\Sigma}=\mathbf{I}_{\frac{C}{G}}) \\
p(\mathbf{s}_{t}^{(g)}) &= \Cat_{H}(\mathbf{s}_{t}^{(g)}|\boldsymbol{\theta}=\frac{1}{H}\boldsymbol{1}_{H}) \\
p(\mathbf{w}_{t}^{(g)}|\mathbf{s}_{t}^{(g)}=h) &= \mathcal{N}_{K}(\mathbf{w}_{t}^{(g)}|\boldsymbol{\mu}=\mathbf{0}_{K}, \boldsymbol{\Sigma}=\mathbf{I}_{K}) \\
p(\mathbf{z}_{t}^{(g)}|\mathbf{s}_{t}^{(g)}=h, \mathbf{w}_{t}^{(g)}, \mathbf{M}_{1:H}^{(g)}, \mathbf{b}_{1:H}^{(g)}) &= \mathcal{N}_{\frac{C}{G}}(\mathbf{z}_{t}^{(g)}|\boldsymbol{\mu}=(\mathbf{M}_{h}^{(g)})^{\top}\mathbf{w}_{t}^{(g)} + \mathbf{b}_{h}^{(g)}, \boldsymbol{\Sigma}=\sigma_{z}^{2}\mathbf{I}_{\frac{C}{G}}) \\
\end{align}

Then the VB updates are the essentially the same as those of the scalable mixture-based memory model (App. C.4), 
but applied to each partitioned code $\mathbf{z}_{t}^{(g)}$ separately. This means any quantity from those updates depending on the scalar given by the code size $C$ is replaced with an update depending on the scalar $C / G$, and likewise for any matrices with side length $C$ in the old updates, the corresponding matrices in the updates here have side length $C / G$ instead. 
\end{theorem}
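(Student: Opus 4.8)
The plan is to exploit the complete factorization of the generative model across the $G$ code partitions, together with the diagonal-covariance assumption on each $q(\mathbf{z}_{t})$, to reduce the inference problem to $G$ independent copies of the scalable mixture-based memory model of Appendix C.4, each acting on a code of dimension $C/G$.

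First I would record the structural facts. By hypothesis the joint prior factors as $p(\boldsymbol{\Omega})\prod_{t} p(\mathbf{y}_{t})p(\mathbf{z}_{t}|\mathbf{y}_{t},\boldsymbol{\Omega}) = \prod_{g} ( p(\boldsymbol{\Omega}^{(g)})\prod_{t} p(\mathbf{y}_{t}^{(g)})p(\mathbf{z}_{t}^{(g)}|\mathbf{y}_{t}^{(g)},\boldsymbol{\Omega}^{(g)}) )$, and within partition $g$ the factors $p(\mathbf{M}_{h}^{(g)})$, $p(\mathbf{b}_{h}^{(g)})$, $p(\mathbf{s}_{t}^{(g)})$, $p(\mathbf{w}_{t}^{(g)}|\mathbf{s}_{t}^{(g)})$ and $p(\mathbf{z}_{t}^{(g)}|\cdots)$ are exactly those of the Appendix C.4 model with every occurrence of the code dimension $C$ replaced by $C/G$ (and every $C\times C$ matrix by a $(C/G)\times(C/G)$ one). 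Since each $q(\mathbf{z}_{t})$ is Gaussian with diagonal covariance and each $\mathbf{z}_{t}^{(g)}$ is a consecutive block of its coordinates, $q(\mathbf{z}_{t})=\prod_{g} q(\mathbf{z}_{t}^{(g)})$, so the perceptual codes are partition-wise independent under the recognition model as well.

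Next I would re-run the free-form coordinate-ascent argument of Appendix B, now taking functional derivatives with respect to $q(\mathbf{S}^{(g)})$, $q(\mathbf{W}^{(g)}|\mathbf{S}^{(g)})$, $q(\mathbf{M}_{1:H}^{(g)})$ and $q(\mathbf{b}_{1:H}^{(g)})$ separately for each $g$. Exactly as in Appendix B, the only summand of $\ln p(\mathbf{X},\mathbf{Z},\mathbf{Y},\boldsymbol{\Omega})$ that couples the partitions is the decoder term $\ln p(\mathbf{x}_{t}|\mathbf{z}_{t})$, which does not vary with any non-perceptual latent variable and is therefore absorbed into the proportionality constant in every VB update for $\mathbf{s}_{t}^{(g)},\mathbf{w}_{t}^{(g)},\mathbf{M}_{h}^{(g)},\mathbf{b}_{h}^{(g)}$. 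Once the decoder drops out, the stationarity condition for any partition-$g$ factor involves only $\ln p(\mathbf{z}_{t}^{(g)}|\mathbf{y}_{t}^{(g)},\boldsymbol{\Omega}^{(g)})$ and the other $(g)$-superscripted prior factors, paired against the partition-$g$ marginals of $q$, which never reference another partition. Hence the update for each partition-$g$ factor has the same form as in Appendix C.4, and the factorizations $q(\mathbf{S}^{1:G})=\prod_{g} q(\mathbf{S}^{(g)})$, $q(\mathbf{W}^{1:G}|\mathbf{S}^{1:G})=\prod_{g} q(\mathbf{W}^{(g)}|\mathbf{S}^{(g)})$, together with the across-timestep factorizations, emerge as a consequence of optimization just as before.

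Finally I would invoke the Appendix C.4 theorem verbatim on each partition: substituting $C\mapsto C/G$ and $\mathbf{I}_{C}\mapsto\mathbf{I}_{C/G}$ into its update formulas yields precisely the claimed updates, and closure under iteration follows from the same inductive argument. The main obstacle, and the part needing the most care, is the rigorous justification that the cross-partition coupling induced by the shared decoder $p(\mathbf{x}_{t}|\mathbf{z}_{t})$ genuinely has no effect on the non-perceptual updates; this is exactly the phenomenon established generically in Appendix B, so the remaining work is checking that its hypotheses --- conditional conjugacy of the per-partition latent-space model and the (structured) mean-field split of $q$ --- still hold partition-by-partition, which they do precisely because the block-diagonal code covariance prevents the partitions from interacting anywhere except inside the decoder.
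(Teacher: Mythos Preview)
Your proposal is correct and follows essentially the same route as the paper: exploit the factorization of both the generative model and (via the diagonal-covariance assumption) the recognition model over $g$, observe that the decoder term is the only cross-partition coupling and is absorbed into proportionality constants as in Appendix~B, and then invoke the Appendix~C.4 updates with $C\mapsto C/G$. Your write-up is in fact more explicit than the paper's about why the diagonal covariance of $q(\mathbf{z}_t)$ is needed and why the decoder coupling is harmless, but the argument is the same.
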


\begin{proof}
Follows from the definition of the model. All generative model terms in the ELBO depending on memory variables $\mathbf{M}_{1:H}^{1:G}$, $\mathbf{b}_{1:H}^{1:G}$ can be rewritten as sums over $g=1, \ldots, G$. Likewise for the addressing variables. Applying a similar simplification as those for timesteps when deriving the VB updates in App. B, 
we conclude that the optimal update for each grouping of latent variables factors over $g$:
\begin{align}
q(\mathbf{M}_{1:H}^{1:G}) &= \prod_{g=1}^{G} q(\mathbf{M}_{1:H}^{(g)}) \\
q(\mathbf{b}_{1:H}^{1:G}) &= \prod_{g=1}^{G} q(\mathbf{M}_{1:H}^{(g)}) \\
q(\mathbf{w}_{t}^{1:G}) &= \prod_{g=1}^{G} q(\mathbf{w}_{t}^{(g)}) \\
q(\mathbf{s}_{t}^{1:G}) &= \prod_{g=1}^{G} q(\mathbf{s}_{t}^{(g)}) \\
\end{align}
The only terms in the ELBO depending on one of the latent variables appearing on the RHS above, appear in expressions where the all other terms depend only on latent variables with the same index $g$. Thus, optimizing the ELBO can be performed separately for the latent variables associated with any given index $g$. By making the substitution $C \mapsto C / G$ in the statement of the previous theorem, the result follows. 
\end{proof}

After writing into memory, we can record the empirical pseudocounts of each cluster, conditioned on hard assignments for the previous clusters. In some cases, this allows us to store a distribution over hard assignments $\{ \mathbf{s}_{t}^{(g)} \}_{g=1}^{G}$ more space-efficiently than storing each of them in raw form. We can also fit a neural network $NN_{g}$ to predict $q(\mathbf{s}_{t}^{(g)})$ given the previous hard assignments $\mathbf{s}_{t}^{(1)}, \ldots \mathbf{s}_{t}^{(g-1)}$, which can be obtained heuristically via an argmax operation over each $q(\mathbf{s}_{t}^{(g^{\prime})})$ for $g^{\prime} = 1, \ldots g-1$. 

Informally speaking, if we consider the timestep $t$ as a random variable, the variational distributions over hard assignments at each partition step $g$ are conditionally independent given $t$, but once $t$ is marginalized out, they are not independent, and so we obtain an implicit tree structure over hard assignment prefixes (i.e., the marginal variational posterior over hard assignment sequences is a general autoregressive distribution). This tree-structure appears more formally in VQ-DRAW \citep{Nichol2020}. However in that model the codebook is dynamically parametrized by a neural network, but is not fast-adapting to new data, and mean-field variational Bayes is not used. Moreover, no covariance is modeled.  
\section{Computing the ELBO}

\subsection{Computing $D_{\text{KL}}(q(\mathbf{M})||p(\mathbf{M}))$}

In this section, we derive a formula for the KL divergence between the variational posterior $q(\mathbf{M})$ and the memory prior $p(\mathbf{M})$. Note firstly that both are matrix-variate Gaussian distributions:

\begin{align}
p(\mathbf{M}) &= \mathcal{MN}_{K \times C}(\mathbf{M}|\mathbf{R}=\mathbf{R}_{0}, \mathbf{U}=\mathbf{U}_{0}, \mathbf{V}=\mathbf{I}_{C}) \\
q(\mathbf{M}) &= \mathcal{MN}_{K \times C}(\mathbf{M}|\mathbf{R}=\mathbf{R}_{f}, \mathbf{U}=\mathbf{U}_{f}, \mathbf{V}=\mathbf{I}_{C})
\end{align}

Their densities are everywhere equal to those of multivariate Gaussians:

\begin{align}
p(\mathbf{M}) &= \mathcal{N}_{KC}(\vect(\mathbf{M})|\boldsymbol{\mu}=\vect(\mathbf{R}_{0}), \boldsymbol{\Sigma}=\mathbf{I}_{C} \kron \mathbf{U}_{0}) \\
q(\mathbf{M}) &= \mathcal{N}_{KC}(\vect(\mathbf{M})|\boldsymbol{\mu}=\vect(\mathbf{R}_{f}), \boldsymbol{\Sigma}=\mathbf{I}_{C} \kron \mathbf{U}_{f}) 
\end{align}

The KL divergence between two multivariate Gaussians is given by
\begin{equation}
\begin{split}
D_{\text{KL}}(\mathcal{N}_{k}(\boldsymbol{\mu}_{1}, \boldsymbol{\Sigma}_{1}) || \mathcal{N}_{k}(\boldsymbol{\mu}_{0}, \boldsymbol{\Sigma}_{0})) = \frac{1}{2} \bigg{[} &\tr(\boldsymbol{\Sigma}_{0}^{-1}\boldsymbol{\Sigma}_{1}) + (\boldsymbol{\mu}_{0}-\boldsymbol{\mu}_{1})^{\top}\boldsymbol{\Sigma}_{0}^{-1}(\boldsymbol{\mu}_{0}-\boldsymbol{\mu}_{1}) \\
&- k - \log{\frac{\det(\boldsymbol{\Sigma}_{1})}{\det(\boldsymbol{\Sigma}_{0})}} \bigg{]} 
\end{split}
\end{equation}

Writing the KL divergence $D_{\text{KL}}(q(\mathbf{M})||p(\mathbf{M}))$ in this format, we see that there are four terms. The first term is

\begin{align}
&\, \tr\bigg{(}(\mathbf{I}_{C} \kron \mathbf{U}_{0})^{-1}(\mathbf{I}_{C} \kron \mathbf{U}_{f})\bigg{)} \\
&= \tr\bigg{(}(\mathbf{I}_{C} \kron \mathbf{U}_{0}^{-1})(\mathbf{I}_{C} \kron \mathbf{U}_{f})\bigg{)} \\
&= \tr(\mathbf{I}_{C} \kron \mathbf{U}_{0}^{-1}\mathbf{U}_{f}) \\
&= C \tr(\mathbf{U}_{0}^{-1}\mathbf{U}_{f})
\end{align}

The second term is 

\begin{align}
&\, (\vect(\mathbf{R}_{0})-\vect(\mathbf{R}_{f}))^{\top}(\mathbf{I}_{C} \kron \mathbf{U}_{0})^{-1}(\vect(\mathbf{R}_{0})-\vect(\mathbf{R}_{f})) \\
&= \vect(\mathbf{R}_{0} - \mathbf{R}_{f})^{\top}(\mathbf{I}_{C} \kron \mathbf{U}_{0}^{-1})\vect(\mathbf{R}_{0} - \mathbf{R}_{f}) \\
&= \vect(\mathbf{R}_{0} - \mathbf{R}_{f})^{\top} \vect(\mathbf{U}_{0}^{-1}(\mathbf{R}_{0} - \mathbf{R}_{f})) \\
&= \tr((\mathbf{U}_{0}^{-1}(\mathbf{R}_{0} - \mathbf{R}_{f}))^{\top}(\mathbf{R}_{0} - \mathbf{R}_{f})) \\
&= \tr((\mathbf{R}_{0} - \mathbf{R}_{f})^{\top}\mathbf{U}_{0}^{-1}(\mathbf{R}_{0} - \mathbf{R}_{f})) \\
&= \tr((\mathbf{R}_{f}-\mathbf{R}_{0})^{\top}\mathbf{U}_{0}^{-1}(\mathbf{R}_{f}-\mathbf{R}_{0})) 
\end{align}

where we used the vec trick, $(B^{\top} \kron A)\vect(X) = \vect(AXB)$ on the third line and the identity $\tr(A^{\top}B) = \vect(B)^{\top}\vect(A)$ on the fourth line.

The third term is simply $-KC$. 

The fourth term is

\begin{align}
&\, -\log{\frac{\det(\mathbf{I}_{C}\kron\mathbf{U}_{f})}{\det(\mathbf{I}_{C}\kron\mathbf{U}_{0})}} \\
&= -\log{\det(\mathbf{I}_{C}\kron\mathbf{U}_{f})} + \log{\det(\mathbf{I}_{C}\kron\mathbf{U}_{0})} \\
&= -\log(\det(\mathbf{U}_{f})^{C}) + \log(\det(\mathbf{U}_{0})^{C}) \\
&= -C\log{\det(\mathbf{U}_{f})} + C\log{\det(\mathbf{U}_{0})} 
\end{align}

where we used the fact that $\mathbf{I}_{C} \kron \mathbf{U}$ is a diagonal block matrix and that the determinant of a diagonal block matrix is the product of the determinants of the blocks on the diagonal. 

Thus, 
\begin{equation}
\begin{split}
D_{\text{KL}}(q(\mathbf{M})||p(\mathbf{M})) = \frac{1}{2} \bigg{[} &C \tr(\mathbf{U}_{0}^{-1}\mathbf{U}_{f}) + \tr((\mathbf{R}_{f}-\mathbf{R}_{0})^{\top}\mathbf{U}_{0}^{-1}(\mathbf{R}_{f}-\mathbf{R}_{0})) \\
&- KC - C\log{\det(\mathbf{U}_{f})} + C\log{\det(\mathbf{U}_{0})} \bigg{]} 
\end{split}
\end{equation}

\subsection{A Reparametrization trick for $q(\mathbf{M})$}

In this section, we describe a reparametrization trick for matrix-variate Gaussians with identity column covariance, which are the type studied in this paper. 

Note that $\vect(\mathbf{M})$ is equal in distribution to 
\begin{align}
\vect(\mathbf{R}) + \chol{(\mathbf{I}_{C} \kron \mathbf{U})}\mathcal{E}_{KC},
\end{align} 
where $\mathcal{E}_{KC} \sim \mathcal{N}_{KC}(\cdot|\mathbf{0}_{KC}, \mathbf{I}_{KC})$ and $\chol(\cdot)$ denotes the Cholesky decomposition. 

Using the mixed-product property for Kronecker products, and the fact that $\mathbf{I}_{C} \kron \mathbf{U}$ is a block-diagonal matrix, one may show that $\chol(\mathbf{I}_{C} \kron \mathbf{U}) = \mathbf{I}_{C} \kron \chol(\mathbf{U})$, so that the above formula can be written as 
\begin{align}
&\, \vect(\mathbf{R}) + [\mathbf{I}_{C} \kron \chol(\mathbf{U})]\vect(\mathcal{E}_{K \times C}) \\
&= \vect(\mathbf{R} + \chol(\mathbf{U})\mathcal{E}_{K \times C})
\end{align}
using the vec trick. Reshaping to a matrix gives a reparametrization trick via the sampling process $\mathcal{E}_{K \times C} \sim \mathcal{MN}(\cdot|\mathbf{0}, \mathbf{I}_{K}, \mathbf{I}_{C}), \mathbf{M} = \mathbf{R} + \chol(\mathbf{U})\mathcal{E}_{K \times C}$.

\subsection{Unbiased estimator of $\mathbb{E}_{q(\mathbf{y}_{t})q(\boldsymbol{\Omega})}[D_{\text{KL}}(q(\mathbf{z}_{t})||p(\mathbf{z}_{t}|\mathbf{y}_{t},\boldsymbol{\Omega}))]$ for Tree-Structured Model}

The model in App. C5 
conforms to the specification from Section \ref{vbm_specification}. In the notation of Section 2, 
the expected KL divergence for perceptual codes $\mathbf{z}_{t}$ in the evidence lower bound (Eq. 5) is given by 
\begin{align}
\mathbb{E}_{q(\mathbf{y}_{t})q(\boldsymbol{\Omega})}[D_{\text{KL}}(q(\mathbf{z}_{t})||p(\mathbf{z}_{t}|\mathbf{y}_{t},\boldsymbol{\Omega}))]
\end{align}
Despite the fact that there are discrete latent variables $\mathbf{s}_{t}^{(g)}$ among the addressing variables $\mathbf{y}_{t}$, this expression can be computed efficiently without an exhaustive sum over realizations of $\mathbf{s}_{t} \sim \prod_{g=1}^{G} q(\mathbf{s}_{t}^{(g)})$. This is possible since (1) the variational distribution $q(\mathbf{z}_{t})$ is a diagonal Gaussian distribution and thus factors over coordinates and (2) the conditional prior $p(\mathbf{z}_{t}|\mathbf{y}_{t},\boldsymbol{\Omega})$ is a diagonal Gaussian and factors over coordinates. The integral defining the KL divergence can thus be rewritten as a sum of integrals:
\begin{align}
D_{\text{KL}}(q(\mathbf{z}_{t})||p(\mathbf{z}_{t}|\mathbf{y}_{t},\boldsymbol{\Omega})) &= \int q(\mathbf{z}_{t}) [\ln{q(\mathbf{z}_{t})} - \ln{p(\mathbf{z}_{t}|\mathbf{y}_{t},\boldsymbol{\Omega})}] d\mathbf{z}_{t} \\
&= \sum_{g=1}^{G} \int q(\mathbf{z}_{t}^{(g)}) [ \ln{q(\mathbf{z}_{t}^{(g)})} - \ln{p(\mathbf{z}_{t}^{(g)} | \mathbf{y}_{t},\boldsymbol{\Omega})} ]  d\mathbf{z}_{t}^{(g)} \\
\end{align}
and its expectation w.r.t. $\mathbf{y}_{t}, \boldsymbol{\Omega}$ can be taken afterwards. Since only the $g$-th term of the sum depends on $\mathbf{s}_{t}^{(g)}$, the expectation w.r.t. $\prod_{g=1}^{G}q(\mathbf{s}_{t}^{(g)})q(\mathbf{w}_{t}^{(g)}|\mathbf{s}_{t}^{(g)})$ can be computed efficiently using separate weighted sum for each of the terms indexed by $g$. 
\section{Dataset Preprocessing}

\subsection{Synthetic Data}
For the synthetic data experiments, we generate episodes from a simple linear Gaussian generative model, based on the one from \citet{Wu2018b}. In particular, we consider a generative model $p(\mathbf{M}) \prod_{t=1}^{T} p(\mathbf{w}_{t}) p(\mathbf{z}_{t}|\mathbf{w},\mathbf{M})$, with

\begin{align}
p(\mathbf{M}) &= \mathcal{MN}_{K \times C}(\mathbf{M}|\mathbf{R}_{0},\mathbf{U}_{0},\mathbf{V}_{0}), \\
p(\mathbf{w}_{t}) &= \mathcal{N}_{K}(\mathbf{w}_{t}|\boldsymbol{\mu}=\mathbf{0}_{K},\boldsymbol{\Sigma} = \mathbf{I}_{K}), \\
p(\mathbf{z}_{t}|\mathbf{w},\mathbf{M}) &= \mathcal{N}_{C}(\mathbf{z}_{t}|\boldsymbol{\mu}=\mathbf{M}^{\top}\mathbf{w}_{t}, \boldsymbol{\Sigma}=\sigma_{z}^{2}\mathbf{I}_{C}). 
\end{align}
The data is generated using ancestral sampling, where the top-level variable $\mathbf{M}$ is sampled once for an episode, and the per-timestep variables are sampled conditional on this variable. Our synthetic episode is then given by $\{ \mathbf{z}_{t} \}_{t=1}^{T}$. 

This generative model corresponds to a valid instance of the DKM \citep{Wu2018b}, a deep unsupervised model which uses a matrix-variate Gaussian prior for a matrix $\mathbf{M}$, uses a standard Gaussian prior for addressing weights $\mathbf{w}$, and deterministically maps the matrix-vector product $\mathbf{M}^{\top}\mathbf{w}$ to a distribution over the observation space (their paper, Appdx. A). In the simple variant here, the observations are $C$-dimensional vectors $\mathbf{z}$. 

Similar to the DKM authors, we initialized the memory prior's mean $\mathbf{R}_{0}$ with random Gaussian noise so that the DKM algorithm's first RLS step wouldn't get stuck due to symmetries in the prior mean. After the episode is generated, inference then proceeds using either the DKM algorithm or mean-field variational Bayes; both inference algorithms are benchmarked on the evidence lower bound for the generative model. 

We used the setting $\sigma_{z}^{2} = 1.0$ for the observation noise, which corresponds to a reconstruction term similar to the MSE. We used episode length $T=32$ in all cases; we observed similar results for longer episodes. 

\subsection{Natural Image Data}

Following \citet{Vinyals2016, Wu2018a, Wu2018b}, for training and evaluation of our fast-adapting models with neural networks, we generate episodes of observations by sampling without replacement from an ordinary dataset of \emph{non-grouped} data. This represents a worst-case scenario for compression in memory, since the data is close to i.i.d., rather than just conditionally i.i.d. The episodes are generated separately during each pass over the training data. We refer to each such pass as an `epoch', for ease of exposition. The preprocessing details for each dataset are given below. 

\subsubsection{CIFAR-10}

For CIFAR-10, we use the original $32 \times 32$ image size. We scale the pixel values to the range $[0, \frac{255}{256})$ and add i.i.d.  uniform random noise $\varepsilon \sim \mathcal{U}[0, \frac{1}{256})$ to each pixel each time an observation is drawn from the dataset. This serves to dequantize the pixels, which is currently best practice. This dequantization is applied during both training and evaluation. 

\subsubsection{CelebA}

For CelebA, we center-crop each image to size $108 \times 108$ and then resize to size $32 \times 32$. We scale the pixel values to the range $[0, \frac{255}{256})$ and add i.i.d. uniform random noise $\varepsilon \sim \mathcal{U}[0, \frac{1}{256})$ to each pixel each time an observation is drawn from the dataset. This serves to dequantize the pixels, which is currently best practice. This dequantization is applied during both training and evaluation. 

\section{Network Architectures and Hyperparameters}

\subsection{Quantitative Experiments}

In this section, we detail the network architecture for our quantitative experiments. 

\subsubsection{Basic Encoder-Decoder Architecture}

In this section, we detail the network architecture for our quantitative experiments. 
This architecture is similar to the some of those described by prior works \citep{Wayne2018a, Wu2018a}, and we found it to reasonably work well for a VAE, so we used it for all models in the initial experiments. 

The encoder uses three downsampling blocks. Each downsampling block begins with a convolution layer of kernel size $4 \times 4$, with $\text{num\_filter}$ filters, stride 2, identity activation, `valid' padding. Following the convolutional layer is a residual block without bottleneck \citep{He2016}, with kernel size $3 \times 3$, stride 1, ReLU activation, `same' padding. Following \citet{He2016}, we employ batch normalization in the residual blocks. The batch statistics are shared across every observation in a training batch. At test time, we use the accumulated statistics when evaluating performance. After the three downsampling blocks have been applied, the 4 dimensional tensor is flattened and linearly projected to a vector $e(\mathbf{x}_{t})$. For VAE, NS, and VBM, the vector has size $2C$, and for the DKM, the vector has size $C$. 

For the VAE, NS and VBM, the encoding $e(\mathbf{x}_{t})$ is processed by one or more MLPs. These MLPs parametrize variational parameters. Each MLP has two layers with ReLU nonlinearity in between. Each MLP's layers use hidden and output layer widths equal to twice the dimension of the variational parameters parametrized. A split is applied to the second layer, and half of the split is exponentiated in order to enforce positivity of the scale parameters for a diagonal Gaussian. 

The decoder uses three upsampling blocks. Before applying the first upsampling block, the conditioning information (a vector of dimension $C$) is linearly projected to a 4D tensor. Each upsampling block begins with a transpose convolution layer of kernel size $4 \times 4$, with $\text{num\_filter}$ filters, stride 2, identity activation, `same' padding. Following the transpose convolutional layer is a residual block without bottleneck \citep{He2016}, but with transpose convolutions, with kernel size $3 \times 3$, stride 1, ReLU activation, `same' padding. The output of the third block is cropped to the spatial dimension of the data. A distribution over pixels is parametrized using $1 \times 1$ convolutional layers to parametrize the distributional parameters. For binarized data, these distributional parameters are logits for a Bernoulli distribution. For continuous data, the distributional parameters are the mean and standard deviation for a Gaussian distribution. Glorot initialization is used for non-residual blocks; He initialization is used for the residual blocks. 

\subsubsection{Hyperparameters for Quantitative Experiments}

In this section, we detail the hyperparameters for our quantitative experiments. 

\begin{table}[H]
\begin{center}
\begin{small}
\begin{tabular}{lccccc}
\toprule
{\sc Setting} & {\sc VAE} & {\sc NS} & {\sc DKM} & {\sc VBM-Basic-1} & {\sc VBM-Basic-2} \\
\midrule
Batch size x Dupl. Factor & 16x1 & 16x1 & 16x1 & 16x1 & 8x2 \\
Episode length & 64 & 64 & 64 & 64 & 64 \\
Memory/context size & \text{-} & 6400 & $32 \times 200$ & $32 \times 200$ & $32 \times 200$ \\
Code size & 200 & 200 & 200 & 200 & 200 \\
Opt iters & \text{-} & \text{-} & \text{-} & 50 & 2 \\
Num filters & 32 & 32 & 32 & 32 & 32 \\
Adam lr & $10^{-3}$ & $10^{-3}$ & $10^{-3}$ & $10^{-3}$ & $10^{-3}$ \\
Batch normalization & \text{True} & \text{True} & \text{True} & \text{True} & \text{True} \\
Trainable memory/context prior & \text{-} & \text{False} & \text{False} & \text{False} & \text{False} \\
Sample memory posterior & \text{-} & \text{True} & \text{True} & \text{True} & \text{True} \\
Memory posterior initialization & \text{-} & \text{-} & \text{Prior} & \text{Random} & \text{Data-Dependent} \\
Max epochs & 100 & 100 & 100 & 100 & 100 \\
Early stopping epochs & 10 & 10 & 10 & 10 & 10 \\
\bottomrule
\end{tabular}

\caption{Hyperparameters for CIFAR-10 experiments.} 

\end{small}
\end{center}
\end{table}

\begin{table}[H]
\begin{center}
\begin{small}
\begin{tabular}{lccccc}
\toprule
{\sc Setting} & {\sc VAE} & {\sc NS} & {\sc DKM} & {\sc VBM-Basic-1} & {\sc VBM-Basic-2} \\
\midrule
Batch size x Dupl. Factor & 16x1 & 16x1 & 16x1 & 16x1 & 8x2 \\
Episode length & 64 & 64 & 64 & 64 & 64 \\
Memory/context size & \text{-} & 6400 & $32 \times 200$ & $32 \times 200$ & $32 \times 200$ \\
Code size & 200 & 200 & 200 & 200 & 200 \\
Opt iters & \text{-} & \text{-} & \text{-} & 50 & 2 \\
Num filters & 32 & 32 & 32 & 32 & 32 \\
Adam lr & $10^{-3}$ & $10^{-3}$ & $10^{-3}$ & $10^{-3}$ & $10^{-3}$ \\
Batch normalization & \text{True} & \text{True} & \text{True} & \text{True} & \text{True} \\
Trainable memory/context prior & \text{-} & \text{False} & \text{False} & \text{False} & \text{False} \\
Sample memory posterior & \text{-} & \text{True} & \text{True} & \text{True} & \text{True} \\
Memory posterior initialization & \text{-} & \text{-} & \text{Prior} & \text{Random} & \text{Data-Dependent} \\
Max epochs & 60 & 60 & 60 & 60 & 60 \\
Early stopping epochs & 10 & 10 & 10 & 10 & 10 \\
\bottomrule
\end{tabular}

\caption{Hyperparameters for CelebA experiments.} 

\end{small}
\end{center}
\end{table}

\subsection{Qualitative Experiments}

For the qualitative experiments, we use the more advanced models, described in Appendix C.4, C.5. 

\subsubsection{Improved Encoder-Decoder Architecture}

In this section, we detail the network architecture for our qualitative experiments, which we improved in order to stabilize training for the mixture-based memory models. 

For the improved encoder-decoder architecture, we replaced the transpose-convolutional residual blocks in the decoder with convolutional residual blocks, and reversed the order of the transpose convolutions and the residual blocks, so that a residual block was applied immediately following the linear projection. As noted in Section 3.2.2, we also replaced ReLU with the Swish-1 nonlinearity, and used group normalization instead of batch normalization. 

\subsubsection{Hyperparameters for Qualitative Experiments}

In this section, we detail the hyperparameters for our qualitative experiments. 

For the Scalable Mixture-Based Variational Bayesian Memory, we used a batch size of $B = 16$, episode length $T = 64$, number of clusters $H = 10$, memory rows $K = 6$ per cluster, optimization iterations $\ell = 10$ per episode, code size $C = 200$, number of filters $F = 64$ for the encoder and decoder, and Adam hyperparameter $\beta_{1} = 0.0$. Following Section 3.2.1, the k-means++ initialization \citep{Arthur2007} is used to initialize the mean of the variational posterior for each cluster location $\mathbf{b}_{h}$. To improve sample quality, we used the stochastic regularization method introduced in Section 3.2.3, with hyperparameters $\gamma = 0.50, \epsilon = 0.10, \delta = 0.20, \alpha = 8.0, \beta = 8.0$ and trained the model for 200 epochs with Adam stepsize $5e^{-4}$ using early stopping with patience of 10 epochs. Letting $L$ denote the number of leftover epochs from the first stage of training, we trained for an additional $10+L$ epochs with Adam stepsize $1e^{-4}$ and $\gamma = 0.50, \epsilon = 1e^{-6}, \delta = 2e^{-6}, \alpha = 8.0, \beta = 8.0$, again with early stopping and an early stopping patience of $10$ epochs. 

For the Tree-Structured Variational Bayesian Memory, we used a batch size of $B = 16$, used episode length $T = 64$, number of segments $G = 2$, clusters $H = 10$ per segment, memory rows $K = 6$ per cluster, optimization iterations $\ell = 10$ per episode, code size $C = 200$, number of filters $F = 64$ for the encoder and decoder, and Adam hyperparameter $\beta_{1} = 0.0$. To improve sample quality, we used the stochastic regularization method introduced in Section 3.2.3, with hyperparameters $\gamma = 0.50, \epsilon = 0.10, \delta = 0.20, \alpha = 8.0, \beta = 8.0$ and trained the model for 200 epochs with Adam stepsize $5e^{-4}$, using early stopping with patience of 10 epochs. Letting $L$ denote the number of leftover epochs from the first stage of training, we trained for an additional $10+L$ epochs with Adam stepsize $1e^{-4}$ and $\gamma = 0.50, \epsilon = 1e^{-6}, \delta = 2e^{-6}, \alpha = 8.0, \beta = 8.0$, again with early stopping and an early stopping patience of $10$ epochs.  
\section{Generating Samples}
\label{appdx:G}

\subsection{Generating Samples Directly from Memory}

For the `Generating from Memory' experiment (Sec. 4.4.1) 
the procedure is as follows:
\begin{enumerate}
\item Create a random episode $\mathbf{X}$ by sampling uniformly without replacement from the test set. 
\item Infer perceptual codes $q(\mathbf{z}_{t})$ for the observations in the episode using the recognition model. 
\item Write into memory by running mean-field variational Bayes to optimize the other latent variables, $q(\mathbf{\Omega}) \prod_{t=1}^{T} q(\mathbf{y}_{t})$. 
\item Save the variational parameters for episode-level latent variables $q(\mathbf{\Omega})$ \emph{only}, discarding everything else used to infer the memory distribution. 
\item Generate from memory by sampling $\boldsymbol{\Omega} \sim q(\boldsymbol{\Omega}), \mathbf{y} \sim p(\mathbf{y}|\boldsymbol{\Omega}), \mathbf{z} \sim p(\mathbf{z}|\mathbf{y},\boldsymbol{\Omega}), \mathbf{x} \sim p(\mathbf{x}|\mathbf{z})$. 
\end{enumerate}

This procedure only works for the scalable mixture-based model. For the tree-structured model, we retain estimates of the empirical pseudocounts over hard assignments to clusters, and generate mixture assignment samples from this autoregressive distribution, rather than from the generative model prior. 

\subsection{Generating Samples Iteratively}

For the `Iterative Reading' experiment (Sec. 4.4.2) 
the procedure is as follows:
\begin{enumerate}
\item Initialize a sample $\mathbf{x}$ by generating from memory as in the previous subsection. 
\item Infer variational parameters of $q(\mathbf{z})$ for the previous $\mathbf{x}$ generated. 
\item Use variational parameters of $q(\mathbf{z}), q(\boldsymbol{\Omega})$, to infer variational parameters of the other local latents, $q(\mathbf{y}) = q(\mathbf{s})q(\mathbf{w}|\mathbf{s})$, using structured mean-field variational Bayes. 
\item Sample memory variables and addressing variables from their variational distributions: $\boldsymbol{\Omega} \sim q(\boldsymbol{\Omega}), \mathbf{s} \sim q(\mathbf{s}), \mathbf{w} \sim q(\mathbf{w}|\mathbf{s})$.
\item Sample new perceptual codes and observations from their generative model distributions: $\mathbf{z} \sim p(\mathbf{z}|\mathbf{s}, \mathbf{w}, \mathbf{\Omega}), \mathbf{x} \sim p(\mathbf{x}|\mathbf{z})$. 
\item Save the observation $\mathbf{x}$. 
\item Go to step 2.
\end{enumerate}

This procedure works for both the scalable mixture-based model and the tree-structured model. Note that for the tree-structured model we follow the procedure in the previous subsection during step (1), and thus generate the initial mixture assignment samples from the  autoregressive distribution defined by empirical pseudocounts over hard assignments to clusters (as described in App. C.5), 
rather than from the generative model prior. 

\clearpage 
\section{Additional Samples}

\begin{figure}[h]
\includegraphics[height=10cm]{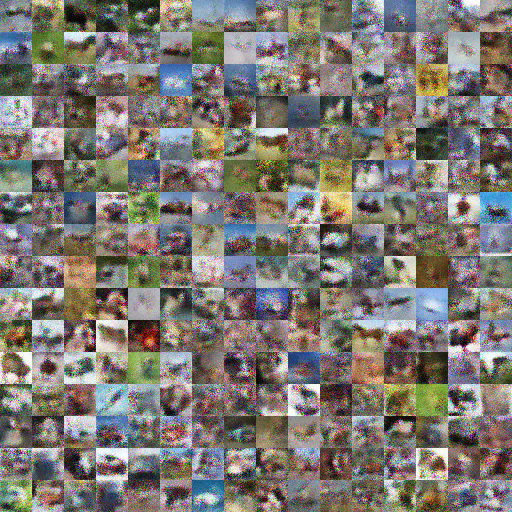} 
\caption{Samples generated directly from memory, using the scalable mixture-based memory model. We used a test-set episode of length $T  = 1280$ and a model with $H = 150$ clusters. Since the sample quality is reasonable, we believe it may be possible to scale the number of clusters sublinearly in general.} 
\end{figure} 

\end{document}